\def\eqref#1{equation~\ref{#1}}
\def\1{\bm{1}}
\def\rvg{{\mathbf{g}}}
\def\rvs{{\mathbf{s}}}
\def\rvx{{\mathbf{x}}}
\def\rvy{{\mathbf{y}}}
\def\ervg{{\textnormal{g}}}
\def\ervs{{\textnormal{s}}}
\def\rmI{{\mathbf{I}}}
\DeclareMathAlphabet{\mathsfit}{\encodingdefault}{\sfdefault}{m}{sl}
\SetMathAlphabet{\mathsfit}{bold}{\encodingdefault}{\sfdefault}{bx}{n}
\DeclareMathOperator*{\argmin}{arg\,min}
\crefname{ineq}{inequality}{inequalities}
\newtheorem{theorem}{Theorem}[section]
\newtheorem{lemma}[theorem]{Lemma}
\newtheorem{definition}[theorem]{Definition}
\newtheorem{assumption}[theorem]{Assumption}
\newtheorem{remark}[theorem]{Remark}
\title{Stochastic Two Points Method for Deep Model Zeroth-order Optimization}
\author{
  Yijiang Pang\\
  Michigan State University\\
   East Lansing, Michigan, USA 48823 \\
  \texttt{pangyiji@msu.edu} \\
   \And
   Jiayu Zhou \\
   Michigan State University \\
   East Lansing, Michigan, USA 48823 \\
   \texttt{jiayuz@msu.edu} \\
}
\begin{document}
\maketitle

\begin{abstract}
Large foundation models, such as large language models, have performed exceptionally well in various application scenarios. 
Building or fully fine-tuning such large models is usually prohibitive due to either hardware budget or lack of access to backpropagation.
The zeroth-order methods offer a promising direction for tackling this challenge, where only forward passes are needed to update the model.   
This paper introduces an efficient Stochastic Two-Point (S2P) approach within the gradient-free regime.  
We present the theoretical convergence properties of S2P under the general and relaxed smoothness assumptions, and the derived results help understand and inherently connect the two popular types of zeroth-order methods, basic random search and stochastic three-point method.
The theoretical properties also shed light on a Variant of S2P (VS2P), through exploiting our new convergence properties that better represent the dynamics of deep models in training.
Our comprehensive empirical results show that VS2P is highly effective in optimizing objectives for deep models. It outperforms or achieves competitive performance compared to standard methods across various model types and scales.
\end{abstract}

\section{Introduction}
\label{sec_introduction}

Utilizing pre-trained large models for various downstream tasks has emerged as a prominent trend, particularly in the context of Large Language Models (LLMs), which demand substantial computational resources and data during their initial training phase~\cite{devlin2018bert, bommasani2021opportunities}. 
Different from smaller deep models, full fine-tuning these models is often prohibitive due to the massive computing resources needed. 
Therefore, techniques such as parameter-efficient tuning, including prompt tuning~\cite{lester2021power} and LoRA~\cite{hu2021lora}, as well as zeroth-order methods~\cite{malladi2023fine, prasad2022grips}, are developed and demonstrated satisfactory performance.
Among these approaches, zeroth-order methods have become especially attractive recently since they only rely on function values, often referred to as zeroth-order information, to optimize models, avoid memory-intensive back-propagation, and enable non-differentiable optimization and full or partial fine-tuning with minimum computing resources. 
For instance, MeZO, a recently proposed zeroth-order method, exhibits memory savings up to 12 times compared with standard full fine-tuning and shows competitive performance with full fine-tuning\&parameter-efficient tuning methods~\cite{malladi2023fine}.

Zeroth-order optimization is broadly investigated and generally analyzed within the framework of optimizing the non-convex optimization problem $\min_{\rvx\in\mathbb{R}^{d}} f(\rvx)$, where the $f:\mathbb{R}^{d}\rightarrow \mathbb{R}$, and the derivatives are not directly accessible.
The complexity of this problem is studied over function query complexity, namely the complexity in terms of the number of function evaluations.
And, existing analyses of zeroth-order approaches mainly focus on convergence to $\epsilon$-first-order stationary points under the $L$-smoothness assumption~\cite{nesterov2017random, bergou2020stochastic}.

Zeroth-order optimization can be categorized into two types by whether or not it explicitly approximates gradient: gradient estimator and direct search~\cite{ghadimi2013stochastic, chen2020accelerated, lewis2000direct, conn2009introduction}.
\emph{Gradient estimator} methods compute an estimate of the gradient through zeroth-order information to optimize $f$, i.e., random (gradient-free) oracles.
%
Gaussian smoothing is a gradient estimator algorithm that initially uses random-directions stochastic approximation as a random oracle, and their work establishes the framework of analyzing the convergence properties of $f$ once explicitly obtaining mean squared error between the approximated gradient and true gradient~\cite{nesterov2017random}. 
On the other hand, the \emph{direct search} generally optimizes $f$ by updating the objective function along fixed or randomized directions with fixed or adaptive step size (e.g., reduce step size when the selected directions get rejected)~\cite{vicente2013worst}.
Stochastic Three Points (STP)~\cite{bergou2020stochastic} is a representative approach in this category. 
The intuition behind STP is straightforward, with the condition $\mathbb{E}|\rvs^{T}\nabla f(\rvx)| \geq C||\nabla f(\rvx)||$ where $\rvs$ is a random vector sampled from specific distributions and $C$ is a small positive constant, one of the STP directions $\pm\rvs$ with an appropriate step size consistently decreases the objective function in expectation.

In practice, it is often useful to sample a symmetric (two-sided) random perturbation per update for optimization problems. 
This approach finds practical utility in scenarios like LLM fine-tuning~\cite{malladi2023fine, zelikman2023just} and provides theoretical enhancement when exploiting multiple random perturbations per update~\cite{salimans2017evolution, mania2018simple}.
Examples include STP from direct search and two-sided \textit{G}radient \textit{A}pproximation Gaussian smoothing (basic random search), abbreviated as GA.
When using symmetric perturbations, their respective updates are given by: 
\begin{align*}
\text{STP: } & \rvx_{k+1} = \argmin \{f(\rvx_{k} + \alpha\rvs_{k}), f(\rvx_{k} - \alpha\rvs_{k}), f(\rvx_{k})\},\\
\text{GA: }& \rvx_{k+1} = \rvx_{k} -\alpha \tfrac{f(\rvx_{k} + \rho\rvs_{k}) - f(\rvx_{k} - \rho\rvs_{k})}{2\rho}\rvs_{k},
\end{align*}
where $\rvs_{k}\sim\mathcal{N}(0, \rmI)$, $\alpha$ denotes step size, and $\rho$ denotes smoothing parameter. 
Notably, GA is principally equivalent to MeZO as presented in~\cite{malladi2023fine}, which reduces memory consumption compared with GA through implementation trick by doing the twice forward passes sequentially instead of in parallel.

The convergence of both approaches relies on the (general) $L$-smoothness assumption, a widely employed concept in non-convex optimization but limits the analysis to functions bounded by quadratic behavior; for instance, a simple function like $x^{4}$ is not globally $L$-smooth. Recently,~\cite{zhang2019gradient} proposed the (relaxed) $L_{0}, L_{1}$-smoothness assumption and showed that the relaxed smoothness is a more realistic assumption for many real-world tasks, especially for deep models. And, the relaxed smoothness assumption has a larger compatibility range compared with the general smoothness assumption, e.g., compatible with univariate polynomial and exponential functions~\cite{zhang2019gradient, zhang2020improved, danilova2022recent}.

Given these insights, there is a growing trend to adapt theories of existing popular techniques to the relaxed smoothness assumption~\cite{wang2023convergence, li2023convergence}, gaining insights into the development of new algorithms.
For instance, it has been shown that vanilla SGD can be arbitrarily slower compared to clipped SGD and Adam~\cite{kingma2014adam} under relaxed smoothness assumption~\cite{zhang2019gradient,wang2022provable}. This partially supports the superiority of Adam-like methods over SGD and inspires further improvements~\cite{li2024convergence}.

In this paper, distinct from existing work, we build upon relaxed smoothness and advance the efficiency of the zeroth order optimization by proposing a new approach called Stochastic Two-Point (S2P).
 %
The paper has the following contributions to zeroth-order methods for large deep models:
\begin{itemize}
    \item We analyze the convergence properties of the proposed S2P under general and relaxed smoothness assumptions. The basic form of S2P has query complexity $\mathcal{O}(\frac{d}{\epsilon^{2}})$ under general smoothness assumption, which is the same with~\cite{nesterov2017random, bergou2020stochastic}. 
    To our knowledge, the analysis of query complexity under the relaxed smoothness assumption is new.
    \item Based on our theoretical analysis, we proposed a Variant of S2P (VS2P), which exploits our new convergence properties and incorporates our theoretical findings. Particularly, a "clipping" mechanism naturally emerges to the approximated gradient, which distinguishes it from existing zeroth-order methods.
    \item We conduct extensive experiments on large deep models, including image and language models, that show VS2P outperforms or achieves competitive performance compared to competing methods on gradient-free adaptation. 
\end{itemize}

\section{Related work}
Extensive existing literature studied the zeroth-order
optimization under convex and non-convex settings~\cite{shamir2017optimal, jamieson2012query, agarwal2009information, raginsky2011information, duchi2015optimal}. 
Bounds to reach first-order stationary points under general smoothness assumption have been derived, which generally depend on model parameter dimension $d$~\cite{nesterov2017random, bergou2020stochastic}.

A line of work investigates the effectiveness of noise perturbation to various tasks, e.g., generalizing Gaussian Smoothing to Bernoulli($\pm 1$) distribution~\cite{gao2022generalizing}, orthonormalization of noise perturbation over Gram–Schmidt process~\cite{choromanski2018structured, maheswaranathan2019guided}.
Moreover, practical and theoretical results showed the advantages of the zeroth-order method meeting low-rank structures of the underlying problem~\cite{cai2022zeroth, malladi2023fine, wang2018stochastic, sener2020learning}.
Some approaches also guarantee second-order convergence~\cite{lucchi2021second, zhang2022faster, ren2023escaping}. 
However, the problem has rarely been studied under the popular relaxed smoothness assumption~\cite{zhang2019gradient}.

Based on the theories, many work proposed practical methods to adapt to various deep model scenarios such as hyper-parameter optimization~\cite{bergstra2012random, yang2020hyperparameter}, black-box adversarial attack on deep models~\cite{ilyas2018black, guo2019simple, liu2018signsgd}.
Moreover, several methods have been developed for and adapted to deep models gradient-free adaptation~\cite{malladi2023fine, prasad2022grips, deng2022rlprompt}.

\section{Stochastic Two-Point Search}
In this section, we first introduce a prototype of Stochastic Two-Point search (S2P) and analyze its convergence using the general smoothness assumption. 
We then improve our analysis of S2P using the relaxed smoothness assumption, which leads to the Variant of Stochastic Two-Point Search (VS2P). 

Throughout this paper, we use bold lowercase letters $\rvx, \rvy$ to denote vectors. For vectors, we use $||\cdot||$ to denote the $\ell_{2}$-norm. For a function $f:\mathbb{R}^{d}\rightarrow \mathbb{R}$, we use $\nabla f$ to denote the gradient and $f^{\star}$ to denote the global minimum of function $f$.
We use $\mathcal{O(\cdot)}, \Omega(\cdot)$ to hide absolute constants that do not depend on any problem parameter.
We need the following standard definitions and assumptions~\cite{nesterov2017random, bergou2020stochastic, zhang2019gradient}.

\begin{definition}
    For a differentiable function $f$, $\rvx$ is a \textbf{$\epsilon$-first-order stationary point} if $||\nabla f(\rvx)|| \leq \epsilon$.
\end{definition}
\begin{definition}
    A differentiable function $f$ is \textbf{$L$-gradient Lipschitz} if
    $
    ||\nabla f(\rvx_{1}) - \nabla f(\rvx_{2})|| \leq 
    L||\rvx_{1} - \rvx_{2}||\;\;\;\; \forall \rvx_{1},\rvx_{2}.
    $
\end{definition}
\begin{definition}
    A differentiable function $f$ is \textbf{$(L_{0}, L_{1})$-smoothness} if
    $
    ||\nabla^{2}f(\rvx)||\leq L_{0} + L_{1}||\nabla f(\rvx)||.
    $
\end{definition}
\begin{assumption}
\label{ass_lsmooth_gd}
    The function $f$ is $L$-gradient Lipschitz.
\end{assumption}
\begin{assumption}
\label{ass_l0l1}
    The function $f$ satisfies $(L_{0}, L_{1})$-smoothness
\end{assumption}
Unless otherwise specified, we assume function $f$ is bounded below by $f^{\star}$.

\subsection{Stochastic Two-Point Search (S2P)}
We first propose a prototype algorithm, Stochastic Two-Point search (S2P), which improves STP by removing the non-updating component, $f(\rvx_{k})$. 
The seemingly minor change is non-trivial because the computation of $f(\rvx_{k})$ in STP cannot be reused from the previous iteration under the batch data condition, and is critical to the convergence of STP.
If a similar convergence is maintained in S2P, such elimination of an additional forward pass can greatly reduce the computation needed to optimize deep models. The S2P algorithm is summarized in Algorithm~\ref{alg_algorithm_step_size}.

Specifically, the choice of the distribution of random perturbations within three commonly used probability distributions, normal, uniform, and Rademacher distribution (Bernoulli $\pm1$ distribution), does not alter our analysis results within our proof framework. 
However, we use the random perturbations from the Rademacher distribution for our analysis since STP originally utilizes the normal distribution and uniform distribution.
%
%

\begin{algorithm}[tb]
    \caption{Stochastic Two-Point search (S2P). }
    \label{alg_algorithm_step_size}
    \textbf{Inputs}: Epochs $K$, objective function $f$ parameterized with $\rvx\in\mathbb{R}^{d}$, stopping criterion $\epsilon$.
    \begin{algorithmic}[1] 
        \FOR{ $k =0,...,K$}
        \STATE $\rvs_{k}\sim \mathcal{R}$ \COMMENT{Rademacher distribution. The normal and uniform distribution also apply.}
        \STATE Choosing one from Option 1-4: Update $\alpha$
        \STATE \;\;\;\;Option 1. $\alpha_{k} = \alpha_{0} / \sqrt{Kd}$ \COMMENT{Theorem~\ref{theorem_stationary}}
        \STATE \;\;\;\;Option 2. $\alpha_{k} = \frac{|\gamma_{k}|}{Ld}$ where $|\gamma_{k}| = \frac{|f(\rvx + \rho\rvs_{k}) - f(\rvx - \rho\rvs_{k})|}{2\rho}$ \COMMENT{Theorem~\ref{theorem_stationary}}
        \STATE \;\;\;\;Option 3. $\alpha_{k} = \sqrt{2}/BL_{1}\sqrt{dK}$ \COMMENT{Theorem~\ref{theorem_l0l1_query_complexsity}}
        \STATE \;\;\;\;Option 4. $\alpha_{k} = \frac{|\gamma_{k}|}{(AL_{0} + \sqrt{2}BL_{1}|\gamma_{k}|)d}$ where $|\gamma_{k}| = \frac{|f(\rvx + \rho\rvs_{k}) - f(\rvx - \rho\rvs_{k})|}{2\rho}$\COMMENT{Theorem~\ref{theorem_l0l1_query_complexsity}}
        \STATE $\rvx_{k+1} = \argmin \{f(\rvx_{k} + \alpha_{k}\rvs_{k}), f(\rvx_{k} - \alpha_{k}\rvs_{k})\}$ 
        \ENDFOR
        \STATE \textbf{return} $\rvx$
    \end{algorithmic}
\end{algorithm}

\subsection{S2P under General Smoothness Assumption}
\label{subsection_general_smooth}

We first analyze the convergence properties of $f$ running the proposed S2P algorithm under the general smoothness assumption.
Similar to STP, we initiate our analysis from Lemma~\ref{lemma_sign-g_sign-s}, which shows the absolute value of the inner product between gradient $\rvg$ and random perturbation $\rvs$ is larger than a positive value in expectation, which forms the foundation of descent.
Building upon this foundation, Lemma~\ref{lemma_progress} introduces a progressive bound and identifies the optimal step size at each iteration. 
%
The central result in this subsection is Theorem~\ref{theorem_stationary}, which establishes that Algorithm~\ref{alg_algorithm_step_size} can employ both stationary and dynamic step sizes (Option 1 and Option 2, respectively) to reach an $\epsilon$-first-order stationary point with a query complexity of $\mathcal{O}(\frac{d}{\epsilon^{2}})$.

%


\begin{lemma}
\label{lemma_sign-g_sign-s}
For all $\rvg\in\mathbb{R}^{d}$, and random vector $\rvs \sim \mathcal{R}$ where $\mathcal{R}$ is the Rademacher distribution, then
$
\mathbb{E}_{\rvs\sim \mathcal{R}}|\langle\rvg, \rvs\rangle| \geq \frac{1}{\sqrt{2}}||\rvg||_{2}.
$
\end{lemma}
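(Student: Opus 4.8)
The plan is to reduce the bound to a one-dimensional computation via a conditioning argument, exploiting the fact that the Rademacher coordinates are independent and symmetric. Write $\langle \rvg, \rvs\rangle = \sum_{i=1}^{d} g_i s_i$ where each $s_i \in \{+1,-1\}$ with equal probability, independently. The quantity $\mathbb{E}|\sum_i g_i s_i|$ is a symmetric function, so without loss of generality I may assume all $g_i \ge 0$ (flip the sign of $s_i$ wherever $g_i < 0$; this does not change the distribution of $\rvs$). I will compare this $\ell_1$-type Rademacher average against $\|\rvg\|_2$.

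The cleanest route is through the second moment and a comparison (Khintchine-type) inequality. First I would compute $\mathbb{E}\langle \rvg,\rvs\rangle^2 = \sum_{i} g_i^2 \, \mathbb{E}[s_i^2] + \sum_{i\neq j} g_i g_j \, \mathbb{E}[s_i]\mathbb{E}[s_j] = \|\rvg\|_2^2$, using $\mathbb{E}[s_i]=0$ and $s_i^2=1$. Then I need a lower bound of the form $\mathbb{E}|Z| \ge c\,(\mathbb{E} Z^2)^{1/2}$ for $Z = \langle\rvg,\rvs\rangle$; the sharp constant for Rademacher sums is $c = 1/\sqrt{2}$, achieved in the limit by the two-term sum $Z = s_1 + s_2$ (equivalently $g = (1,1,0,\dots)$), where $\mathbb{E}|Z| = 1$ and $(\mathbb{E} Z^2)^{1/2} = \sqrt{2}$. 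This matches exactly the claimed constant. One standard way to get this lower bound is the Cauchy–Schwarz / interpolation trick: $\mathbb{E} Z^2 = \mathbb{E}\big(|Z|^{2/3}|Z|^{4/3}\big) \le (\mathbb{E}|Z|)^{2/3}(\mathbb{E} Z^4)^{1/3}$, then bound $\mathbb{E} Z^4$ for Rademacher sums by $3(\mathbb{E} Z^2)^2$ (a direct expansion of the fourth moment: the surviving terms are $\sum g_i^4 + 3\sum_{i\ne j} g_i^2 g_j^2 \le 3(\sum g_i^2)^2$), and rearrange to obtain $\mathbb{E}|Z| \ge (\mathbb{E} Z^2)^{1/2}/\sqrt{3}$. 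That gives only $1/\sqrt{3}$, not $1/\sqrt{2}$, so the fourth-moment route is too lossy.

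To hit the sharp constant $1/\sqrt{2}$ I would instead argue by induction on $d$ (the number of nonzero coordinates), or use the known sharp Khintchine inequality with optimal constant at $p=1$. Concretely: condition on $s_1,\dots,s_{d-1}$ and write $Z = W + g_d s_d$ with $W$ and $s_d$ independent; then $\mathbb{E}[|Z| \mid W] = \tfrac12|W+g_d| + \tfrac12|W-g_d| = \max(|W|, |g_d|) \ge \sqrt{(W^2 + g_d^2)/2}$ is not quite what I want pointwise, but combined with convexity of $t\mapsto\sqrt{t}$ and induction it should propagate the constant $1/\sqrt 2$. The main obstacle is exactly this: obtaining the \emph{sharp} constant $1/\sqrt 2$ rather than a weaker one like $1/\sqrt 3$. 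If the paper only needs \emph{some} explicit positive constant, the fourth-moment argument suffices immediately; to get $1/\sqrt 2$ one must invoke (or reprove, via the two-point reduction $\mathbb{E}|a+bs| = \max(|a|,|b|)$ plus an inductive/convexity step) the tight Szarek–Haagerup bound $\mathbb{E}|\sum g_i s_i| \ge \tfrac{1}{\sqrt 2}\|\rvg\|_2$. I would present the short reduction to $g_i \ge 0$, record $\mathbb{E} Z^2 = \|\rvg\|_2^2$, and then cite/derive the sharp $p=1$ Khintchine constant to close the argument.
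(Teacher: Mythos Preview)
Your final plan---cite the sharp $p=1$ Khintchine inequality (the Szarek--Haagerup constant $A_1 = 2^{-1/2}$)---is exactly what the paper does, and it is the entire proof there: the paper writes $|\langle\rvg,\rvs\rangle| = |\sum_i g_i s_i|$, quotes the sharp Khintchine bounds $A_p(\sum|g_i|^2)^{1/2} \le (\mathbb{E}|\sum g_i s_i|^p)^{1/p} \le B_p(\sum|g_i|^2)^{1/2}$ with the explicit constants, and reads off $A_1 = 1/\sqrt{2}$. No sign reduction, no second/fourth moments, no induction.

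Your detours are harmless as exploration, but one of them is genuinely flawed: in the inductive sketch you write ``combined with convexity of $t\mapsto\sqrt t$''---the square root is concave, so Jensen applied to $\mathbb{E}\sqrt{(W^2+g_d^2)/2}$ gives an \emph{upper} bound by $\tfrac{1}{\sqrt 2}\|\rvg\|_2$, not the lower bound you need, and the induction does not close as written. The actual Szarek proof of the sharp constant is considerably more delicate than a one-line conditioning, so ``derive'' is not realistic here; ``cite'' is the right move and matches the paper.
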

The result can be directly derived by applying Khintchine inequality~\cite{khintchine1923dyadische}, and the proof is presented in the appendix~\ref{append_section_tech}. 
Please refer to Lemma 3.4 in~\cite{bergou2020stochastic} for similar results with normal\&unifrom distributions. 
Note that the random perturbation can be normalized as done in STP, so we have $\mathbb{E}_{\rvs\sim \mathcal{R}}|\langle\rvg, \frac{\rvs}{||\rvs||}\rangle| = \frac{1}{\sqrt{d}}\mathbb{E}_{\rvs\sim \mathcal{R}}|\langle\rvg, \rvs\rangle| \geq \frac{1}{\sqrt{2d}}||\rvg||_{2}$. 
The formula trick can be easily applied to the following analysis, and the conclusion remains the same.

\begin{lemma}[Progressive bound]
\label{lemma_progress}
Suppose objective function $f(\cdot)$ satisfies Assumption~\ref{ass_lsmooth_gd} and $||\nabla f(\rvx_{k})||_{2}\geq \epsilon_{g}$. If we run algorithm~\ref{alg_algorithm_step_size} with step size $\alpha = \frac{\sqrt{2}\epsilon_{g}}{2Ld}$, we have following progressive bound
$
\mathbb{E}[f(\rvx_{k+1}) - f(\rvx_{k})|\rvx_{k}] \leq -\Omega(\frac{\epsilon_{g}^{2}}{Ld}),
$ where $\mathbb{E}[\cdot|\rvx_{k}]$ denotes the conditional expectation w.r.t. $\rvx_{k}$.
\end{lemma}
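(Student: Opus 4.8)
The plan is to combine the descent inequality implied by $L$-gradient Lipschitzness with the correlation lower bound of Lemma~\ref{lemma_sign-g_sign-s}. First I would invoke the standard quadratic upper bound that follows from Assumption~\ref{ass_lsmooth_gd}: for any direction $\rvd$, $f(\rvx_{k} + \rvd) \leq f(\rvx_{k}) + \langle \nabla f(\rvx_{k}), \rvd\rangle + \tfrac{L}{2}\|\rvd\|^{2}$. Instantiating this at $\rvd = +\alpha\rvs_{k}$ and at $\rvd = -\alpha\rvs_{k}$, and using that $\rvs_{k}$ is a Rademacher vector so that $\|\rvs_{k}\|_{2}^{2} = d$, produces two upper bounds that differ only in the sign of the linear term $\alpha\langle\nabla f(\rvx_{k}),\rvs_{k}\rangle$. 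Since Algorithm~\ref{alg_algorithm_step_size} sets $\rvx_{k+1}$ to be the $\argmin$ of $f$ over the two candidates $\rvx_{k}\pm\alpha\rvs_{k}$, we get $f(\rvx_{k+1}) \le \min\{f(\rvx_{k}+\alpha\rvs_{k}),\, f(\rvx_{k}-\alpha\rvs_{k})\} \le f(\rvx_{k}) - \alpha\,|\langle\nabla f(\rvx_{k}),\rvs_{k}\rangle| + \tfrac{L\alpha^{2}d}{2}$.

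Next I would take the conditional expectation over $\rvs_{k}$ given $\rvx_{k}$ and apply Lemma~\ref{lemma_sign-g_sign-s} to lower-bound $\mathbb{E}_{\rvs}|\langle\nabla f(\rvx_{k}),\rvs\rangle|$ by $\tfrac{1}{\sqrt{2}}\|\nabla f(\rvx_{k})\| \ge \tfrac{\epsilon_{g}}{\sqrt{2}}$, using the hypothesis $\|\nabla f(\rvx_{k})\| \ge \epsilon_{g}$. This yields $\mathbb{E}[f(\rvx_{k+1}) - f(\rvx_{k})\mid\rvx_{k}] \le -\tfrac{\alpha\epsilon_{g}}{\sqrt{2}} + \tfrac{L\alpha^{2}d}{2}$. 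The right-hand side is a convex quadratic in $\alpha$ whose minimizer is exactly $\alpha = \tfrac{\sqrt{2}\epsilon_{g}}{2Ld}$, the step size prescribed in the statement; substituting it collapses the bound to $-\tfrac{\epsilon_{g}^{2}}{4Ld}$, which is $-\Omega\!\left(\tfrac{\epsilon_{g}^{2}}{Ld}\right)$, as claimed. This also explains why the constant $\tfrac{\sqrt{2}}{2}$ appears: it is dictated by optimizing the progressive bound, and the same computation (with $\epsilon_{g}$ replaced by the finite-difference magnitude $|\gamma_{k}|$) motivates the adaptive step size in Option 2.

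I do not anticipate a serious obstacle; the only step requiring a little care is the passage from ``$\argmin$ of $f$'' to ``$\min$ of the two quadratic surrogates'' — one must not try to evaluate $f$ at the true minimizer, but instead bound $f(\rvx_{k+1})$ by the smaller of the two right-hand sides, which is legitimate because $f(\rvx_{k+1})$ is at most either of $f(\rvx_{k}\pm\alpha\rvs_{k})$, each of which is at most its own surrogate. A second minor point is purely bookkeeping: the lemma is a one-step statement, so the only probabilistic structure used is that $\rvs_{k}$ is drawn independently of $\rvx_{k}$, which is what justifies taking $\mathbb{E}[\cdot\mid\rvx_{k}]$ over $\rvs_{k}$ alone; the telescoping into a full query-complexity bound is deferred to Theorem~\ref{theorem_stationary}.
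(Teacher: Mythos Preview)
Your proposal is correct and mirrors the paper's proof essentially line for line: apply the $L$-smooth descent inequality at $\rvx_{k}\pm\alpha\rvs_{k}$, use the $\argmin$ to turn the linear term into $-\alpha|\langle\nabla f(\rvx_{k}),\rvs_{k}\rangle|$, take conditional expectation and invoke Lemma~\ref{lemma_sign-g_sign-s}, then plug in $\alpha=\tfrac{\sqrt{2}\epsilon_{g}}{2Ld}$ to obtain $-\tfrac{\epsilon_{g}^{2}}{4Ld}$. Your explicit justification of the ``$\argmin\Rightarrow\min$ of surrogates'' step is in fact slightly more careful than the paper, which simply writes the post-update inequality directly.
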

The proof is presented in the appendix~\ref{appen_cor_stsp}.

\begin{theorem}[Query complexity]
\label{theorem_stationary}
Suppose objective function $f(\cdot)$ satisfies Assumption~\ref{ass_lsmooth_gd}. If we run algorithm~\ref{alg_algorithm_step_size} with step size strategy options 1 or 2, the algorithm returns in expectation an $\epsilon$-first-order stationary point in $\mathcal{O}(\frac{d}{\epsilon^{2}})$ function evaluations. Specifically,
\begin{align*}
    \text{Option 1}\;\;\;\;K &\geq \frac{2d}{\epsilon^{2}}(\frac{(f(\rvx_{0}) - f^{\star})}{\alpha_{0}} + \frac{L\alpha_{0}}{2})^{2},\;\;\;\; \\
    \text{Option 2}\;\;\;\;K &\geq \frac{4Ld (f(\rvx_{0}) - f^{\star})}{\epsilon^{2} - \frac{\rho^{2}}{2}},
\end{align*}
where $\alpha_{0}> 0$ for Option 1 stationary step size; For Option 2 dynamic step size, scalar $\rho_{k} \in (0, \frac{\sqrt{2}||\nabla f(\rvx_{k})||}{Ld}]$ for $\rho_{k}$ in each iteration. 
Generally, it can be set to a small value, e.g., $\rho = \frac{\sqrt{2}\epsilon}{Ld}$.
\end{theorem}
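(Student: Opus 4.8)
The plan is to combine the per‑iteration descent estimate of Lemma~\ref{lemma_progress} with a telescoping/averaging argument, handling the two step‑size rules separately. The common skeleton: from the progressive bound we obtain, at every iteration with $\|\nabla f(\rvx_k)\|\ge\epsilon$, that $\mathbb{E}[f(\rvx_{k+1})-f(\rvx_k)\mid\rvx_k]\le -c\,\epsilon^2/(Ld)$ for an absolute constant $c$; summing over $k=0,\dots,K-1$, using the tower property, and invoking $f\ge f^\star$ to bound the telescoped sum by $f(\rvx_0)-f^\star$, yields that after $K$ iterations the average (hence the minimum) of $\mathbb{E}\|\nabla f(\rvx_k)\|^2$ is at most $\epsilon^2$ once $K$ exceeds the stated threshold. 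Since each iteration of Algorithm~\ref{alg_algorithm_step_size} issues a constant number of function evaluations (two for the $\argmin$, plus two more for the $|\gamma_k|$ estimate in Option~2), the query complexity is $\mathcal{O}(K)=\mathcal{O}(d/\epsilon^2)$.

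For Option~1 (the constant step size $\alpha_k=\alpha_0/\sqrt{Kd}$), I would first redo the one‑step bound \emph{without} pre‑selecting $\alpha$ to match $\epsilon_g$: starting from $L$-smoothness, $f(\rvx_k\pm\alpha\rvs_k)\le f(\rvx_k)\pm\alpha\langle\nabla f(\rvx_k),\rvs_k\rangle+\tfrac{L}{2}\alpha^2\|\rvs_k\|^2$, and since $\|\rvs_k\|^2=d$ for Rademacher, taking the $\argmin$ over the two signs gives $f(\rvx_{k+1})\le f(\rvx_k)-\alpha|\langle\nabla f(\rvx_k),\rvs_k\rangle|+\tfrac{L}{2}\alpha^2 d$. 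Taking conditional expectation and applying Lemma~\ref{lemma_sign-g_sign-s} gives $\mathbb{E}[f(\rvx_{k+1})\mid\rvx_k]\le f(\rvx_k)-\tfrac{\alpha}{\sqrt2}\|\nabla f(\rvx_k)\|+\tfrac{L}{2}\alpha^2 d$. Telescoping and rearranging, $\tfrac{1}{K}\sum_k \mathbb{E}\|\nabla f(\rvx_k)\|\le \tfrac{\sqrt2}{\alpha K}(f(\rvx_0)-f^\star)+\tfrac{L\alpha d}{\sqrt2}$; substituting $\alpha=\alpha_0/\sqrt{Kd}$ makes both terms $O(1/\sqrt{K})$ with the combined constant $\tfrac{\sqrt2}{\sqrt{Kd}}\big(\tfrac{f(\rvx_0)-f^\star}{\alpha_0}+\tfrac{L\alpha_0}{2}\big)$, and setting this $\le\epsilon$ (then squaring, noting a norm vs. norm‑squared bound via Jensen on the minimum iterate) gives exactly $K\ge \tfrac{2d}{\epsilon^2}\big(\tfrac{f(\rvx_0)-f^\star}{\alpha_0}+\tfrac{L\alpha_0}{2}\big)^2$.

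For Option~2 (the dynamic step size $\alpha_k=|\gamma_k|/(Ld)$ with $|\gamma_k|$ the symmetric finite‑difference estimate), the argument is closer in spirit to Lemma~\ref{lemma_progress}: I would show that this choice is essentially the minimizer of the quadratic upper bound $-\alpha|\langle\nabla f,\rvs\rangle|+\tfrac{L}{2}\alpha^2 d$ up to the discretization error $|\,|\gamma_k|-|\langle\nabla f(\rvx_k),\rvs_k\rangle|\,|$, which $L$-smoothness controls by $O(L\rho d)$ (more precisely $\tfrac{L\rho}{2}\|\rvs_k\|^2$); plugging the constraint $\rho_k\le \sqrt2\|\nabla f(\rvx_k)\|/(Ld)$ converts this into a term that, after taking expectations with Lemma~\ref{lemma_sign-g_sign-s}, leaves a net decrease of order $(\|\nabla f(\rvx_k)\|^2-\rho^2/2)/(Ld)$ per step; telescoping then yields $K\ge \tfrac{4Ld(f(\rvx_0)-f^\star)}{\epsilon^2-\rho^2/2}$. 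The main obstacle I anticipate is the careful bookkeeping in Option~2: tracking how the finite‑difference bias interacts with the random sign and with the data‑dependent step size $\alpha_k$ (which is itself a random variable correlated with $\rvs_k$), so that the cross term $\mathbb{E}[\alpha_k|\langle\nabla f(\rvx_k),\rvs_k\rangle|\mid\rvx_k]$ is lower‑bounded correctly rather than naively factored; getting the constants to land exactly on $\epsilon^2-\rho^2/2$ in the denominator requires keeping the $\rho$-dependent error terms on the correct side of the inequality throughout.
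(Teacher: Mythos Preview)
Your proposal is correct and mirrors the paper's proof: Option~1 is handled identically, and Option~2 follows the same route of comparing $\alpha_k$ to the per-step optimal $\alpha_k^{\text{opt}}=|\langle\nabla f(\rvx_k),\rvs_k\rangle|/(Ld)$ and bounding the discrepancy via $L$-smoothness of the symmetric finite difference. The bookkeeping you flag as the main obstacle is dispatched in the paper by the substitution $\alpha_k=\alpha_k^{\text{opt}}+\delta_k$ (with $|\delta_k|\le\rho/2$) and expanding the quadratic $-\alpha|\langle\nabla f,\rvs\rangle|+\tfrac{L}{2}\alpha^2 d$ around its minimizer, so the linear-in-$\delta_k$ cross terms cancel \emph{exactly} and only the $\tfrac{Ld}{2}\delta_k^2$ penalty remains---no delicate lower bound on $\mathbb{E}[\alpha_k|\langle\nabla f(\rvx_k),\rvs_k\rangle|\mid\rvx_k]$ is needed.
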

The proof is presented in the appendix~\ref{appen_cor_stsp}.

Especially, the strategy of dynamic step size (Option 2 in Algorithm~\ref{alg_algorithm_step_size}) aims to approximate the (worst-case) optimal step size at each iteration, i.e., approximating $\alpha_{k}^{\text{opt}} = \frac{|\nabla f(\rvx_{k})^{T}\rvs_{k}|}{Ld}$ with $\alpha_{k} = \frac{|\gamma_{k}|}{Ld}$ where $|\gamma_{k}| = \frac{|f(\rvx + \rho\rvs_{k}) - f(\rvx - \rho\rvs_{k})|}{2\rho}$. 
Simultaneously, the error $|\delta_{k}| := |\alpha_{k} - \alpha_{k}^{\text{opt}}| \leq \frac{\rho}{2}$ is bounded (Please refer to~\cref{append_eq_lr_error} in Appendix~\ref{append_theorem_stationary} to see how $\rho$ control the error $|\delta_{k}|$.)
The above findings underline the fundamental correlation between $\alpha_{k}^{\text{opt}}$ and $|\gamma_{k}|$ under general smoothness assumption, specifically, $\alpha_{k}^{\text{opt}} \propto |\gamma_{k}|$ with a sufficient small $\rho$.

We also note that S2P with dynamic step size strategy involves two different symmetric perturbations in each iteration, i.e., option 2 and line-8 in Algorithm~\ref{alg_algorithm_step_size}. This approach necessitates twice the computational cost in each update compared to MeZO in practical deployment. Ultimately, our goal is to achieve one symmetric perturbation per iteration in our proposed variant of S2P, i.e., VS2P in Algorithm~\ref{alg_algorithm_step_size_v}.

\begin{remark}[Inherent connection between S2P and MeZO]
\label{remark_general}
Under the general smoothness assumption, S2P employing Option 2 in Algorithm~\ref{alg_algorithm_step_size} has the same updating formula with MeZO when the sign trick (assigning $\text{sign}(f(\rvx + \rho\rvs_{k}) - f(\rvx - \rho\rvs_{k}))$ as the sign of $f(\rvx + \alpha_{k}\rvs_{k}) - f(\rvx - \alpha_{k}\rvs_{k})$, described in Section~\ref{subsection_vs2p}) is applied. I.e., updating formula of S2P employing option 2 and the sign trick: $\rvx_{k+1} =\rvx_{k} - \alpha_{k} \text{sign}\big(f(\rvx + \rho\rvs_{k}) - f(\rvx - \rho\rvs_{k})\big)\frac{|f(\rvx + \rho\rvs_{k}) - f(\rvx - \rho\rvs_{k})|}{2\rho}\rvs_{k} = \rvx_{k} - \alpha_{k}\frac{f(\rvx + \rho\rvs_{k}) - f(\rvx - \rho\rvs_{k})}{2\rho}\rvs_{k}$, where the $\alpha_{k}$ absorbs the unknow problem properties. Thus, the derived formula is the same as the updating formula of MeZO (principally equivalent to GA as mentioned). However, the sign trick can only be applied conditionally, which may imply the flaw of MeZO and the necessity of connecting $\rho$ and $\alpha_{k}$.
\end{remark}

\subsection{S2P under Relaxed Smoothness Assumption}
We now analyze the convergence properties of the proposed S2P algorithm under the relaxed smoothness assumption. 
The assumption posits that $f$ may behave like a smooth function in certain local regions of the loss landscape, but there can also exist some highly non-smooth regions where the top eigenvalue of Hessian may be large, necessitating special considerations~\cite{zhang2019gradient, kunstner2023noise}.

Lemma~\ref{lemma_l0l1_progress} provides the progressive bound and the optimal step size at each iteration. 
We highlight that $\epsilon_{g}$ is no longer linearly dependent on the step size $\alpha$, which distinguishes this result from the one in Lemma~\ref{lemma_progress}.
Intuitively, a large $\epsilon_{g}$ indicates a large gradient, which may, in turn, imply a large top eigenvalue of Hessian under Assumption~\ref{ass_l0l1}. 
Consequently, a large step size is no longer the best choice. 
This concept is pivotal in further improvements upon S2P.

The main result in this subsection is Theorem~\ref{theorem_l0l1_query_complexsity}, which shows that Algorithm~\ref{alg_algorithm_step_size} can employ both stationary and dynamic step sizes (Option 3 and Option 4, respectively) to reach an $\epsilon$-first-order stationary point with a query complexity of $\mathcal{O}(\frac{d}{\epsilon^{2}})$.
Importantly, Theorem~\ref{theorem_l0l1_query_complexsity} shows the structured nature within the learning process when taking dynamic step size. For instance, in regions where the function is smooth and the gradient norm is large, we can anticipate a reduced query complexity. 
Conversely, under the fourth condition outlined in Table~\ref{table1_l0l1_dynamic_summary}, we encounter situations where it is impossible to decrease $||\nabla f(\rvx)||$ due to high levels of non-smoothness. 
Fortunately, our proposed step size strategy allows us to safely traverse these highly non-smooth regions.

%
%

\begin{lemma}[Progressive bound]
\label{lemma_l0l1_progress}
Suppose objective function $f(\cdot)$ satisfies Assumption~\ref{ass_l0l1} and $||\nabla f(\rvx_{k})||\geq \epsilon_{g}$. Algorithm~\ref{alg_algorithm_step_size} with step size $\alpha = \frac{\sqrt{2}\epsilon_{g}}{2(AL_{0} + BL_{1}\epsilon_{g})d}$ gives the following following progressive bound
$
\mathbb{E}[f(\rvx_{k+1}) - f(\rvx_{k})|\rvx_{k}] \leq -\Omega(\frac{\epsilon_{g}^{2}}{(AL_{0} + BL_{1}\epsilon_{g})d}),
$ where $\mathbb{E}[\cdot|\rvx_{k}]$ denotes the conditional expectation w.r.t. $\rvx_{k}$, and constants $A = 1.01, B = 1.01$.
\end{lemma}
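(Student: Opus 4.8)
\textbf{Proof proposal for Lemma~\ref{lemma_l0l1_progress}.}

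The plan is to mirror the structure of the proof of Lemma~\ref{lemma_progress}, but replace the global quadratic upper bound coming from $L$-gradient Lipschitzness with a local quadratic bound derived from the $(L_0,L_1)$-smoothness assumption. First I would recall that $(L_0,L_1)$-smoothness implies a descent-type inequality of the form $f(\rvy) \leq f(\rvx) + \langle \nabla f(\rvx), \rvy - \rvx\rangle + \tfrac{1}{2}(L_0 + L_1\|\nabla f(\rvx)\|)\|\rvy-\rvx\|^2$ whenever $\|\rvy - \rvx\|$ is bounded by a quantity on the order of $1/L_1$ (this is the standard consequence of Assumption~\ref{ass_l0l1}, see~\cite{zhang2019gradient}); the constants $A = 1.01$ and $B = 1.01$ will come precisely from absorbing the lower-order terms of this localized expansion, so I need to track the admissible radius carefully and confirm the step $\alpha\|\rvs\| = \alpha\sqrt{d}$ stays within it. Applying this at $\rvx_k$ with $\rvy = \rvx_k \pm \alpha\rvs_k$ gives, for each sign, $f(\rvx_k \pm \alpha\rvs_k) \leq f(\rvx_k) \mp \alpha\langle\nabla f(\rvx_k),\rvs_k\rangle + \tfrac{1}{2}(AL_0 + BL_1\|\nabla f(\rvx_k)\|)\alpha^2 d$, using $\|\rvs_k\|^2 = d$ for Rademacher $\rvs_k$.

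Next I would use the $\argmin$ step of Algorithm~\ref{alg_algorithm_step_size}: since $\rvx_{k+1}$ is chosen as the better of $\rvx_k + \alpha\rvs_k$ and $\rvx_k - \alpha\rvs_k$, we have $f(\rvx_{k+1}) \leq \min\{f(\rvx_k+\alpha\rvs_k), f(\rvx_k - \alpha\rvs_k)\} \leq f(\rvx_k) - \alpha|\langle\nabla f(\rvx_k),\rvs_k\rangle| + \tfrac{1}{2}(AL_0 + BL_1\|\nabla f(\rvx_k)\|)\alpha^2 d$, because picking the sign of $\pm\alpha\langle\nabla f,\rvs_k\rangle$ that is negative kills the cross term down to $-\alpha|\langle\nabla f,\rvs_k\rangle|$. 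Taking conditional expectation over $\rvs_k$ given $\rvx_k$ and invoking Lemma~\ref{lemma_sign-g_sign-s} to lower-bound $\mathbb{E}|\langle\nabla f(\rvx_k),\rvs_k\rangle| \geq \tfrac{1}{\sqrt{2}}\|\nabla f(\rvx_k)\|$ yields
\begin{align*}
\mathbb{E}[f(\rvx_{k+1}) - f(\rvx_k)\mid \rvx_k] \leq -\tfrac{\alpha}{\sqrt{2}}\|\nabla f(\rvx_k)\| + \tfrac{1}{2}(AL_0 + BL_1\|\nabla f(\rvx_k)\|)\alpha^2 d.
\end{align*}
This is a quadratic in $\alpha$; minimizing the right-hand side over $\alpha$ gives the optimal choice $\alpha^{\text{opt}} = \tfrac{\|\nabla f(\rvx_k)\|}{\sqrt{2}(AL_0 + BL_1\|\nabla f(\rvx_k)\|)d}$. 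Plugging in the stated step size $\alpha = \tfrac{\sqrt{2}\epsilon_g}{2(AL_0 + BL_1\epsilon_g)d}$ and using the hypothesis $\|\nabla f(\rvx_k)\| \geq \epsilon_g$, I would lower-bound the magnitude of the progress by a constant times $\tfrac{\epsilon_g^2}{(AL_0 + BL_1\epsilon_g)d}$ — here I need the monotonicity of $t \mapsto \tfrac{t^2}{AL_0 + BL_1 t}$ in $t \geq 0$ so that the bound at $\|\nabla f(\rvx_k)\|$ dominates the bound at $\epsilon_g$, together with a one-line arithmetic check that the chosen $\alpha$ (which matches $\alpha^{\text{opt}}$ evaluated at $\epsilon_g$ rather than at $\|\nabla f(\rvx_k)\|$) still delivers $\Omega(\cdot)$ of the optimal decrease.

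The main obstacle I anticipate is not the algebra of the quadratic-in-$\alpha$ optimization, which is routine, but rather verifying the localized descent inequality rigorously: the $(L_0,L_1)$ bound on $\|\nabla^2 f\|$ only controls the Hessian, so I must integrate it along the segment from $\rvx_k$ to $\rvx_k \pm \alpha\rvs_k$ and use a Gr\"onwall-type argument to keep $\|\nabla f\|$ (and hence the local curvature) under control along that segment, which is exactly where the restriction on the step length $\alpha\sqrt{d} \lesssim 1/L_1$ and the slightly-inflated constants $A, B = 1.01$ enter. I would handle this by restricting attention to the regime where $\alpha\sqrt{d} L_1$ is at most a small absolute constant (which is consistent with the step size choices in Options 3 and 4 for the relevant ranges of $K$ and $\epsilon$) and citing or reproving the standard one-step descent lemma under relaxed smoothness from~\cite{zhang2019gradient, zhang2020improved}. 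Everything else — the $\argmin$ trick, Lemma~\ref{lemma_sign-g_sign-s}, and the final $\Omega$ bookkeeping — follows the general-smoothness proof essentially verbatim.
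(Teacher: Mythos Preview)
Your proposal is correct and follows essentially the same route as the paper: apply the $(L_0,L_1)$ descent inequality (the paper states it as a cited technical lemma with $A = 1+e^{c}-\tfrac{e^{c}-1}{c}$, $B=\tfrac{e^{c}-1}{c}$ under $\|\rvx_{k+1}-\rvx_k\|\le c/L_1$), use the $\argmin$ step to turn the cross term into $-\alpha|\langle\nabla f(\rvx_k),\rvs_k\rangle|$, invoke Lemma~\ref{lemma_sign-g_sign-s}, and optimize the resulting quadratic in $\alpha$. Your explicit monotonicity check (that the coefficient of $\|\nabla f(\rvx_k)\|$ stays negative for the chosen $\alpha$, so one may replace $\|\nabla f(\rvx_k)\|$ by $\epsilon_g$ throughout) is exactly the step the paper performs tacitly when it jumps to the final bound $-\tfrac{\epsilon_g^2}{4(AL_0+BL_1\epsilon_g)d}$.
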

The proof is presented in the appendix~\ref{append_subsection_relaxed_progressive}.

\begin{theorem}[Query complexity]
\label{theorem_l0l1_query_complexsity}
Suppose objective function $f(\cdot)$ satisfies Assumption~\ref{ass_l0l1}. With step size strategy options 3 or 4, Algorithm~\ref{alg_algorithm_step_size} returns in expectation an $\epsilon$-first-order stationary point in $\mathcal{O}(\frac{d}{\epsilon^{2}})$ function evaluations. 
Specifically,
\begin{align*}
    \text{Option 3}\;\;&K\geq (\sqrt{d} + \frac{AL_{0}\sqrt{d} + BL_{1}(f(\rvx_{0}) - f^{\star})\sqrt{d}}{\epsilon})^{2}\\
    \text{Option 4}\;\;&\text{The result is summarized in Table~\ref{table1_l0l1_dynamic_summary}}.
\end{align*}
where constants $A = 1.01, B = 1.01$.
\end{theorem}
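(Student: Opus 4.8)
\textbf{Proof plan for Theorem~\ref{theorem_l0l1_query_complexsity}.}

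The plan is to run the standard ``telescoping a per-iteration descent bound'' argument, but now using the relaxed progressive bound of Lemma~\ref{lemma_l0l1_progress} rather than the $L$-smooth one, and to carefully handle the fact that $\epsilon_g$ is no longer linear in $\alpha$. First I would fix a stopping time: let $T$ be the first iteration at which $\|\nabla f(\rvx_k)\| \le \epsilon$, so for all $k < T$ we have $\|\nabla f(\rvx_k)\| \ge \epsilon$. The goal is to show that if $K$ is at least the stated bound, then $\mathbb{E}[T] \le K$ (equivalently, in expectation the algorithm has visited an $\epsilon$-stationary point within $K$ steps), which gives the claimed query complexity since each iteration uses $O(1)$ function evaluations.

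For Option~3 (stationary step size $\alpha_k = \sqrt{2}/(BL_1\sqrt{dK})$), the key step is to plug this $\alpha$ into the generic one-step inequality underlying Lemma~\ref{lemma_l0l1_progress} (before the optimal-$\alpha$ substitution) to get a bound of the form $\mathbb{E}[f(\rvx_{k+1}) - f(\rvx_k)\mid \rvx_k] \le -c_1\alpha\|\nabla f(\rvx_k)\| + c_2(AL_0 + BL_1\|\nabla f(\rvx_k)\|)\alpha^2 d$, valid on $\{k<T\}$. On that event $\|\nabla f(\rvx_k)\|\ge\epsilon$, so $-c_1\alpha\|\nabla f(\rvx_k)\| \le -c_1\alpha\epsilon$; the $BL_1\|\nabla f(\rvx_k)\|$ term in the quadratic part is the only genuinely awkward piece, and I would control $\sum_k \|\nabla f(\rvx_k)\|$ either by again using the descent to bound it, or — more cleanly — by noting that the specific choice of $\alpha\propto 1/(L_1\sqrt{dK})$ makes the $L_1\|\nabla f\|\alpha^2 d$ term telescope against the function decrease itself. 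Concretely, after taking total expectation and summing $k=0,\dots,T-1$, the left side telescopes to $\mathbb{E}[f(\rvx_T)] - f(\rvx_0) \ge f^\star - f(\rvx_0)$, and rearranging yields $\epsilon \cdot \mathbb{E}[T] \lesssim \sqrt{dK}\big((f(\rvx_0)-f^\star) + \tfrac{AL_0}{BL_1}\big) + (\text{lower order})$; setting $\mathbb{E}[T]=K$ and solving the resulting quadratic-in-$\sqrt{K}$ inequality reproduces $K \ge \big(\sqrt d + \tfrac{AL_0\sqrt d + BL_1(f(\rvx_0)-f^\star)\sqrt d}{\epsilon}\big)^2$. (One must also absorb the $\rho$-induced bias from using $|\gamma_k|$ in place of the true directional derivative; since $\rho$ can be taken arbitrarily small this only contributes negligibly, exactly as in Theorem~\ref{theorem_stationary}.)

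For Option~4 (dynamic step size $\alpha_k = |\gamma_k|/((AL_0 + \sqrt2 BL_1|\gamma_k|)d)$), the plan is to recognize this as the discretization of the per-iteration optimal step $\alpha_k^{\mathrm{opt}} = |\nabla f(\rvx_k)^T\rvs_k|/((AL_0 + BL_1\|\nabla f(\rvx_k)\|)d)$ from Lemma~\ref{lemma_l0l1_progress} (with the $\rho$-bias bounded as before), so that one inherits a per-step decrease of order $-\Omega\!\big(\tfrac{|\langle\nabla f(\rvx_k),\rvs_k\rangle|^2}{(AL_0+BL_1\|\nabla f(\rvx_k)\|)d}\big)$; taking conditional expectation over $\rvs_k$ and invoking Lemma~\ref{lemma_sign-g_sign-s} turns $\mathbb{E}|\langle\nabla f,\rvs\rangle|^2$ into $\Theta(\|\nabla f\|^2)$. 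The subtlety — and the reason the conclusion is a \emph{table} of cases rather than a single bound — is that $\tfrac{\|\nabla f\|^2}{AL_0 + BL_1\|\nabla f\|}$ behaves like $\tfrac{\|\nabla f\|^2}{AL_0}$ when $\|\nabla f\| \ll AL_0/(BL_1)$ (the ``smooth, small-gradient'' regime, giving a $1/\epsilon^2$ rate) but like $\tfrac{\|\nabla f\|}{BL_1}$ when $\|\nabla f\| \gg AL_0/(BL_1)$ (the ``large-gradient / highly non-smooth'' regime, giving a faster $1/\epsilon$-type rate); and a further split according to whether the gradient-norm threshold $\epsilon$ lies above or below $AL_0/(BL_1)$. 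I would therefore partition the run into iterations in the ``large-gradient'' regime and iterations in the ``moderate-gradient'' regime, telescope the decrease separately over each, and bound the count of iterations in each regime by the total available decrease $f(\rvx_0)-f^\star$ divided by the regime-specific per-step guarantee; collecting the four combinations of (regime) $\times$ (where $\epsilon$ sits) gives the four rows of Table~\ref{table1_l0l1_dynamic_summary}.

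The main obstacle I anticipate is the dynamic-step-size analysis for Option~4: one must (i) cleanly quantify the bias between $|\gamma_k|$ and the true directional derivative and between $\alpha_k$ and $\alpha_k^{\mathrm{opt}}$ so that the denominator $AL_0 + \sqrt2 BL_1|\gamma_k|$ (note the extra $\sqrt 2$, which exactly compensates the $\mathbb{E}|\langle\nabla f,\rvs\rangle|\ge \|\nabla f\|/\sqrt2$ loss from Lemma~\ref{lemma_sign-g_sign-s}) still yields a genuine expected decrease, and (ii) handle the randomness of the regime membership — whether iteration $k$ is ``large-gradient'' depends on $\rvx_k$, which is itself random — so the case split has to be done inside the expectation, e.g. via indicator random variables and a stopping-time / optional-stopping style argument rather than a deterministic partition. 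The constants $A=B=1.01$ are precisely the slack needed to swallow the $\rho$-bias and the discretization error, so keeping track of where that slack is consumed is the delicate bookkeeping part of the proof.
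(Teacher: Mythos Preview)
Your plan follows the same overall architecture as the paper --- apply the $(L_0,L_1)$ descent inequality (Lemma~\ref{lemma_l0l1_descent}), take the Rademacher expectation via Lemma~\ref{lemma_sign-g_sign-s}, telescope, and for Option~4 do a case analysis on the behaviour of $\|\nabla f\|^2/(AL_0+BL_1\|\nabla f\|)$ --- so the high-level strategy is correct. Two points of divergence are worth flagging.

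First, the case split for Option~4 is not the one the paper uses. You propose splitting on $\|\nabla f(\rvx_k)\|$ versus $AL_0/(BL_1)$ and on where $\epsilon$ sits relative to that threshold. The paper instead splits on whether $L_1 \lessgtr 1/(\sqrt{2}B)$ --- a \emph{fixed problem property}, not an iterate-dependent or $\epsilon$-dependent one --- and only then on $\|\nabla f(\rvx_k)\|$ versus $AL_0/|1-\sqrt{2}BL_1|$. The resulting four rows of Table~\ref{table1_l0l1_dynamic_summary} correspond to (small $L_1$, large gradient), (small $L_1$, small gradient), (large $L_1$, small gradient), (large $L_1$, large gradient); in the last case the paper does not bound the gradient norm at all but only the function decrease. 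Your regime-partitioning idea (counting iterations in each gradient regime via indicators) is arguably more careful than what the paper actually does --- the paper simply assumes one of the four conditions holds throughout and derives the corresponding complexity --- but it will not reproduce the same table.

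Second, your explanation of the constants $A=B=1.01$ is off. They are not slack for the $\rho$-bias; they are the constants appearing in the $(L_0,L_1)$ descent inequality $f(\rvx_{k+1})\le f(\rvx_k)+\langle\nabla f(\rvx_k),\rvx_{k+1}-\rvx_k\rangle+\tfrac{AL_0+BL_1\|\nabla f(\rvx_k)\|}{2}\|\rvx_{k+1}-\rvx_k\|^2$, which requires $\|\rvx_{k+1}-\rvx_k\|\le c/L_1$ and has $A=1+e^c-(e^c-1)/c$, $B=(e^c-1)/c$. The paper closes the loop by checking that with the chosen step sizes the displacement constraint $\|\alpha\rvs_k\|\le c/L_1$ is satisfied for a $c$ small enough (in large $d$) that $A,B\approx 1.01$. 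The $\rho$-bias is handled separately by the bound $|\delta_k|\le \xi\rho^2 d^{3/2}$ obtained via Taylor expansion, and absorbed by requiring $\rho$ small as in the table. For Option~3 specifically, the ``clean'' way the $BL_1\|\nabla f\|\alpha^2 d$ term is handled is not by telescoping against $f$ but by \emph{grouping} it with the linear $-\alpha\|\nabla f\|/\sqrt{2}$ term and factoring out $\|\nabla f\|$, leaving only the $AL_0$ quadratic as additive overhead; this is what produces the $(\sqrt{K}-\sqrt{d})$ denominator after substituting $\alpha=\sqrt{2}/(BL_1\sqrt{dK})$.
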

The proof is presented in the appendix~\ref{append_subsection_relaxed_query_complexsity}.

\bgroup
\setlength\tabcolsep{0.2em}
\begin{table*}[!ht]
\begin{center}
\small
\begin{threeparttable}
    \begin{tabular}{ccc}
    \toprule
         Conditions$^{[b]}$ & requirement over $\rho^{[a]}$ & Query complexity \\
        \cmidrule(r){1-3}
          $L_{1}\leq \frac{1}{\sqrt{2}B}$, $||\nabla f(\rvx)|| \geq \frac{AL_{0}}{1-\sqrt{2}BL_{1}}$ 
          &$\rho\leq \frac{1}{d\sqrt{2\xi\sqrt{d}}}$ &$\frac{8d(f(\rvx_{0}) - f^{\star})}{\epsilon}$\\
          $L_{1}\leq \frac{1}{\sqrt{2}B}$, $||\nabla f(\rvx)|| \leq \frac{AL_{0}}{1-\sqrt{2}BL_{1}}$   
          &$\rho \leq \frac{1}{d}\sqrt{\frac{\epsilon}{2\xi(A L_{0} + \sqrt{2}BL_{1}\epsilon)\sqrt{d}}}$ & $\frac{8AL_{0}d(f(\rvx_{0}) - f^{\star})}{(1 -\sqrt{2}B L_{1})\epsilon^{2}}$\\
           $L_{1}\geq \frac{1}{\sqrt{2}B}$, $||\nabla f(\rvx)|| \leq \frac{AL_{0}}{\sqrt{2}BL_{1} - 1}$
           &$\rho \leq \frac{1}{d}\sqrt{\frac{\epsilon}{2\xi(A L_{0} + \sqrt{2}BL_{1}\epsilon)\sqrt{d}}}$ & $\frac{8AL_{0}d(f(\rvx_{0}) - f^{\star})(2\sqrt{2}BL_{1} - 1)}{(\sqrt{2}BL_{1} - 1)\epsilon^{2}}$\\
           \cmidrule(r){1-3}
           $L_{1}\geq \frac{1}{\sqrt{2}B}$, $||\nabla f(\rvx)|| \geq \frac{AL_{0}}{\sqrt{2}BL_{1} - 1}$
           &$\rho \leq \frac{1}{d}\sqrt{\frac{\epsilon}{2\xi(A L_{0} + \sqrt{2}BL_{1}\epsilon)\sqrt{d}}}$ & $\frac{8(2\sqrt{2}BL_{1} - 1)(\sqrt{2}BL_{1} - 1)(f(\rvx_{0}) - f^{\star} - \epsilon)d}{AL_{0}}$\\
    \bottomrule
    \end{tabular}
    \begin{tablenotes}
        \item $^{[a]}$ $\xi$ is a constant associated with third-order property of $f$, detailed in appendix~\cref{append_l0l1_dy_1}.
        \item $^{[b]}$ For the fourth condition, decreasing loss value instead of gradient norm, detailed in appendix~\cref{append_l0l1_condition3_1}.
    \end{tablenotes}
    \caption{With dynamic step size strategy, the convergence property under relaxed smoothness.}
    \label{table1_l0l1_dynamic_summary}
\end{threeparttable}
\end{center}
\end{table*}
\egroup

Similarly, the strategy of dynamic step size (Option 4 in Algorithm~\ref{alg_algorithm_step_size}) aims to approximate the optimal step size at each iteration, i.e., approximating $\alpha_{k}^{\text{opt}} = \frac{|\rvs^{T}\nabla f(\rvx_{k})|}{(A L_{0} + \sqrt{2}BL_{1}|\rvs^{T}\nabla f(\rvx_{k})|)d}$ with $\alpha_{k} = \frac{|\gamma_{k}|}{(A L_{0} + \sqrt{2}BL_{1}|\gamma_{k}|)d}$ where $|\gamma_{k}| = \frac{|f(\rvx + \rho\rvs_{k}) - f(\rvx - \rho\rvs_{k})|}{2\rho}$. 
Simultaneously, the error $|\delta_{k}| := |\alpha_{k} - \alpha_{k}^{\text{opt}}| \leq \xi\rho^{2}d^{3/2}$ is bounded, where $\xi$ is a constant associated with third-order property of $f$. Please refer to proofs after~\cref{append_l0l1_dy_1} in Appendix~\ref{append_subsection_relaxed_query_complexsity}.
The above findings underline the fundamental correlation between $\alpha_{k}^{\text{opt}}$ and $|\gamma_{k}|$ under relaxed smoothness assumption, specifically, $\alpha_{k}^{\text{opt}} \propto \frac{1}{1/|\gamma_{k}| + 1/C}$ with some constant $C$ and a sufficient small $\rho$.

\begin{remark}[Step size constrain]
\label{remark_relax}
According to studies of clipped gradient descent under relax smoothness assumption~\cite{zhang2019gradient, kunstner2023noise}, when traversing the non-smooth regions of loss landscape where the top eigenvalue of Hessian might be large, it is necessary to constrain the step size according to the upper bound of the top eigenvalue of Hessian. In these scenarios, naive gradient descent may be too aggressive. A similar situation arises in zeroth-order optimization. Given the "approximated gradient" $\gamma_{k} = \frac{f(\rvx + \rho\rvs_{k}) - f(\rvx - \rho\rvs_{k})}{2\rho}$, MeZO suggests $\alpha_{k} \propto |\gamma_{k}|$, while S2P suggests $\alpha_{k} \propto \frac{1}{1/|\gamma_{k}| + 1/C}$. The term $\frac{1}{1/|\gamma_{k}| + 1/C}$ can be considered as a differential version of clipping, where $C$ is a problem-dependent property that practically functions as an inhibition strength to $|\gamma_{k}|$.
\end{remark}

\begin{remark}[Comparison between stationary step size and dynamic step size]
\label{remark_comp_two_step_size}
We derive the upper bound of query complexities when employing either stationary step size or dynamic step size, under both general and relaxed smoothness assumptions. Our observations indicate that the query complexity for algorithms using a dynamic step size is smaller than that for algorithms using a stationary step size. This suggests that algorithms exploiting a dynamic step size strategy are expected to outperform those with a stationary step size strategy. Intuitively, the dynamic step size will approximate the optimal step size at each parameter update step, thus leading to a larger progress at each step.
\end{remark}


\begin{algorithm}[H]
    \caption{Variant of Stochastic Two-Point search (VS2P).}
    \label{alg_algorithm_step_size_v}
    \textbf{Inputs}: Epochs $K$, objective function $f(\cdot)$ parameterized with $\rvx\sim \mathbb{R}^{d}$,   smoothing parameter $\rho$, learning rate $\eta$, Decay strategy, e.g., cosine decay. Scalars $\tau_{a} = \tau_{b}=3$.\\
    \textbf{Parameter}: $\rvx$
    \begin{algorithmic}[1] 
        \FOR{ $k =0,...,K$}
        \STATE $\rvs_{k}\sim \mathcal{R}$ \COMMENT{Rademacher distribution. Normal and uniform distribution also apply.}
        \STATE $|\gamma_{k}| = |\frac{f(\rvx_{k} + \rho\rvs_{k}) - f(\rvx_{k} - \rho\rvs_{k})}{2\rho}|, \beta_{k} = \text{sign}\big(f(\rvx_{k} + \rho\rvs_{k}) - f(\rvx_{k} - \rho\rvs_{k})\big)$
        \STATE $\sigma = \text{Std Dev}(\gamma_{\text{recent}})$
        \STATE $\alpha_{k} = \text{Decay strategy}(\eta, k)\times\frac{\beta_{k}\rho}{\tau_{b}\sigma/|\gamma_{k}| + \tau_{b}/\tau_{a}}$
        \STATE $\rvx_{k+1} = \rvx_{k} + \alpha_{k}\rvs_{k}$ 
        \ENDFOR
        \STATE \textbf{return} $\rvx$
    \end{algorithmic}
\end{algorithm}

\subsection{Variant of Stochastic Two-Point Search (VS2P)}
\label{subsection_vs2p}
Our convergence analysis of $f$ running S2P under the general and relaxed smoothness assumptions yields insights into a Variant of S2P (VS2P), which is a practical practice of 
Algorithm~\ref{alg_algorithm_step_size}.
VS2P adopts the dynamic step size strategy of stochastic two-point search and has two highlighted improvements.


\textbf{$\gamma$-clipping.} 
The immediate observation stemming from the convergence properties of $f$ running S2P with dynamic step size under relaxed smoothness assumption is the non-linear dependence between the approximated optimal step size $\alpha_{k}$ and $|\gamma_{k}|$, i.e., $\alpha_{k} = \frac{|\gamma_{k}|}{(AL_{0} + \sqrt{2}BL_{1}|\gamma_{k}|)d} = \frac{1}{(AL_{0}/|\gamma_{k}| + \sqrt{2}BL_{1})d}$. 
Specifically, the step size is almost linearly incremental when $|\gamma_{k}|$ is small, but the increment decreases fast when $|\gamma_{k}|$ is relatively large. 
We thus propose a strategy to mimic similar behavior, i.e., $\alpha_{k} \propto \gamma_{k}^{\prime}:=\frac{1}{1/|\gamma_{k}| + 1/\tau_{a}\sigma}$. 
The term $\tau_{a}\sigma$, where $\sigma:=\text{Std Dev}(\gamma_{\text{recent}})$, practically act as the threshold to estimate the inhibition strength to $|\gamma_{k}|$. 

\textbf{Sign trick} 
Having $\alpha_{k}\propto \gamma_{k}^{\prime}$, then we analyze the sign of step size.
%
According to our analysis, the algorithm applying the dynamic step size strategy has the potential to outperform the algorithm with stationary step size. 
However, the dynamic step size strategy requires twice symmetric perturbations forward passes at each iteration $k$. 
In order to reduce the number of forward passes, we propose to assign $\beta_{k}: = \text{sign}(f(\rvx + \rho\rvs_{k}) - f(\rvx - \rho\rvs_{k}))$ as the sign of $f(\rvx + \alpha_{k}\rvs_{k}) - f(\rvx - \alpha_{k}\rvs_{k})$, which we call \emph{sign trick}.
Then the calculation of $\argmin\{f(\rvx + \alpha_{k}\rvs_{k}), f(\rvx - \alpha_{k}\rvs_{k})\}$ in Algorithm~\ref{alg_algorithm_step_size} is unnecessary since $\rvx + \beta_{k}\text{abs}(\alpha_{k})\rvs_{k} = \argmin\{f(\rvx + \alpha_{k}\rvs_{k}), f(\rvx - \alpha_{k}\rvs_{k})\}$ where $\text{abs}(\alpha_{k})$ is known in practical. 
However, for a safe sign assignment, principally, it is required $\alpha_{k} \leq \rho$ at least supposing $\rho$ is small enough to guarantee the consistency of sign of directional gradient in local regions.
Based on the above intuitions, we propose a strategy 
$\alpha_{k} \propto \beta_{k}\rho\gamma_{k}^{\prime}/\tau_{b}\sigma$ since $\gamma_{k}^{\prime}/\tau_{b}\sigma \leq 1$ almost for sure supposing $\tau_{b}$ is large enough.

Finally, we have $\alpha_{k} \propto \frac{\beta_{k}\rho}{\tau_{b}\sigma/|\gamma_{k}| + \tau_{b}/\tau_{a}}$ in short, which emphasizes (a) The non-linear dependence between $\alpha_{k}$ and $|\gamma_{k}|$; (b) The interaction between the absolute value and standard deviation of $\gamma_{k}$.
The complete strategy is elucidated Algorithm~\ref{alg_algorithm_step_size_v}. And, $\tau_{a}=\tau_{b}:=3$ according to three-sigma rule. 

\section{Experiments}
In this section, we compare the proposed VS2P 
with several standard methods, including MeZO~\cite{nesterov2017random, malladi2023fine} and STP~\cite{bergou2020stochastic} on image tasks and language tasks.
\textit{Image tasks} fully fine-tune pre-trained DenseNet121~\cite{huang2017densely}, ResNet18~\cite{he2016deep}, ViT-B/16~\cite{dosovitskiy2020vit}, and VGG11~\cite{simonyan2014very} under datasets MNIST~\cite{deng2012mnist} and CIFAR10~\cite{krizhevsky2009learning}.
\textit{Regrading language tasks}, we mainly follow the experimental settings described in~\cite{malladi2023fine} to fully fine-tune LLM model GPT2~\cite{radford2019language} on selected tasks of GLUE benchmark~\cite{wang2018glue}.
It is worth noting that this work only considers the zeroth-order methods, which enables fine-tuning deep models with only forward-pass computing environments.


%
%


\textbf{Setup} Unless otherwise specified, all methods are paired with a cosine learning rate scheduler, utilize normal distribution as the default source of random perturbations, and employ fixed smoothing parameter of $\rho=\text{1e-3}$. Thus, there is only one tunable hyper-parameter for the proposed method and baselines, i.e., the learning rate. We average the results across three random seeds for all the settings.

\textit{Learning rate selection}
For image tasks, we search for the best learning rate over some pre-defined lists for different methods.
Specifically, we search for the best learning rate from lists \textit{MeZO} - $\{$2.5e-4, 1e-4, 7.5e-5, 5e-5, 2.5e-5, 1e-5$\}$; \textit{STP} - $\{$2.5e-3, 1e-3, 7.5e-4, 5e-4, 2.5e-4, 1e-4$\}$; \textit{VS2P} - $\{$1, 1.5, 3, 5$\}$.
Regarding the language tasks, we search for the best learning rate of one specific task in the GLUE benchmark for different methods and directly apply the selected learning rate to the remaining tasks, considering the same backbone of language tasks and tuning efficiency.
Specifically, the lists of learning rates for language tasks are $0.1\times$ the lists of image tasks.
We report the performance of best learning rate of each method in Section~\ref{exp_comp_baselinse}.
The details of the setup are summarized in Appendix~\ref{append_exp_setup}.

It is worth noting that STP requires three function queries at each parameter update whereas other methods need two, i.e., STP requires more computational resources than other methods under the same number of training epochs. So we get the results of STP ending at 2/3 training period and re-scale to compare with other methods, denoting as STP$^{\prime}$ correspondingly.

\subsection{Performance comparison with standard methods}
\label{exp_comp_baselinse}

\textbf{Image tasks} 
Table~\ref{tab_img_test_acc} and Table~\ref{tab_img_training_acc} demonstrate the test accuracy and training acceleration ratio of the proposed VS2P over baselines under various network architectures and datasets MNIST \& CIFAR-10.
The results presented in Table~\ref{tab_img_test_acc} show that the proposed VS2P generally outperforms or achieves competitive performance with baselines. Besides, the proposed method usually speeds up the training process 1.5$\times$ compared with baselines, demonstrated as in Table~\ref{tab_img_training_acc}.

The dynamics of the training process including the training losses and evaluation accuracy along with varying epochs are summarized in Figure~\ref{fig_img_convergence_and_test_acc}. 
Accordingly, we summarize the performance of different methods under various learning rates in Appendix~\ref{append_perf_various_lr} to demonstrate the selection of the best learning rate.
Meanwhile, the results of training dynamics with 100 training epochs are summarized in Appendix~\ref{append_perf_img_task_100_epochs} and Appendix~\ref{append_perf_various_lr_100_epoch}, which show the same trends.

Note that we observe the failure case of MeZO in the VGG11\&MNIST setting. Searching within a broader range of learning rates, i.e., 1e-2$\sim$ 1e-6, does not mitigate the issue, referring to Appendix~\ref{append_failure_study}. We attribute this failure to the inherent flaw of the MeZO-like method under the relaxed smoothness assumption, although we lack theoretical evidence. Please refer to the further discussion in the \textit{Limitation} of Section~\ref{sec_conclusion}.

\textbf{Language tasks} The dynamics of the training process including the training loss, evaluation loss, and evaluation metric along with varying epochs are summarized in Figure~\ref{fig_lan_convergence}. It demonstrates that the proposed method outperforms other baselines with a relatively large margin on tasks $\{$QNLI, SST-2, STS-B$\}$ and achieves competitive performance on tasks $\{$MRPC, RTE$\}$.

\bgroup
\setlength\tabcolsep{0.2em}
\begin{table}[!ht]
    \centering
    \small
    \begin{threeparttable}
    \begin{tabular*}{\textwidth}{@{\extracolsep{\fill}} l *{9}{c} }
    \toprule
    \multirow{2}{*}{Algorithm}& \multicolumn{4}{c}{Test accuracy (\%) under MNIST} & \multicolumn{4}{c}{Test accuracy (\%) under CIFAR-10} \\
    \cmidrule(l{4pt}r{4pt}){2-5}
    \cmidrule(l{4pt}r{4pt}){6-9}
     & DenseNet121 & ResNet18 & VGG11 &  ViT-B/16 & DenseNet121 & ResNet18 & VGG11 &  ViT-B/16 \\
    \cmidrule{1-9}
     MeZO & 72.7±3.3  & 69.2±3.6 & 11.4±0.0 & \textbf{74.0±1.0} & 34.7±0.8 & 44.9±0.4 & \textbf{54.2±0.3} & 36.1±1.1\\
     STP$^{\prime}$ &  84.1±2.3 & 67.0±2.6 & 67.8±1.3 & 62.9±2.9 & 34.9±2.7 &  44.6±1.6 & 50.1±0.4 & 33.7±1.6\\
    VS2P & \textbf{86.6±2.4}  & \textbf{72.8±1.5} & \textbf{68.8±4.3} & 72.7±1.3 & \textbf{37.6±1.7} & \textbf{47.4±1.4} & 53.7±0.5 & \textbf{38.1±1.2}\\
    \bottomrule
    \end{tabular*}  
    \caption{Test accuracy under MNIST and CIFAR-10 \& 200 training epochs.}
    \label{tab_img_test_acc}
\end{threeparttable}
\end{table}
\egroup

\bgroup
\setlength\tabcolsep{0.2em}
\begin{table}[!ht]
    \centering
    \small
    \begin{threeparttable}
    \begin{tabular*}{\textwidth}{@{\extracolsep{\fill}} l *{9}{c} }
    \toprule
    Training & \multicolumn{4}{c}{Acceleration ratio under MNIST} & \multicolumn{4}{c}{Acceleration ratio under CIFAR-10} \\
    \cmidrule(l{4pt}r{4pt}){2-5}
    \cmidrule(l{4pt}r{4pt}){6-9}
    acc. over & DenseNet121 & ResNet18 & VGG11 &  ViT-B/16 & DenseNet121 & ResNet18 & VGG11 &  ViT-B/16 \\
    \cmidrule{1-9}
     MeZO &  4.2 & 1.6 & 10.5 & N/A & 2.0 & 1.7 & N/A & N/A\\
     STP$^{\prime}$ &  1.6 & 1.4 & N/A & 1.6 & 1.6 & 1.7 & 2.1 & 1.9\\
    \bottomrule
    \end{tabular*}  
    \caption{Training acceleration ratio of VS2P over baselines under MNIST and CIFAR-10 \& 200 training epochs. N/A denotes no training acceleration, please refer to Figure~\ref{fig_img_convergence_and_test_acc}.}
    \label{tab_img_training_acc}
\end{threeparttable}
\end{table}
\egroup

\begin{figure*}[!ht]
\label{fig_img_convergence_and_test_acc}
\begin{center}
\subfigure[Experiments under datasets MNIST. The selected learning rate is highlighted in line labels.]
{
\includegraphics[width=1.0\linewidth]{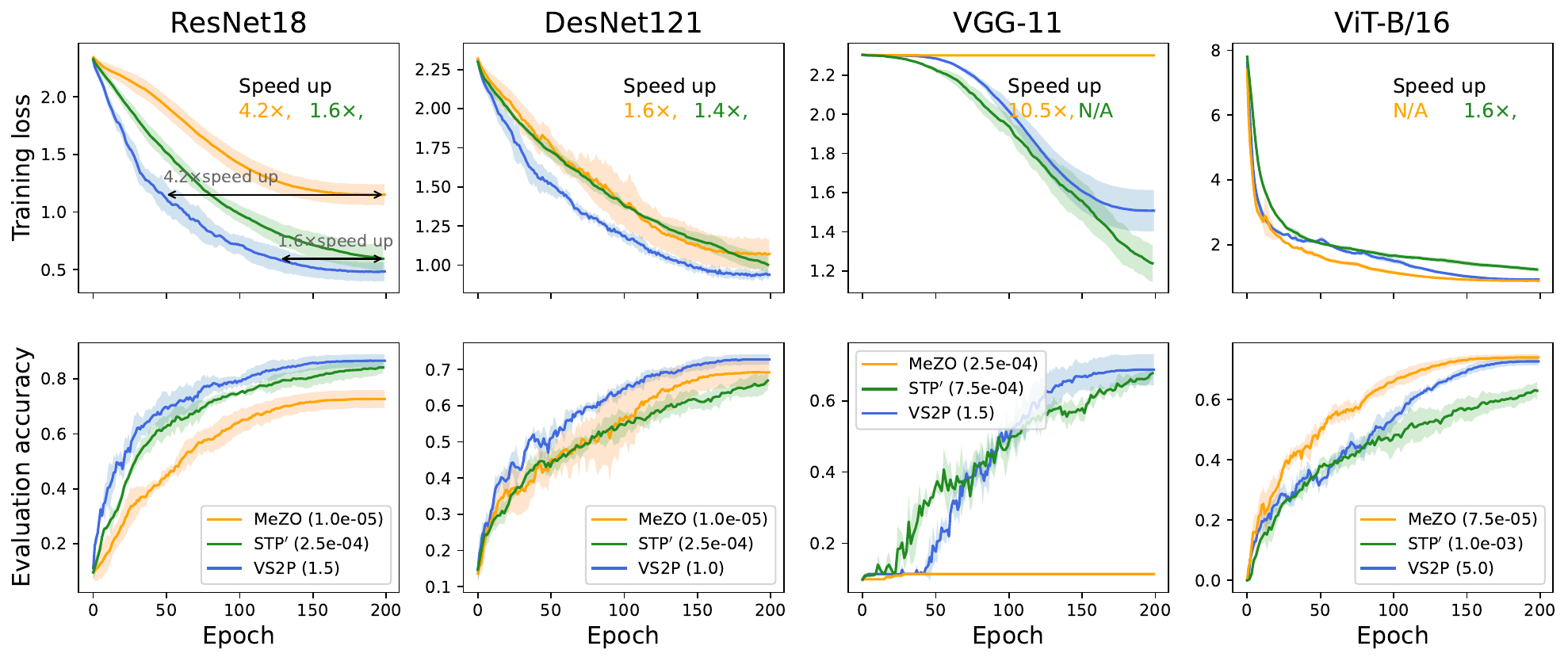}
}
\subfigure[Experiments under datasets CIFAR10. The selected learning rate is highlighted in line labels.]
{
\includegraphics[width=1.0\linewidth]{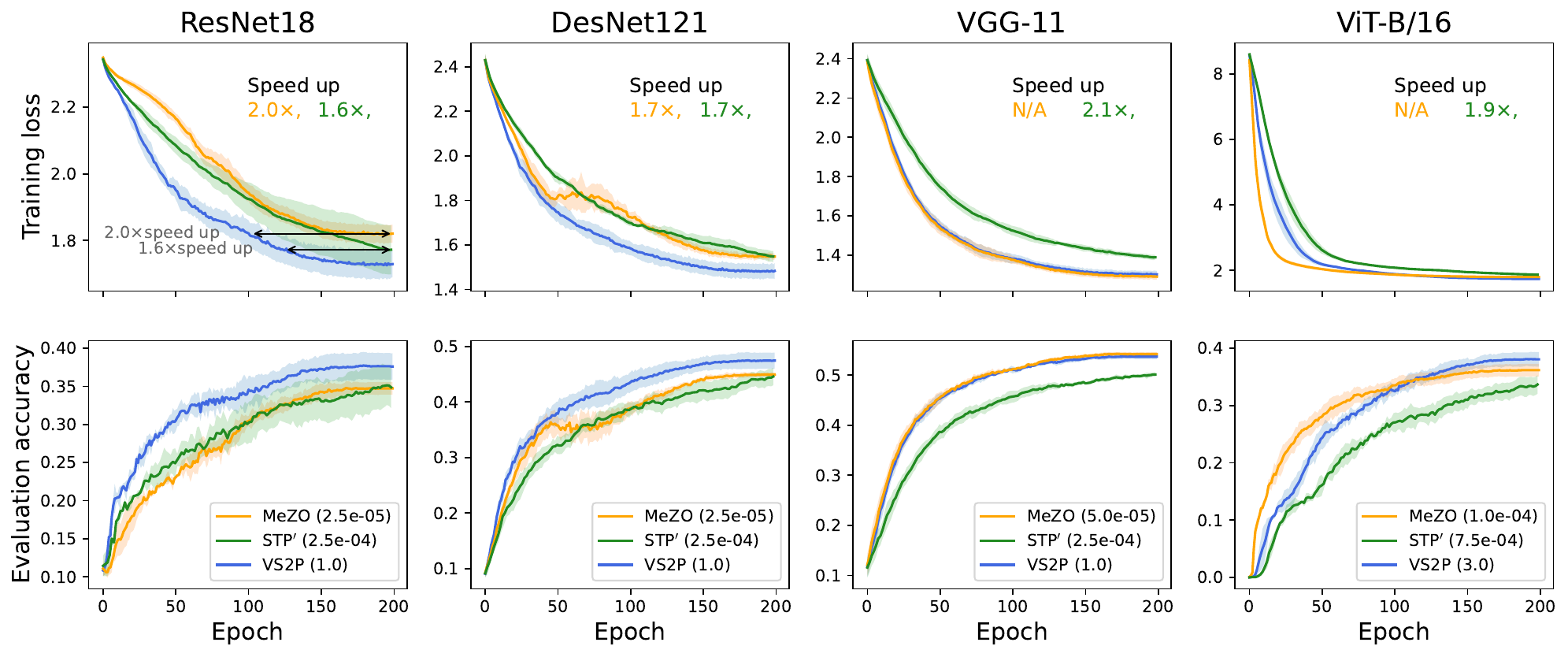}
}
\caption{Fully fine-tuning DenseNet121, ResNet18, ViT-B16, and VGG11 with proposed method and baselines under datasets MNIST and CIFAR10 \& 200 training epochs.}
\end{center}
\end{figure*}

\begin{figure*}[!ht]
\label{fig_lan_convergence}
\begin{center}
\subfigure
{
\includegraphics[width=1.0\linewidth]{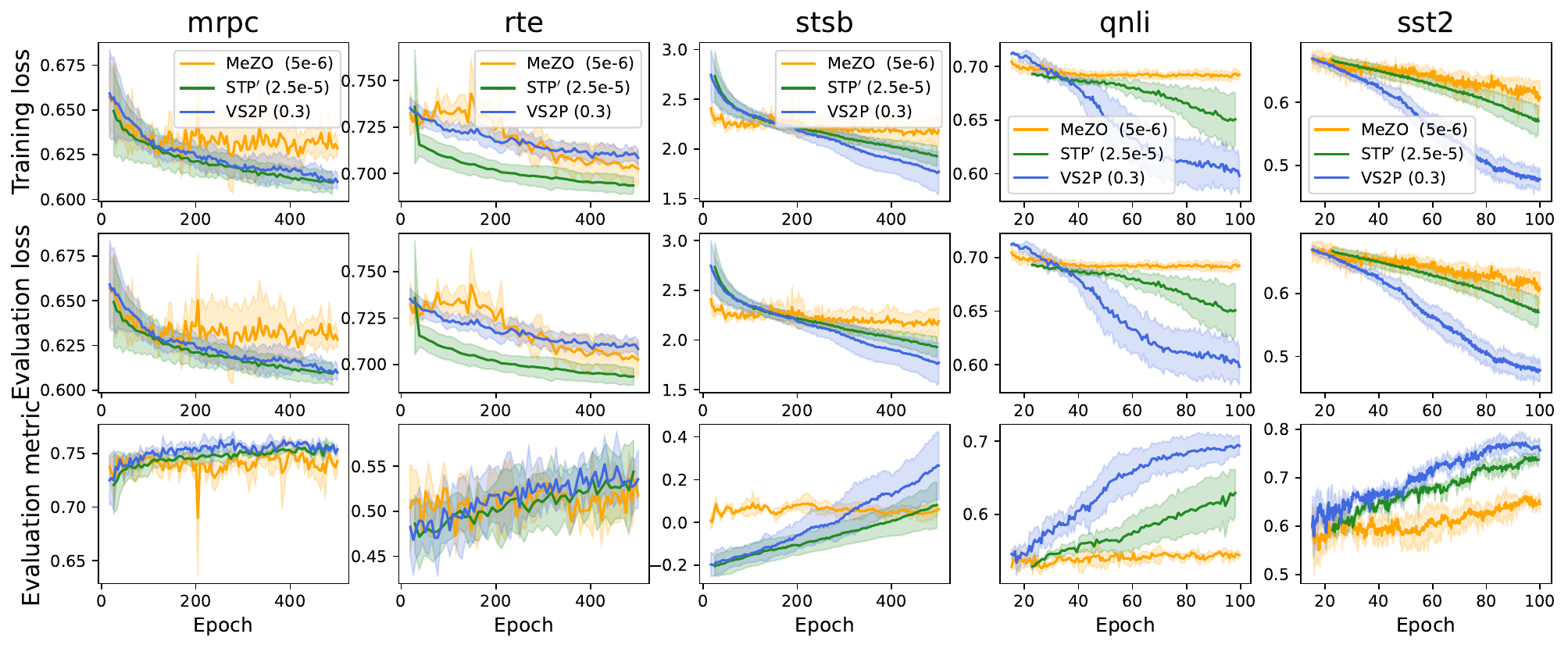}
\label{fig_convergence_llm}
}
\caption{Fine-tuning GPT2 under selected tasks of GLUE benchmark experiments. The selected learning rate is highlighted in line labels.}
\end{center}
\end{figure*}

\section{Conclusion}
\label{sec_conclusion}
In this work, we study the complexity of the proposed S2P method under both general and relaxed smoothness assumptions for zeroth-order optimization. 
Our theoretical analysis induces a variant of S2P, VS2P, which leverages our new theoretical findings and the practical practice, which we call sign trick.
This includes the non-linear dependence between the approximated optimal step size $\alpha_{k}$ and $|\gamma_{k}|$ when employing the dynamic step size strategy under relaxed smoothness assumption, as well as the practical implementation of the sign trick.
Empirical experiments demonstrate that the proposed VS2P outperforms or achieves competitive performance compared to standard methods across a spectrum of model types and scales.

%


\textbf{Future work} 
Further, this work emphasizes the integration between $\alpha_{k}$, $|\gamma_{k}|$ and $\text{Std Dev}(\gamma_{\text{recent}})$. 
It is non-trivial to consider capturing the interactions through the learning process to avoid tuning hyper-parameters, especially learning rate $\eta$, for various $f$. 
This investigation will lead us to our future work.

\textbf{Limitation} 1). Despite the theoretical guarantees of the proposed method under the relaxed smoothness assumption, we desire a lower bound for the MeZO-like method under the same relaxed smoothness assumption to possibly demonstrate the necessity of $\gamma$-clipping. Similar results, such as vanilla SGD being arbitrarily slower compared to clipped SGD under the relaxed smoothness assumption, have been observed in the previous work~\cite{zhang2019gradient}, paving the way for future research. 2). Our experiments show relatively consistent performance on image tasks and mixed results on language model training, warranting further investigations.

\section*{Acknowledgments}
This work was supported by the National Science Foundation under Grant IIS-1749940, IIS-2212174, Office of Naval Research N00014-20-1-2382, and National Institute on
Aging RF1AG072449.

\bibliographystyle{unsrtnat}
\bibliography{references}  
\appendix

\section{Technical lemmas}
\label{append_section_tech}

\begin{lemma}
\label{lemma_l0l1_descent}
(\citep{zhang2020improved} Descent Inequality) Suppose objective function $f(\cdot)$ satisfies Assumption~\ref{ass_l0l1}, and $c>0$ be a constant. For any $\rvx_{k}$ and $\rvx_{k+1}$, as long as $||\rvx_{k} - \rvx_{k+1}||\leq \frac{c}{L_{1}}$, we have
\begin{align}
\label[ineq]{l0l1_descent_inequality}
f(\rvx_{k+1}) \leq f(\rvx_{k}) + (\rvx_{k+1} - \rvx_{k})^{T}\nabla f(\rvx_{k}) + \frac{AL_{0} + BL_{1}||\nabla f(\rvx_{k})||}{2}||\rvx_{k+1} - \rvx_{k}||^{2}
\end{align}
where $A = 1+e^{c}-\frac{e^{c}-1}{c}, B = \frac{e^{c} - 1}{c}$. Note that $A$ and $B$ are monotonically increasing functions w.r.t. $c>0$.
\end{lemma}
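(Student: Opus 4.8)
The plan is to reduce the descent inequality to a scalar estimate along the segment joining the two points. Write $\Delta := \rvx_{k+1}-\rvx_{k}$, $r := \|\Delta\|$, and $\rvx(t) := \rvx_{k}+t\Delta$ for $t\in[0,1]$, so the hypothesis reads $L_{1}r\le c$. Taylor's theorem with integral remainder gives
\[
f(\rvx_{k+1}) - f(\rvx_{k}) - \langle\nabla f(\rvx_{k}),\Delta\rangle \;=\; \int_{0}^{1}(1-t)\,\Delta^{T}\nabla^{2}f(\rvx(t))\,\Delta\, dt \;\le\; r^{2}\int_{0}^{1}(1-t)\,\|\nabla^{2}f(\rvx(t))\|\, dt ,
\]
so everything reduces to controlling $\|\nabla^{2}f(\rvx(t))\|$ along the segment in terms of $\|\nabla f(\rvx_{k})\|$ and the constants $L_{0},L_{1}$.

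The key step is an a priori bound on how fast the gradient can move along the path. Set $h(t):=\|\nabla f(\rvx(t))-\nabla f(\rvx_{k})\|$; since $\tfrac{d}{dt}\nabla f(\rvx(t))=\nabla^{2}f(\rvx(t))\Delta$ and Assumption~\ref{ass_l0l1} gives $\|\nabla^{2}f(\rvx(t))\|\le L_{0}+L_{1}\|\nabla f(\rvx(t))\|\le \big(L_{0}+L_{1}\|\nabla f(\rvx_{k})\|\big)+L_{1}h(t)$, the function $h$ obeys the Gr\"onwall-type inequality $h'(t)\le r\big((L_{0}+L_{1}\|\nabla f(\rvx_{k})\|)+L_{1}h(t)\big)$ with $h(0)=0$; integrating it yields $h(t)\le\frac{L_{0}+L_{1}\|\nabla f(\rvx_{k})\|}{L_{1}}\big(e^{L_{1}rt}-1\big)$, and substituting this back into Assumption~\ref{ass_l0l1} produces the clean bound $\|\nabla^{2}f(\rvx(t))\|\le\big(L_{0}+L_{1}\|\nabla f(\rvx_{k})\|\big)e^{L_{1}rt}$. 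Feeding this into the remainder integral and using $\int_{0}^{1}(1-t)e^{at}\,dt=\frac{e^{a}-1-a}{a^{2}}$ with $a=L_{1}r$ bounds the left-hand side by $\big(L_{0}+L_{1}\|\nabla f(\rvx_{k})\|\big)\,\frac{e^{L_{1}r}-1-L_{1}r}{(L_{1}r)^{2}}\,r^{2}$.

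It remains to match constants. The map $u\mapsto\frac{e^{u}-1-u}{u^{2}}$ is increasing on $(0,c]$ — an elementary derivative check — and $L_{1}r\le c$, so the coefficient above is at most $\frac{e^{c}-1-c}{c^{2}}$, which already implies the stated inequality once one verifies the two scalar bounds $\frac{e^{c}-1-c}{c^{2}}\le\frac{B}{2}\le\frac{A}{2}$; each of these, as well as the claimed monotonicity of $A=1+e^{c}-\frac{e^{c}-1}{c}$ and $B=\frac{e^{c}-1}{c}$, follows by exhibiting the relevant single-variable function as convex with value and first derivative vanishing at $c=0$. (Alternatively, one may keep $L_{0}$ and $L_{1}\|\nabla f(\rvx_{k})\|$ separate throughout and reproduce the slightly looser bookkeeping of the cited reference to land on $A,B$ directly.) I expect the main obstacle to be the Gr\"onwall step: the estimate for $h$ is genuinely self-referential because $(L_{0},L_{1})$-smoothness must be applied at the running point $\rvx(t)$ rather than at $\rvx_{k}$, so no one-shot bound suffices, and this is exactly where the restriction $\|\rvx_{k}-\rvx_{k+1}\|\le c/L_{1}$ enters, keeping the factor $e^{L_{1}rt}$ under control; the rest is routine integration and scalar calculus.
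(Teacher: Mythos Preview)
The paper does not supply its own proof of this lemma; it is quoted verbatim from \cite{zhang2020improved} and used as a black box in the subsequent analysis. Your argument is correct and is precisely the standard proof of this descent inequality: parametrize the segment, write the Taylor remainder as $\int_{0}^{1}(1-t)\,\Delta^{T}\nabla^{2}f(\rvx(t))\,\Delta\,dt$, control the Hessian along the path via a Gr\"onwall bound on $\|\nabla f(\rvx(t))-\nabla f(\rvx_{k})\|$ (this is exactly where the restriction $L_{1}r\le c$ is consumed), and finish with the closed-form integral $\int_{0}^{1}(1-t)e^{at}\,dt=(e^{a}-1-a)/a^{2}$.

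One small remark: your route in fact yields the single coefficient $\frac{e^{c}-1-c}{c^{2}}$ in front of both $L_{0}$ and $L_{1}\|\nabla f(\rvx_{k})\|$, which is \emph{tighter} than the stated $A/2$ and $B/2$; the scalar inequalities $\frac{e^{c}-1-c}{c^{2}}\le \frac{B}{2}\le \frac{A}{2}$ that you invoke to match the lemma's constants are genuine relaxations (each verified, as you say, by showing the relevant function is convex with value and derivative zero at $c=0$). So nothing is lost, and the monotonicity claims for $A$ and $B$ follow the same way.
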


\begin{lemma}
\label{append_lemma_sign-g_sign-s}
(Lemma~\ref{lemma_sign-g_sign-s}) For all $\rvg\in\mathbb{R}^{d}$, and random vector $\rvs \sim \mathcal{R}$ where $\mathcal{R}$ is the Rademacher distribution, i.e., each element $\ervs \sim \{+1, -1 \}$ with equal chances and $\mathbb{E}_{\rvs\sim \mathcal{R}}||\rvs||_{2}^{2}=d$, then
$
\mathbb{E}_{\rvs\sim \mathcal{R}}|\langle\rvg, \rvs\rangle| \geq \frac{1}{\sqrt{2}}||\rvg||_{2}.
$
\end{lemma}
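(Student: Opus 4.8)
The statement to prove is Lemma~\ref{append_lemma_sign-g_sign-s} (a restatement of Lemma~\ref{lemma_sign-g_sign-s}): for all $\rvg \in \mathbb{R}^d$ and $\rvs$ drawn from the Rademacher distribution, $\mathbb{E}_{\rvs \sim \mathcal{R}}|\langle \rvg, \rvs\rangle| \geq \frac{1}{\sqrt{2}}\|\rvg\|_2$.

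The plan is to invoke the Khintchine inequality, which is precisely the tool flagged in the text. Let me recall: the Khintchine inequality states that for i.i.d. Rademacher signs $\rvs_i$ and real coefficients $\rvg_i$, there are constants $A_p, B_p$ such that $A_p \|\rvg\|_2 \leq (\mathbb{E}|\sum_i \rvg_i \rvs_i|^p)^{1/p} \leq B_p \|\rvg\|_2$. The case $p=1$ gives a lower bound with the optimal constant $A_1 = 1/\sqrt{2}$ (Haagerup's sharp constant). So the lemma is literally the $p=1$ lower-bound half of Khintchine with the sharp constant $1/\sqrt 2$.

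So the structure I'd write is: First, state that $\langle \rvg, \rvs\rangle = \sum_{i=1}^d \rvg_i \rvs_i$ is a Rademacher sum. Second, apply the $L^1$ Khintchine inequality with its best constant, citing \cite{khintchine1923dyadische} (and one could also reference the sharp-constant result), to get $\mathbb{E}|\langle\rvg,\rvs\rangle| \geq \frac{1}{\sqrt 2}\|\rvg\|_2$. That completes it. If a self-contained argument is wanted instead of a black-box citation, the cleanest elementary route is the Cauchy–Schwarz / moment-interpolation trick: write $\|\rvg\|_2^2 = \mathbb{E}|\langle\rvg,\rvs\rangle|^2 = \mathbb{E}\big[|\langle\rvg,\rvs\rangle|^{2/3}\cdot|\langle\rvg,\rvs\rangle|^{4/3}\big]$ and apply Hölder with exponents $3$ and $3/2$ to get $\|\rvg\|_2^2 \leq (\mathbb{E}|\langle\rvg,\rvs\rangle|)^{2/3}(\mathbb{E}|\langle\rvg,\rvs\rangle|^4)^{1/3}$, then bound the fourth moment $\mathbb{E}|\langle\rvg,\rvs\rangle|^4 \leq 3\|\rvg\|_2^4$ by a direct expansion (using $\mathbb{E}\rvs_i^2=1$, $\mathbb{E}\rvs_i=0$, $\mathbb{E}\rvs_i^4=1$, and that $\binom{4}{2}=3$ cross terms appear). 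Rearranging yields $\mathbb{E}|\langle\rvg,\rvs\rangle| \geq \|\rvg\|_2/\sqrt 3$, which is weaker than $1/\sqrt 2$; to recover the sharp $1/\sqrt 2$ one really needs Haagerup's argument, so for the sharp constant I'd just cite it.

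There is essentially no obstacle: the only "decision" is whether to cite the sharp Khintchine constant or prove a (sufficient but looser) bound by elementary moment comparison. Since the downstream use only needs \emph{some} positive constant $C$ with $\mathbb{E}|\langle\rvg,\rvs\rangle| \geq C\|\rvg\|_2$ (and the value $1/\sqrt 2$ is what keeps the later constants clean and matches the normal/uniform cases in \cite{bergou2020stochastic}), the natural choice is to cite Khintchine with the optimal constant. I would also remark that the bound is scale-invariant, so one may assume $\|\rvg\|_2 = 1$ without loss of generality, and that the same argument works verbatim for normalized perturbations $\rvs/\|\rvs\|$ after dividing by $\sqrt d$, as noted after Lemma~\ref{lemma_sign-g_sign-s}.
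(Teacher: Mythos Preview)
Your proposal is correct and matches the paper's proof essentially verbatim: the paper also writes $|\langle\rvg,\rvs\rangle| = |\sum_i \ervg_i \ervs_i|$, states the Khintchine inequality with Haagerup's sharp constants, and reads off the $p=1$ lower bound $A_1 = 2^{1/2-1} = 1/\sqrt{2}$. Your side remark about the elementary $1/\sqrt{3}$ route via H\"older and the fourth moment is extra (and not in the paper), but your main line is identical to theirs.
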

\begin{proof}
\begin{align}
\label{append_proof_eq_expectation}
\begin{split}
|\langle\rvg, \rvs\rangle| = |\sum_{i=1}^{d}\ervg_{i}\ervs_{i}|
\end{split}
\end{align}
According to Khintchine inequality~\cite{khintchine1923dyadische}, i.e., 
\begin{align*}
A_{p}(\sum_{i=1}^{d}|\ervg_{i}|^{2})^{\frac{1}{2}} \leq (\mathbb{E}|\sum_{i=1}^{d}\ervg_{i}\ervs_{i}|^{p})^{\frac{1}{p}} \leq  B_{p}(\sum_{i=1}^{d}|\ervg_{i}|^{2})^{\frac{1}{2}}
\end{align*}
where
\begin{align*}
A_{p} =  \left\{ \begin{array}{rl}
2^{\frac{1}{2} - \frac{1}{p}} & 0<p<p_{0} \\ 2^{\frac{1}{2}}(\Gamma((p+1)/2)/\sqrt{\pi})^{\frac{1}{p}} & p_{0}<p<2 \\
1 & 2\leq p < \infty.
\end{array}\right.
\end{align*}
\begin{align*}
B_{p} =  \left\{ \begin{array}{rcl}
1 &0<p\leq2 \\ 
2^{\frac{1}{2}}(\Gamma((p+1)/2)/\sqrt{\pi})^{\frac{1}{p}} &2<p< \infty.
\end{array}\right.
\end{align*}
where $p_{0}\approx 1.847$ and $\Gamma$ is the Gamma function, we have
\begin{align*}
\frac{1}{\sqrt{2}}||\rvg||_{2} \leq \mathbb{E}|\sum_{i=1}^{d}\ervg_{i}\ervs_{i}| \leq ||\rvg||_{2},
\end{align*}
Combined with \eqref{append_proof_eq_expectation}, we have 
\begin{align*}
\frac{1}{\sqrt{2}}||\rvg||_{2} \leq \mathbb{E}_{\rvs\sim \mathcal{R}}|\langle\rvg, \rvs\rangle| \leq ||\rvg||_{2}.
\end{align*}
This completes the proof.
\end{proof}

\section{Convergence analysis under the general smoothness assumption}
\label{appen_cor_stsp}

\subsection{Progressive bound of S2P}
\begin{lemma}
\label{append_lemma_progress}
(Lemma~\ref{lemma_progress}) (Progressive bound) Suppose objective function $f(\cdot)$ satisfies Assumption~\ref{ass_lsmooth_gd} and $||\nabla f(\rvx_{k})||_{2}\geq \epsilon_{g}$. If we run algorithm~\ref{alg_algorithm_step_size} with step size $\alpha = \frac{\sqrt{2}\epsilon_{g}}{2Ld}$, we have following progressive bound
$
\mathbb{E}[f(\rvx_{k+1}) - f(\rvx_{k})|\rvx_{k}] \leq -\Omega(\frac{\epsilon_{g}^{2}}{Ld}),
$ where $\mathbb{E}[\cdot|\rvx_{k}]$ denotes the conditional expectation w.r.t. $\rvx_{k}$.
\end{lemma}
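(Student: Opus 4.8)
The plan is to establish the progressive bound by first controlling the descent achieved by the better of the two candidate points $\rvx_k \pm \alpha \rvs_k$, and then taking expectation over $\rvs_k \sim \mathcal{R}$. First I would invoke the $L$-gradient Lipschitz assumption (Assumption~\ref{ass_lsmooth_gd}), which yields the standard quadratic upper bound $f(\rvx_k \pm \alpha \rvs_k) \leq f(\rvx_k) \pm \alpha \langle \nabla f(\rvx_k), \rvs_k \rangle + \tfrac{L}{2}\alpha^2 \|\rvs_k\|^2$. Since $\rvx_{k+1} = \argmin\{f(\rvx_k + \alpha \rvs_k), f(\rvx_k - \alpha \rvs_k)\}$, I can bound $f(\rvx_{k+1}) - f(\rvx_k)$ by the minimum of the two right-hand sides, which is $-\alpha |\langle \nabla f(\rvx_k), \rvs_k\rangle| + \tfrac{L}{2}\alpha^2 \|\rvs_k\|^2$. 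For the Rademacher distribution $\|\rvs_k\|^2 = d$ deterministically, so this step is clean.

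Next I would take the conditional expectation with respect to $\rvx_k$ (equivalently, over $\rvs_k$). The noise term becomes exactly $\tfrac{L}{2}\alpha^2 d$, and by Lemma~\ref{lemma_sign-g_sign-s} the linear term satisfies $\mathbb{E}_{\rvs_k}|\langle \nabla f(\rvx_k), \rvs_k\rangle| \geq \tfrac{1}{\sqrt 2}\|\nabla f(\rvx_k)\| \geq \tfrac{1}{\sqrt 2}\epsilon_g$, using the hypothesis $\|\nabla f(\rvx_k)\| \geq \epsilon_g$. This gives
\begin{align*}
\mathbb{E}[f(\rvx_{k+1}) - f(\rvx_k)\mid \rvx_k] \leq -\frac{\alpha \epsilon_g}{\sqrt 2} + \frac{L\alpha^2 d}{2}.
\end{align*}
Then I would substitute the prescribed step size $\alpha = \tfrac{\sqrt 2 \epsilon_g}{2Ld}$: the first term becomes $-\tfrac{\epsilon_g^2}{2Ld}$ and the second becomes $\tfrac{L d}{2}\cdot \tfrac{2\epsilon_g^2}{4L^2 d^2} = \tfrac{\epsilon_g^2}{4Ld}$, so the sum is $-\tfrac{\epsilon_g^2}{4Ld} = -\Omega\!\left(\tfrac{\epsilon_g^2}{Ld}\right)$, which is the claim. (Note that $\alpha = \tfrac{\sqrt2\epsilon_g}{2Ld}$ is precisely the minimizer of the quadratic-in-$\alpha$ bound $-\tfrac{\alpha\epsilon_g}{\sqrt2} + \tfrac{L\alpha^2 d}{2}$, which is the "optimal step size" referred to in the lemma statement.)

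I do not expect a genuine obstacle here; the argument is a direct adaptation of the STP analysis of~\cite{bergou2020stochastic} with the non-updating third point removed. The one place requiring mild care is the passage from $\min$ of two upper bounds to $-\alpha|\langle\nabla f(\rvx_k),\rvs_k\rangle| + \tfrac{L}{2}\alpha^2 d$: one should note that $\min\{a+c, b+c\} = \min\{a,b\}+c$ and that $\min\{-t, +t\} = -|t|$, so the cross term collapses to $-|\langle\nabla f(\rvx_k),\rvs_k\rangle|$ regardless of its sign — this is exactly where dropping $f(\rvx_k)$ from STP is harmless, since the symmetric pair $\pm\rvs_k$ already guarantees one of the two directions is a descent direction. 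If instead one uses normal or uniform perturbations, $\|\rvs_k\|^2$ is no longer constant and one would carry $\mathbb{E}\|\rvs_k\|^2$ through (equal to $d$ for $\mathcal{N}(0,\rmI)$), together with the corresponding constant in Lemma~\ref{lemma_sign-g_sign-s}; the conclusion $-\Omega(\epsilon_g^2/(Ld))$ is unchanged.
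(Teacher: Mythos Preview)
Your proposal is correct and follows essentially the same approach as the paper's proof: apply the descent lemma from $L$-smoothness to each of $\rvx_k\pm\alpha\rvs_k$, take the minimum to obtain the $-\alpha|\langle\nabla f(\rvx_k),\rvs_k\rangle|+\tfrac{L}{2}\alpha^2 d$ bound, invoke Lemma~\ref{lemma_sign-g_sign-s}, and substitute $\alpha=\tfrac{\sqrt2\epsilon_g}{2Ld}$ to get $-\tfrac{\epsilon_g^2}{4Ld}$. Your explicit justification of the $\min$ step is slightly more detailed than the paper's, but the argument and final constant are identical.
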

\begin{proof}
Using $L$-gradient Lipschitz, we have (descent lemma)
\begin{align*}
    \mathbb{E}&[f(\rvx_{k+1}) - f(\rvx_{k})|\rvx_{k}] \\
    &\leq  \mathbb{E}[\nabla f(\rvx_{k})^{T}(\rvx_{k+1} - \rvx_{k})|\rvx_{k}] + \frac{L}{2}\mathbb{E}[||\rvx_{k+1} - \rvx_{k}||^{2}] \\
    &= -\alpha\mathbb{E}|\nabla f(\rvx_{k})^{T}\rvs_{k}| + \frac{L\alpha^{2}}{2}\mathbb{E}||\rvs_{k}||_{2}^{2} \;\;\;\;\text{Take updating step}\\
    &= -\alpha\mathbb{E}|\nabla f(\rvx_{k})^{T}\rvs_{k}| + \frac{L\alpha^{2}d}{2}
\end{align*}
Lemma~\ref{append_proof_eq_expectation} shows that 
$\mathbb{E}_{\rvs_{k}\sim \mathcal{R}}|\nabla f(\rvx_{k})^{T}\rvs_{k}| \geq \frac{1}{\sqrt{2}}||\nabla f(\rvx_{k})||_{2}$, then 
\begin{align*}
    \mathbb{E}[f(\rvx_{k+1}) - f(\rvx_{k})|\rvx_{k}] &\leq -\frac{\alpha}{\sqrt{2}}||\nabla f(\rvx_{k})||_{2} + \frac{L\alpha^{2}d}{2} \\
    &\leq -\frac{\alpha}{\sqrt{2}}\epsilon_{g} + \frac{L\alpha^{2}d}{2}
\end{align*}
To guarantee convergence, $\alpha\sim [0, \frac{\sqrt{2}\epsilon_{g}}{Ld}]$, then suppose $\alpha = \frac{\sqrt{2}\epsilon_{g}}{2Ld}$, we have
$
\mathbb{E}[f(\rvx_{k+1}) - f(\rvx_{k})|\rvx_{k}] \leq -\frac{\epsilon_{g}^{2}}{4Ld}
$ which completes the proof.
\end{proof}

\subsection{Query complexity of S2P}
\begin{theorem}
\label{append_theorem_stationary}
(Theorem~\ref{theorem_stationary}) (Query complexity) Suppose objective function $f(\cdot)$ satisfies Assumption~\ref{ass_lsmooth_gd}. If we run algorithm~\ref{alg_algorithm_step_size} with step size strategy options 1 or 2, the algorithm returns in expectation an $\epsilon$-first-order stationary point in $\mathcal{O}(\frac{d}{\epsilon^{2}})$ function evaluations.
\end{theorem}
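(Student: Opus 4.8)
\textbf{Proof plan for Theorem~\ref{theorem_stationary} (Query complexity under general smoothness).}

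The plan is to chain together the per-iteration progressive bounds into a telescoping sum over $k = 0, \dots, K$, and then argue by contradiction that if no $\epsilon$-first-order stationary point were ever reached, the accumulated decrease would drive $f$ below $f^\star$, which is impossible. First I would handle \textbf{Option 1} (stationary step size $\alpha_k = \alpha_0/\sqrt{Kd}$): starting from the descent-lemma computation inside the proof of Lemma~\ref{append_lemma_progress}, namely $\mathbb{E}[f(\rvx_{k+1}) - f(\rvx_k)\mid\rvx_k] \leq -\frac{\alpha}{\sqrt{2}}\|\nabla f(\rvx_k)\| + \frac{L\alpha^2 d}{2}$, I would plug in the fixed $\alpha = \alpha_0/\sqrt{Kd}$, take total expectation, and sum over $k=0,\dots,K-1$. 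The sum telescopes on the left to $\mathbb{E}[f(\rvx_K)] - f(\rvx_0) \geq f^\star - f(\rvx_0)$, while the right-hand side becomes $-\frac{\alpha_0}{\sqrt{2Kd}}\sum_k \mathbb{E}\|\nabla f(\rvx_k)\| + \frac{L\alpha_0^2}{2}$. Rearranging gives a bound on $\frac{1}{K}\sum_k \mathbb{E}\|\nabla f(\rvx_k)\|$ of the form $\frac{\sqrt{2d}}{\sqrt{K}}\big(\frac{f(\rvx_0)-f^\star}{\alpha_0} + \frac{L\alpha_0}{2}\big)$; setting this $\leq \epsilon$ and solving for $K$ yields exactly the stated $K \geq \frac{2d}{\epsilon^2}\big(\frac{f(\rvx_0)-f^\star}{\alpha_0} + \frac{L\alpha_0}{2}\big)^2$, and since $\min_k \mathbb{E}\|\nabla f(\rvx_k)\|$ (or the expectation over a randomly chosen iterate) is at most the average, this certifies an $\epsilon$-stationary point in $\mathcal{O}(d/\epsilon^2)$ queries.

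For \textbf{Option 2} (dynamic step size $\alpha_k = |\gamma_k|/(Ld)$), the argument proceeds per-iteration rather than by averaging. I would first establish, via Taylor expansion of $f(\rvx_k \pm \rho\rvs_k)$ to second order and the $L$-smoothness bound on the remainder, that $|\gamma_k| = |\nabla f(\rvx_k)^T\rvs_k| + \delta_k$ with $|\delta_k| \leq \frac{L\rho}{2}\|\rvs_k\|^2/(\ldots)$ — more precisely the error estimate referenced as \cref{append_eq_lr_error}, giving $|\delta_k| \leq \rho/2$ so that $\alpha_k$ approximates the worst-case optimal step $\alpha_k^{\mathrm{opt}} = |\nabla f(\rvx_k)^T\rvs_k|/(Ld)$. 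Substituting $\alpha_k$ into the descent lemma, the key algebraic fact is that with $\alpha_k = |\gamma_k|/(Ld)$ the per-step decrease is controlled by roughly $-\frac{|\nabla f(\rvx_k)^T\rvs_k|^2}{2Ld} + O(\rho^2/(Ld))$; taking expectation over $\rvs_k$ and invoking Lemma~\ref{lemma_sign-g_sign-s} to lower-bound $\mathbb{E}|\nabla f(\rvx_k)^T\rvs_k|^2 \geq \frac{1}{2}\|\nabla f(\rvx_k)\|^2$ — or equivalently the direct computation $\mathbb{E}|\langle\rvg,\rvs\rangle|^2 = \|\rvg\|^2$ for Rademacher $\rvs$ — yields $\mathbb{E}[f(\rvx_{k+1}) - f(\rvx_k)\mid\rvx_k] \leq -\frac{1}{4Ld}(\|\nabla f(\rvx_k)\|^2 - \frac{\rho^2}{2})$ (the $\rho^2/2$ absorbing the approximation error). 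Then, assuming $\|\nabla f(\rvx_k)\| > \epsilon$ for all $k < K$, summing and telescoping gives $f(\rvx_0) - f^\star \geq \frac{K}{4Ld}(\epsilon^2 - \frac{\rho^2}{2})$, i.e. $K \leq \frac{4Ld(f(\rvx_0)-f^\star)}{\epsilon^2 - \rho^2/2}$, contradicting the stated lower bound on $K$; hence some iterate must satisfy $\|\nabla f(\rvx_k)\| \leq \epsilon$, and with $\rho$ chosen small (e.g. $\rho = \sqrt{2}\epsilon/(Ld)$) the denominator stays positive and bounded away from zero.

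The main obstacle I anticipate is the \textbf{error analysis for the dynamic step size in Option 2}: cleanly bounding $|\gamma_k - |\nabla f(\rvx_k)^T\rvs_k||$ and then propagating that error through the quadratic descent bound without losing the essential $-\Omega(\|\nabla f\|^2/(Ld))$ decrease requires care about the interplay between $\rho$ and the admissible range of $\alpha_k$ (one needs $\alpha_k$ to stay within $[0, \sqrt{2}\|\nabla f(\rvx_k)\|/(Ld)]$ for the descent to hold, which is why the constraint $\rho_k \in (0, \sqrt{2}\|\nabla f(\rvx_k)\|/(Ld)]$ appears). The Option 1 part is routine telescoping; the subtlety is entirely in making the Option 2 approximation argument rigorous while keeping the constants clean enough to match the claimed $K$ bounds. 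I would also need to be slightly careful about whether "returns an $\epsilon$-stationary point in expectation" means the best iterate, a uniformly random iterate, or the last iterate — the telescoping argument most naturally bounds $\min_k \mathbb{E}\|\nabla f(\rvx_k)\|$ or the average, so I would phrase the conclusion in terms of the existence of a good iterate among the $K$ produced.
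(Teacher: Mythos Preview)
Your proposal is correct and follows essentially the same route as the paper: for Option 1 you telescope the descent inequality with the fixed $\alpha=\alpha_0/\sqrt{Kd}$ and bound the average gradient norm, and for Option 2 you bound the approximation error $|\delta_k|=|\alpha_k-\alpha_k^{\mathrm{opt}}|\leq \rho/2$, substitute $\alpha_k=\alpha_k^{\mathrm{opt}}+\delta_k$ into the quadratic descent bound (where the linear-in-$\delta_k$ terms cancel exactly, leaving only a $\frac{Ld}{2}\delta_k^2$ penalty---note this is $O(Ld\rho^2)$, not $O(\rho^2/(Ld))$ as you wrote), and telescope. The paper phrases the conclusion as ``there exists an iterate $j$'' rather than by contradiction, but the two are equivalent.
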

\begin{proof}
Using $L$-gradient Lipschitz, we have (descent lemma)
\begin{align}
    \mathbb{E}[f(\rvx_{k+1})|\rvx_{k}] &\leq f(\rvx_{k}) + \mathbb{E}[\nabla f(\rvx_{k})^{T}(\rvx_{k+1} - \rvx_{k})|\rvx_{k}] + \frac{L}{2}\mathbb{E}[||\rvx_{k+1} - \rvx_{k}||^{2}] \nonumber\\
    &= f(\rvx_{k}) -\alpha\mathbb{E}|\nabla f(\rvx_{k})^{T}\rvs_{k}| + \frac{L\alpha^{2}}{2}\mathbb{E}||\rvs_{k}||_{2}^{2} \nonumber\\
    &= f(\rvx_{k}) -\alpha\mathbb{E}|\nabla f(\rvx_{k})^{T}\rvs_{k}| + \frac{L\alpha^{2}d}{2} \label[ineq]{append_l0_inequality_00}
\end{align}

\textbf{Option 1. Stationary step size} 

Lemma~\ref{append_proof_eq_expectation} shows that 
$\mathbb{E}_{\rvs_{k}\sim \mathcal{R}}|\nabla f(\rvx_{k})^{T}\rvs_{k}| \geq \frac{1}{\sqrt{2}}||\nabla f(\rvx_{k})||_{2}$, then~\cref{append_l0_inequality_00} can be reformulated as
\begin{align*}
\mathbb{E}[f(\rvx_{k+1}) |\rvx_{k}] &\leq f(\rvx_{k}) -\frac{\alpha}{\sqrt{2}}||\nabla f(\rvx_{k})||_{2} + \frac{L\alpha^{2}d}{2}
\end{align*}

Taking expectations in the above inequality w.r.t. $\rvs_{k}$ conditional on $\rvx_{k}$, and denoting $\theta_{k} = \mathbb{E}[f(\rvx_{k+1})]$ and $g_{k} = \mathbb{E}[||\nabla f(\rvx_{k})||_{2}]$, we have
\begin{align*}
    \theta_{k+1} &\leq \theta_{k} -\frac{\alpha g_{k}}{\sqrt{2}} + \frac{L\alpha^{2}d}{2}\\
    g_{k} &\leq \sqrt{2}(\frac{\theta_{k} - \theta_{k+1}}{\alpha} + \frac{L\alpha d}{2}) \\
    \sum_{k = 0}^{K}g_{k} &\leq \sqrt{2}(\frac{\theta_{0} - \theta_{k+1}}{\alpha} + \frac{KL\alpha d}{4})
\end{align*}
We can conclude that there exists an iteration $j\sim [0, K]$ such that
\begin{align*}
    g_{j} &\leq \sqrt{2}(\frac{\theta_{0} - \theta_{k+1}}{\alpha K} + \frac{L\alpha d}{2}) \\
    g_{j} &\leq \sqrt{2}(\frac{(f(\rvx_{0}) - f^{\star})\sqrt{Kd}}{\alpha_{0} K} + \frac{L\alpha_{0}\sqrt{d}}{2\sqrt{K}}) \;\;(\alpha = \frac{\alpha_{0}}{\sqrt{Kd}})\\
    g_{j} &\leq \frac{\sqrt{2d}}{\sqrt{K}}(\frac{(f(\rvx_{0}) - f^{\star})}{\alpha_{0}} + \frac{L\alpha_{0}}{2})
\end{align*}
Then let $\frac{\sqrt{2d}}{\sqrt{K}}(\frac{(f(\rvx_{0}) - f^{\star})}{\alpha_{0}} + \frac{L\alpha_{0}}{2}) \leq \epsilon$, we have
\begin{align*}
    K \geq \frac{2d}{\epsilon^{2}}(\frac{(f(\rvx_{0}) - f^{\star})}{\alpha_{0}} + \frac{L\alpha_{0}}{2})^{2},
\end{align*}
, which completes the proof for option 1.

\textbf{Option 2. Dynamic step size} 

Taking expectations in the above~\cref{append_l0_inequality_00} w.r.t. $\rvs_{k}$ conditional on $\rvx_{k}$, and denoting $\theta_{k} = \mathbb{E}[f(\rvx_{k+1})]$, we have
\begin{align}
    \theta_{k+1} \leq  \theta_{k} -\alpha|\nabla f(\rvx_{k})^{T}\rvs_{k}| + \frac{L\alpha^{2}d}{2} \label[ineq]{append_descent_query}
\end{align}
We know that the best $\alpha_{k}^{opt} = \frac{|\nabla f(\rvx_{k})^{T}\rvs_{k}|}{Ld}$, and we can approximate the best step size with $\alpha_{k} =  \frac{|f(\rvx + \rho\rvs_{k}) - f(\rvx - \rho\rvs_{k})|}{2\rho Ld}$ (or $\alpha_{k} = \alpha_{0} \frac{|f(\rvx + \rho\rvs_{k}) - f(\rvx - \rho\rvs_{k})|}{2\rho}$ where $\alpha_{0} = \frac{1}{Ld}$) where $\rho$ is a scalar.

Before continuing working on the~\cref{append_descent_query}, we estimate the error between the best step size and the approximated step size, $|\delta_{k}| := |\alpha_{k} - \alpha_{k}^{opt}|$, firstly.
\begin{align}
|\delta_{k}| &= \frac{1}{2\rho Ld}\big||f(\rvx + \rho\rvs_{k}) - f(\rvx - \rho\rvs_{k})| - 2\rho|\nabla f(\rvx_{k})^{T}\rvs_{k}|\big|\nonumber\\
&\leq \frac{1}{2\rho Ld}|f(\rvx + \rho\rvs_{k}) - f(\rvx - \rho\rvs_{k}) - 2\rho\nabla f(\rvx_{k})^{T}\rvs_{k}| \label[ineq]{append_eq_revsrse_tri}\\
&= \frac{1}{2\rho Ld}|(f(\rvx + \rho\rvs_{k}) - f(\rvx) -\rho\nabla f(\rvx_{k})^{T}\rvs_{k}) - (f(\rvx - \rho\rvs_{k}) - f(\rvx) + \rho\nabla f(\rvx_{k})^{T}\rvs_{k})| \nonumber\\
&\leq \frac{1}{2\rho Ld}(\frac{L}{2}\rho^{2} ||\rvs_{k}||^{2} + \frac{L}{2}\rho^{2} ||\rvs_{k}||^{2}) \label[ineq]{append_eq_l_smoothness_equ}\\
&\leq \frac{\rho}{2} \label[ineq]{append_eq_lr_error}
\end{align}
Note that~\cref{append_eq_revsrse_tri} applied reverse triangle inequality and~\cref{append_eq_l_smoothness_equ} applied the equivalent definitions of $L$-smooth function $|f(\rvx + \rho\rvs_{k}) - f(\rvx) -\rho\nabla f(\rvx_{k})^{T}\rvs_{k}|\leq \frac{L}{2}||\rho\rvs_{k}||^{2}$.

Suppose we do take $\alpha_{k} = \frac{|f(\rvx + \rho\rvs_{k}) - f(\rvx - \rho\rvs_{k})|}{2\rho Ld}$ and substitute $\alpha_{k} = \alpha_{k}^{opt} + \delta_{k}$, \cref{append_descent_query} can be reformulated as
\begin{align}
    \theta_{k+1} &\leq  \theta_{k} - (\alpha_{k}^{opt} + \delta_{k})|\nabla f(\rvx_{k})^{T}\rvs_{k}| + \frac{L(\alpha_{k}^{opt} + \delta_{k})^{2}d}{2}\nonumber\\
    &=  \theta_{k} -  \frac{|\nabla f(\rvx_{k})^{T}\rvs_{k}|^{2}}{Ld} - \delta_{k}|\nabla f(\rvx_{k})^{T}\rvs_{k}|+  \frac{|\nabla f(\rvx_{k})^{T}\rvs_{k}|^{2}}{2Ld} + \delta_{k}|\nabla f(\rvx_{k})^{T}\rvs_{k}| + \frac{Ld \delta_{k}^{2}}{2}\nonumber\\
    &= \theta_{k} -\frac{|\nabla f(\rvx_{k})^{T}\rvs_{k}|^{2}}{2Ld} + \frac{Ld \delta_{k}^{2}}{2}\nonumber\\
    &\leq \theta_{k} -\frac{|\nabla f(\rvx_{k})^{T}\rvs_{k}|^{2}}{2Ld} + \frac{Ld \rho^{2}}{8}\;\;\text{Apply~\cref{append_eq_lr_error}}\nonumber\\
    &\leq \theta_{k} -\frac{||\nabla f(\rvx_{k})||^{2}}{4Ld} + \frac{Ld \rho^{2}}{8}\label[ineq]{append_rho_descent}\;\;\;\;\text{Apply Lemma~\ref{append_proof_eq_expectation}}
\end{align}
Note that it actually put requirement on $\rho$ to guarantee convergence, i.e., for $\rho_{k}$ in each iterations, we need $0<\rho\leq \frac{\sqrt{2}||\nabla f(\rvx_{k})||}{Ld}$.

Continually,~\cref{append_rho_descent} further can be re-formulated as
\begin{align*}
||\nabla f(\rvx_{k})||^{2} &\leq 4Ld(\theta_{k} - \theta_{k+1}) + \frac{\rho^{2}}{2}\\
\sum_{k=0}^{K} ||f(\rvx_{k})||^{2} &\leq 4Ld(\theta_{0} - \theta_{k+1}) + \frac{K\rho^{2}}{2}
\end{align*}
We can conclude that there exists an iteration $j\sim [0, K]$ such that
\begin{align*}
||f(\rvx_{j})||^{2} \leq \frac{4Ld(\theta_{0} - \theta_{k+1})}{K} + \frac{\rho^{2}}{2}\leq \frac{4Ld(f(\rvx_{0})  - f^{\star})}{K} + \frac{\rho^{2}}{2}
\end{align*}
which further concludes that we need 
\begin{align}
K \geq \frac{4Ld (f(\rvx_{0}) - f^{\star})}{\epsilon^{2} - \frac{\rho^{2}}{2}},
\end{align} 
iterations to reach $\epsilon$-first-order stationary point ($||f(\rvx_{j})|| \leq \epsilon$).

Meanwhile, we require that $0<\rho_{k} \leq \frac{\sqrt{2}||\nabla f(\rvx_{k})||}{Ld}$ for $\rho_{k}$ in each iterations, and it can be set to a small value universally.
E.g., $0<\rho \leq \frac{\sqrt{2}\epsilon}{Ld}$, then we have
$K \geq \frac{4Ld (f(\rvx_{0}) - f^{\star})}{\epsilon^{2}(1- \frac{1}{L^{2}d^{2}})}$.

Then, we can safely conclude that the algorithm returns in expectation an $\epsilon$-first-order stationary point in $\mathcal{O}(\frac{d}{\epsilon^{2}})$ function evaluations, which completes the proof for option 2.
\end{proof}

\section{Convergence analysis under the relaxed smoothness assumption}
\label{append_relaxed}

\subsection{Progressive bound of S2P}
\label{append_subsection_relaxed_progressive}
\begin{lemma}
\label{append_lemma_l0l1_progress}
(Lemma~\ref{lemma_l0l1_progress}) (Progressive bound) Suppose objective function $f(\cdot)$ satisfies Assumption~\ref{ass_l0l1} and $||\nabla f(\rvx_{k})||_{2}\geq \epsilon_{g}$. If we run algorithm~\ref{alg_algorithm_step_size} with step size $\alpha = \frac{\sqrt{2}\epsilon_{g}}{2(AL_{0} + BL_{1}\epsilon_{g})d}$, we have following progressive bound
$
\mathbb{E}[f(\rvx_{k+1}) - f(\rvx_{k})|\rvx_{k}] \leq -\Omega(\frac{\epsilon_{g}^{2}}{(AL_{0} + BL_{1}\epsilon_{g})d}),
$ where $\mathbb{E}[\cdot|\rvx_{k}]$ denotes the conditional expectation w.r.t. $\rvx_{k}$, and constants $A = 1.01, B = 1.01$.
\end{lemma}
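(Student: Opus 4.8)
The plan is to mirror the general-smoothness argument of Lemma~\ref{lemma_progress}, but replace the quadratic upper bound coming from $L$-gradient Lipschitzness with the descent inequality of Lemma~\ref{lemma_l0l1_descent}. First I would fix the constant $c>0$ appearing in Lemma~\ref{lemma_l0l1_descent} to be a small universal constant chosen so that $A(c)=1+e^{c}-\frac{e^{c}-1}{c}\le 1.01$ and $B(c)=\frac{e^{c}-1}{c}\le 1.01$; this is possible because $A(\cdot),B(\cdot)$ are continuous, monotonically increasing, and tend to $1$ as $c\to 0^{+}$. Since enlarging the quadratic coefficient only weakens the inequality, I may then use $A=B=1.01$ throughout. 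The update produces $\rvx_{k+1}\in\{\rvx_k+\alpha\rvs_k,\ \rvx_k-\alpha\rvs_k\}$ with $\rvs_k\sim\mathcal{R}$, so $\|\rvx_{k+1}-\rvx_k\|=\alpha\|\rvs_k\|=\alpha\sqrt{d}$; substituting $\alpha=\frac{\sqrt 2\,\epsilon_g}{2(AL_0+BL_1\epsilon_g)d}$ gives $\alpha\sqrt d\le\frac{1}{\sqrt{2d}\,BL_1}$, which lies below the trust-region radius $c/L_1$ in the regime of interest, so Lemma~\ref{lemma_l0l1_descent} applies to $\rvx_k$ and either of $\rvx_k\pm\alpha\rvs_k$.

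Next I would apply the descent inequality to each of $\rvx_k\pm\alpha\rvs_k$, take the minimum of the two (since $\rvx_{k+1}$ is the $\argmin$), and use $\|\rvs_k\|^2=d$ to obtain
\[
f(\rvx_{k+1})\le f(\rvx_k)-\alpha\,|\langle\nabla f(\rvx_k),\rvs_k\rangle|+\tfrac{AL_0+BL_1\|\nabla f(\rvx_k)\|}{2}\,\alpha^2 d .
\]
Taking the conditional expectation over $\rvs_k$ and invoking Lemma~\ref{lemma_sign-g_sign-s} on the first term (note the second-order coefficient does not depend on $\rvs_k$) yields
\[
\mathbb{E}[f(\rvx_{k+1})-f(\rvx_k)\mid\rvx_k]\le -\tfrac{\alpha}{\sqrt 2}\,\|\nabla f(\rvx_k)\|+\tfrac{AL_0+BL_1\|\nabla f(\rvx_k)\|}{2}\,\alpha^2 d .
\]

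Finally I would substitute the prescribed $\alpha$ and simplify. Writing $g_k:=\|\nabla f(\rvx_k)\|\ge\epsilon_g$ and $M:=AL_0+BL_1\epsilon_g$, the right-hand side equals $\frac{\epsilon_g}{Md}\big(-\frac{g_k}{2}+\frac{(AL_0+BL_1 g_k)\epsilon_g}{4M}\big)$. The one genuinely non-routine point — and the main obstacle — is that the second-order term carries the \emph{true} gradient norm $g_k$, which is unbounded, rather than the lower bound $\epsilon_g$ to which the step size is tuned; a naive substitution therefore does not close. This is resolved by the elementary inequality $(AL_0+BL_1 g_k)\epsilon_g\le g_k(AL_0+BL_1\epsilon_g)=g_k M$, which holds precisely because $g_k\ge\epsilon_g$ (the difference of the two sides is $AL_0(g_k-\epsilon_g)\ge0$); it gives $\frac{(AL_0+BL_1 g_k)\epsilon_g}{4M}\le\frac{g_k}{4}$, so the parenthesised quantity is at most $-\frac{g_k}{4}\le-\frac{\epsilon_g}{4}$, whence
\[
\mathbb{E}[f(\rvx_{k+1})-f(\rvx_k)\mid\rvx_k]\le-\frac{\epsilon_g^2}{4(AL_0+BL_1\epsilon_g)d}=-\Omega\!\left(\frac{\epsilon_g^2}{(AL_0+BL_1\epsilon_g)d}\right),
\]
as claimed. (A minor bookkeeping point is that the prescribed $\alpha$ is, up to the harmless factor $\tfrac12$, the minimiser over $\alpha$ of the displayed right-hand side after replacing $\|\nabla f(\rvx_k)\|$ by its known lower bound $\epsilon_g$, which explains its form and parallels the choice in Lemma~\ref{lemma_progress}.)
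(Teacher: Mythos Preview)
Your proposal is correct and follows the same route as the paper: apply the $(L_0,L_1)$ descent inequality (Lemma~\ref{lemma_l0l1_descent}) to $\rvx_k\pm\alpha\rvs_k$, take the minimum, use Lemma~\ref{lemma_sign-g_sign-s} on the linear term, substitute the prescribed $\alpha$, and verify the trust-region constraint $\alpha\sqrt d\le c/L_1$ for a small $c$ yielding $A,B\le 1.01$. In fact you are more careful than the paper at the one delicate point---the paper jumps directly from the post-Khintchine bound to the final $-\frac{\epsilon_g^{2}}{4(AL_0+BL_1\epsilon_g)d}$ without isolating the inequality $(AL_0+BL_1 g_k)\epsilon_g\le g_k(AL_0+BL_1\epsilon_g)$ that you correctly flag as the only non-routine step.
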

\begin{proof}
Give the decent lemma~\cref{l0l1_descent_inequality}, we have
\begin{align}
\mathbb{E}[f(\rvx_{k+1})] &\leq f(\rvx_{k}) - \alpha\mathbb{E}[\rvg_{k}^{T}\nabla f(\rvx_{k})] + \frac{AL_{0} + BL_{1}||\nabla f(\rvx_{k})||}{2}\mathbb{E}[\alpha^{2}||\rvg_{k}||^{2}]\nonumber\\
&= f(\rvx_{k}) - \alpha\mathbb{E}[|\rvs_{k}^{T}\nabla f(\rvx_{k})|]+ \frac{A L_{0} + BL_{1}||\nabla f(\rvx_{k})||}{2}\mathbb{E}[\alpha^{2}||\rvs_{k}||^{2}]\;\;\text{Updating step}\nonumber\\
&\leq f(\rvx_{k}) - \frac{\alpha}{\sqrt{2}}||\nabla f(\rvx_{k})|| + \alpha^{2}\frac{AL_{0} + BL_{1}||\nabla f(\rvx_{k})||}{2}d \;\;\text{Lemma~\ref{append_proof_eq_expectation}}\label{append_eq_l0l1_decent}
\end{align}
Suppose $||\nabla f(\rvx_{k})|| \geq \epsilon_{g}$, and to guarantee convergence $\alpha \in [0, \frac{\sqrt{2}\epsilon_{g}}{(AL_{0} + BL_{1}\epsilon_{g})d}]$. Let $\alpha = \frac{\sqrt{2}\epsilon_{g}}{2(AL_{0} + BL_{1}\epsilon_{g})d}$, we have
\begin{align*}
\mathbb{E}[f(\rvx_{k+1})] &\leq f(\rvx_{k}) - \frac{\epsilon_{g}^{2}}{4(AL_{0} + BL_{1}\epsilon_{g})d}.
\end{align*}
which completes the proof.

Note that for the specific value of $A$ and $B$, we have $A = 1+e^{c}-\frac{e^{c}-1}{c}, B = \frac{e^{c} - 1}{c}$ and $||\rvx_{k+1} - \rvx_{k}|| = ||\alpha\rvs_{k}|| = \frac{\sqrt{2}\epsilon_{g}}{2(AL_{0} + BL_{1}\epsilon_{g})\sqrt{d}}  \leq \frac{c}{L_{1}}\rightarrow c \geq \frac{\sqrt{2}L_{1}\epsilon_{g}}{2(AL_{0} + BL_{1}\epsilon_{g})\sqrt{d}} \rightarrow c \geq \frac{1}{\sqrt{2d}B} \rightarrow e^{c} \geq 1 + \frac{1}{\sqrt{2d}}$. 
It is easy to see that such $c$ exists, we can safely consider $A = 1.01, B= 1.01$ for simplicity (under large $d$) since $A$ and $B$ are expected to be small values.
\end{proof}

\subsection{Query complexity of S2P}
\label{append_subsection_relaxed_query_complexsity}
\begin{theorem}
\label{append_theorem_l0l1_stationary}
(Theorem~\ref{theorem_l0l1_query_complexsity}) (Query complexity) Suppose objective function $f(\cdot)$ satisfies Assumption~\ref{ass_l0l1}. If we run algorithm~\ref{alg_algorithm_step_size} with step size strategy options 3 or 4, the algorithm returns in expectation an $\epsilon$-first-order stationary point in $\mathcal{O}(\frac{d}{\epsilon^{2}})$ function evaluations.
\end{theorem}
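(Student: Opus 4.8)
The plan is to follow the structure of the general-smoothness proof (Theorem~\ref{append_theorem_stationary}), replacing the $L$-gradient-Lipschitz descent lemma by the relaxed descent inequality~\cref{l0l1_descent_inequality} and carrying the extra $BL_1||\nabla f(\rvx_k)||$ term. Since the S2P update $\rvx_{k+1} = \argmin\{f(\rvx_k + \alpha_k\rvs_k), f(\rvx_k - \alpha_k\rvs_k)\}$ satisfies $f(\rvx_{k+1}) \le f(\rvx_k \pm \alpha_k\rvs_k)$, applying~\cref{l0l1_descent_inequality} to both signs and keeping the smaller value gives, whenever $\alpha_k\sqrt d \le c/L_1$ and using $||\rvs_k||^2 = d$,
\[
\mathbb{E}[f(\rvx_{k+1})\mid\rvx_k] \le f(\rvx_k) - \mathbb{E}\big[\alpha_k|\rvs_k^{T}\nabla f(\rvx_k)|\mid\rvx_k\big] + \tfrac{AL_0 + BL_1||\nabla f(\rvx_k)||}{2}\,d\,\mathbb{E}[\alpha_k^{2}\mid\rvx_k].
\]
For both step-size options $\alpha_k\sqrt d$ is $O(1/\sqrt{dK})$ (Option 3) or $O(1/(L_1 d))$ (Option 4), so a suitably small $c$ — hence $A$ and $B$ arbitrarily close to $1$ once $K$ or $d$ is large — makes the inequality applicable; this is the computation already carried out inside the proof of Lemma~\ref{append_lemma_l0l1_progress} and is the source of the constants $A = B = 1.01$.

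\textbf{Option 3 (stationary step size).} Take $\alpha = \sqrt2/(BL_1\sqrt{dK})$. By Lemma~\ref{append_lemma_sign-g_sign-s}, $\mathbb{E}|\rvs_k^{T}\nabla f(\rvx_k)| \ge \tfrac1{\sqrt2}||\nabla f(\rvx_k)||$, so conditioning on $\rvx_k$ and then taking full expectation gives $\big(\tfrac{\alpha}{\sqrt2} - \tfrac{BL_1\alpha^{2}d}{2}\big)\,\mathbb{E}||\nabla f(\rvx_k)|| \le \mathbb{E}[f(\rvx_k)] - \mathbb{E}[f(\rvx_{k+1})] + \tfrac{AL_0\alpha^{2}d}{2}$. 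The chosen $\alpha$ reduces the three coefficients to $\tfrac1{BL_1\sqrt{dK}}$, $\tfrac1{BL_1 K}$ and $\tfrac{AL_0}{B^{2}L_1^{2}K}$; summing over $k = 0,\dots,K-1$ and telescoping then yields $\min_k \mathbb{E}||\nabla f(\rvx_k)|| \le \frac{\sqrt d\,\big(BL_1(f(\rvx_0)-f^{\star}) + AL_0/(BL_1)\big)}{\sqrt K - \sqrt d}$, where the $\sqrt K - \sqrt d$ in the denominator forces $K > d$. Requiring this to be $\le\epsilon$ and solving for $K$ gives the stated $K \ge \big(\sqrt d + \tfrac{AL_0\sqrt d + BL_1(f(\rvx_0)-f^{\star})\sqrt d}{\epsilon}\big)^{2}$, which is $O(d/\epsilon^{2})$.

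\textbf{Option 4 (dynamic step size).} Here $\alpha_k = \frac{|\gamma_k|}{(AL_0+\sqrt2 BL_1|\gamma_k|)d}$ approximates the per-step optimum $\alpha_k^{\mathrm{opt}} = \frac{|\rvs_k^{T}\nabla f(\rvx_k)|}{(AL_0+\sqrt2 BL_1|\rvs_k^{T}\nabla f(\rvx_k)|)d}$. First one bounds $|\delta_k| := |\alpha_k - \alpha_k^{\mathrm{opt}}|$: a symmetric finite difference cancels the second-order Taylor term, so $\big||\gamma_k| - |\rvs_k^{T}\nabla f(\rvx_k)|\big| = O(\rho^{2}||\rvs_k||^{3}) = O(\xi\rho^{2}d^{3/2})$ with $\xi$ the third-order constant, and Lipschitzness of $\gamma\mapsto\alpha_k$ propagates this to $|\delta_k| \le \xi\rho^{2}d^{3/2}$. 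Writing $\alpha_k = \alpha_k^{\mathrm{opt}} + \delta_k$ in the descent bound cancels the $\delta_k$-linear terms by the first-order optimality of $\alpha_k^{\mathrm{opt}}$, leaving an $O(\delta_k^{2}) = O(\xi^{2}\rho^{4}d^{3})$ slack, which the ``requirement over $\rho$'' column of Table~\ref{table1_l0l1_dynamic_summary} is designed to render negligible. Taking expectation over $\rvs_k$, using $\mathbb{E}|\rvs_k^{T}\nabla f(\rvx_k)|^{2} = ||\nabla f(\rvx_k)||^{2}$ (exact for Rademacher) together with Lemma~\ref{append_lemma_sign-g_sign-s}, reproduces the per-step decrease of Lemma~\ref{append_lemma_l0l1_progress}, of order $\frac{||\nabla f(\rvx_k)||^{2}}{(AL_0+\sqrt2 BL_1||\nabla f(\rvx_k)||)d}$. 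The four rows of Table~\ref{table1_l0l1_dynamic_summary} then follow from a case split on the sign of $L_1 - \tfrac1{\sqrt2 B}$ and on whether $||\nabla f(\rvx_k)||$ exceeds $\tfrac{AL_0}{|1-\sqrt2 BL_1|}$: when the gradient is below the threshold, $AL_0$ dominates the denominator and the decrease is $\Omega(\epsilon^{2}/(AL_0 d))$, yielding the $O(AL_0 d(f(\rvx_0)-f^{\star})/\epsilon^{2})$ rows; when it exceeds the threshold with $L_1 \le \tfrac1{\sqrt2 B}$ the decrease is $\Omega(1/d)$ per step, giving the faster $O(d(f(\rvx_0)-f^{\star})/\epsilon)$ row; and in the last case ($L_1 \ge \tfrac1{\sqrt2 B}$ with large gradient) the coefficient controlling the reduction of $||\nabla f||$ can turn non-positive, so one certifies only the guaranteed decrease of $f$ itself (footnote [b]), still valid by the progressive bound, and shows that the step-size rule keeps the iterates safely inside the highly non-smooth region until $f$ has fallen to within $\epsilon$ of $f^{\star}$. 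Every case is $O(d/\epsilon^{2})$ or better, so combining them proves the claim.

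\textbf{Main obstacle.} Option~3 is a routine telescoping argument once~\cref{l0l1_descent_inequality} is available. The real difficulty is Option~4: controlling the finite-difference error $|\delta_k|$ through the third-order constant $\xi$ and pushing it through the nonlinear step-size map without losing the main descent (which is what dictates how small $\rho$ must be), and the fourth case, where the ``gradient-norm decreases'' argument breaks down entirely and has to be replaced by a function-value argument showing the dynamic step size safely traverses the non-smooth region — exactly the phenomenon the relaxed-smoothness analysis is built to handle.
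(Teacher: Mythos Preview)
Your proposal is correct and follows essentially the same approach as the paper's proof: for Option~3 you apply the relaxed descent inequality~\cref{l0l1_descent_inequality}, invoke Lemma~\ref{append_lemma_sign-g_sign-s}, isolate the gradient term, telescope, and substitute $\alpha = \sqrt{2}/(BL_1\sqrt{dK})$ to obtain exactly the $\sqrt K - \sqrt d$ denominator and the stated bound; for Option~4 you identify the per-step optimizer $\alpha_k^{\mathrm{opt}}$, bound $|\delta_k|$ via the symmetric finite-difference Taylor argument and Lipschitzness of the step-size map, exploit the quadratic cancellation of the $\delta_k$-linear terms, and carry out the four-way case split on $L_1$ versus $1/(\sqrt2 B)$ and on $||\nabla f(\rvx_k)||$ versus $AL_0/|1-\sqrt2 BL_1|$, including the function-value (rather than gradient-norm) argument in the last case. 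Every ingredient you list --- the $\xi\rho^{2}d^{3/2}$ error, the $\rho$-requirements feeding Table~\ref{table1_l0l1_dynamic_summary}, and the $A=B=1.01$ computation --- appears in the paper in the same role.
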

\begin{proof}
Give the decent lemma~\cref{l0l1_descent_inequality}, we have
\begin{align}
\mathbb{E}[f(\rvx_{k+1})] &\leq f(\rvx_{k}) - \alpha\mathbb{E}[\rvg_{k}^{T}\nabla f(\rvx_{k})] + \frac{A L_{0} + BL_{1}||\nabla f(\rvx_{k})||}{2}\mathbb{E}[\alpha^{2}||\rvg_{k}||^{2}]\nonumber\\
&= f(\rvx_{k}) - \alpha\mathbb{E}[|\rvs_{k}^{T}\nabla f(\rvx_{k})|]+ \alpha^{2}\frac{A L_{0} + BL_{1}||\nabla f(\rvx_{k})||}{2}\mathbb{E}[||\rvs_{k}||^{2}]\;\;\text{Updating step}\label[ineq]{append_l0l1_inequality_00}
\end{align}

\textbf{Option 1. Stationary step size} 

Lemma~\ref{append_proof_eq_expectation} shows that 
$\mathbb{E}_{\rvs_{k}\sim \mathcal{R}}|\nabla f(\rvx_{k})^{T}\rvs_{k}| \geq \frac{1}{\sqrt{2}}||\nabla f(\rvx_{k})||_{2}$, then~\cref{append_l0l1_inequality_00} can be reformulated as
\begin{align*}
\mathbb{E}[f(\rvx_{k+1})] &\leq f(\rvx_{k}) - \frac{\alpha}{\sqrt{2}}||\nabla f(\rvx_{k})|| + \alpha^{2}\frac{A L_{0} + BL_{1}||\nabla f(\rvx_{k})||}{2}d
\end{align*}

Taking expectations in the above inequality w.r.t. $\rvs_{k}$ conditional on $\rvx_{k}$, and denoting $\theta_{k} = \mathbb{E}[f(\rvx_{k+1})]$ and $g_{k} = \mathbb{E}[||\nabla f(\rvx_{k})||]$, we have
\begin{align*}
    \theta_{k+1} \leq \theta_{k} -
    \frac{\alpha}{\sqrt{2}}g_{k} + \alpha^{2}\frac{A L_{0} + BL_{1}g_{k}}{2}d\\
    g_{k}(\frac{\sqrt{2}\alpha - B\alpha^{2}L_{1}d}{2}) \leq \theta_{k} - \theta_{k+1} + \frac{A\alpha^{2}L_{0}d}{2}\\
    g_{k}\leq \frac{2(\theta_{k} - \theta_{k+1})}{\sqrt{2}\alpha - B\alpha^{2}L_{1}d} + \frac{A\alpha^{2}L_{0}d}{\sqrt{2}\alpha - B\alpha^{2}L_{1}d}\\
    \sum_{k=0}^{K}g_{k} \leq \frac{2(\theta_{0} - \theta_{k+1})}{\sqrt{2}\alpha - B\alpha^{2}L_{1}d} + \frac{KA\alpha^{2}L_{0}d}{\sqrt{2}\alpha - B\alpha^{2}L_{1}d}
\end{align*}
We can conclude that there exists an iteration $j\sim [0, K]$ such that
\begin{align}
    g_{j} &\leq \frac{2(\theta_{0} - \theta_{K+1})}{(\sqrt{2}\alpha - B\alpha^{2}L_{1}d)K} + \frac{A\alpha^{2}L_{0}d}{\sqrt{2}\alpha - B\alpha^{2}L_{1}d}\nonumber\\
    &\leq \frac{2(f(\rvx_{0}) - f^{\star})}{(\sqrt{2}\alpha - B\alpha^{2}L_{1}d)K} + \frac{A\alpha^{2}L_{0}d}{\sqrt{2}\alpha - B\alpha^{2}L_{1}d} \label[ineq]{append_l0l1_grad_00}
\end{align}
Suppose $\alpha = \frac{\sqrt{2}}{BL_{1}\sqrt{dK}}$, \cref{append_l0l1_grad_00} can be reformulated as
\begin{align*}
    g_{j} &\leq \frac{B(f(\rvx_{0}) - f^{\star})L_{1}\sqrt{d}}{\sqrt{K} - \sqrt{d}} + \frac{AL_{0}\sqrt{d}}{BL_{1}(\sqrt{K} - \sqrt{d})}.
\end{align*}
Under this setting, we can see that the $g_{j}$ can be continually decreased with at least $K > d$, which further shows that it need
\begin{align*}
    K &\geq (\sqrt{d} + \frac{AL_{0}\sqrt{d} + BL_{1}(f(\rvx_{0}) - f^{\star})\sqrt{d}}{\epsilon})^{2}
\end{align*}
iterations to reach $\epsilon$-first-order stationary point.
Then, we can safely conclude that the algorithm returns in expectation an $\epsilon$-first-order stationary point in $\mathcal{O}(\frac{d}{\epsilon^{2}})$ function evaluations, which completes the proof for option 1.

Note that for the specific value of $A$ and $B$, we have $A = 1+e^{c}-\frac{e^{c}-1}{c}, B = \frac{e^{c} - 1}{c}$ and $||\rvx_{k+1} - \rvx_{k}|| = ||\alpha\rvs_{k}|| = \frac{\sqrt{2}}{BL_{1}\sqrt{K}}  \leq \frac{c}{L_{1}}\rightarrow c \geq \frac{\sqrt{2}}{B\sqrt{K}}\rightarrow e^{c} \geq 1 + \sqrt{\frac{2}{K}}$.
It is easy to see that such $c$ exists, we can safely consider $A = 1.01, B= 1.01$ for simplicity (under large $d$) since $A$ and $B$ are expected to be small values.

\textbf{Option 2. Dynamic step size} 

Taking expectations in the above~\cref{append_l0l1_inequality_00} w.r.t. $\rvs_{k}$ conditional on $\rvx_{k}$, and denoting $\theta_{k} = \mathbb{E}[f(\rvx_{k+1})]$, we have
\begin{align}
    \theta_{k+1} &\leq  \theta_{k} - \alpha|\rvs_{k}^{T}\nabla f(\rvx_{k})| + \alpha^{2}\frac{A L_{0} + BL_{1}||\nabla f(\rvx_{k})||}{2}d\nonumber\\
    &\leq  \theta_{k} - \alpha|\rvs_{k}^{T}\nabla f(\rvx_{k})| + \alpha^{2}\frac{A L_{0} + \sqrt{2}BL_{1}|\rvs_{k}^{T}\nabla f(\rvx_{k})|}{2}d.\label[ineq]{append_l0l1_dy_1}
\end{align}
It is easy to know that 
$\alpha^{opt}_{k} = \frac{|\rvs^{T}\nabla f(\rvx_{k})|}{(A L_{0} + \sqrt{2}BL_{1}|\rvs^{T}\nabla f(\rvx_{k})|)d}$. 
Let $|\gamma_{k}| = \frac{|f(\rvx_{k} + \rho\rvs_{k}) - f(\rvx_{k} - \rho\rvs_{k})|}{2\rho}$, and we approximate the best step size with  $\alpha_{k} = \frac{|\gamma_{k}|}{(A L_{0} + \sqrt{2}BL_{1}|\gamma_{k}|)d}$ and denote the approximation error as $|\delta_{k}|:= |\alpha_{k} - \alpha_{k}^{opt}|$. 

Before we continue working on the~\cref{append_l0l1_dy_1}, we derive the upper bound of $|\delta_{k}|$ for our following analysis.
Firstly, we denote $|\epsilon_{\rho}|:= \big||\rvs^{T}\nabla f(\rvx_{k})| - |\gamma_{k}|\big| =  \big||\rvs^{T}\nabla f(\rvx_{k})| - \frac{|f(\rvx_{k} + \rho\rvs_{k}) - f(\rvx_{k} - \rho\rvs_{k})|}{2\rho}\big| = \mathcal{O}(\rho^{2}d^{3/2}) $ (Taylor expansion). So that, we can define $|\epsilon_{\rho}| \leq \xi\rho^{2}d^{3/2}$ where $\xi$ is a constant associated with third-order property of $f$. Note $d^{3/2}$ is the compensation of normalizing $\rvs$.

Specifically, we try to prove $|\delta_{k}|\leq |\epsilon_{\rho}|$. We define a new function $g(x) = \frac{x}{AL_{0}+\sqrt{2}BL_{1}x}$, then to prove $|\delta_{k}|\leq |\epsilon_{\rho}|$ is equivalent to prove $|g(|\rvs^{T}\nabla f(\rvx_{k})|) - g(|\gamma_{k}|)| \leq d\big||\rvs^{T}\nabla f(\rvx_{k})| - |\gamma_{k}|\big|$, further it is equivalent to prove $g^{\prime}(x) = \frac{AL_{0}}{(AL_{0}+\sqrt{2}BL_{1}x)} \leq d$ when $x \geq 0$, which is obviously true. Overall, we have approximation error $|\delta_{k}| \leq \xi\rho^{2}d^{3/2}$.

Then, we continue our analysis. Suppose we do take step size $\alpha_{k} = \frac{|\gamma_{k}|}{(A L_{0} + \sqrt{2}BL_{1}|\gamma_{k}|)d}$ and substitute $\alpha_{k} = \alpha_{k}^{opt} + \delta_{k}$, then~\cref{append_l0l1_dy_1} can be re-formulate as 
\begin{align}
    \theta_{k+1} &\leq  \theta_{k} - (\alpha_{k}^{opt} + \delta_{k})|\rvs_{k}^{T}\nabla f(\rvx_{k})| + (\alpha_{k}^{opt} + \delta_{k})^{2}\frac{A L_{0} + \sqrt{2}BL_{1}|\rvs_{k}^{T}\nabla f(\rvx_{k})|}{2}d\nonumber\\
    & =  \theta_{k} - \frac{||\rvs^{T}\nabla f(\rvx_{k})||^{2}}{(A L_{0} + \sqrt{2}BL_{1}|\rvs^{T}\nabla f(\rvx_{k})|)d} \\
    &\;\;\;\;\;\;\;\;- |\rvs^{T}\nabla f(\rvx_{k})|\delta_{k} + \frac{||\rvs^{T}\nabla f(\rvx_{k})||^{2}}{2(A L_{0} + \sqrt{2}BL_{1}|\rvs^{T}\nabla f(\rvx_{k})|)d}\nonumber\\
    &\;\;\;\;\;\;\;\; + \frac{A L_{0} + \sqrt{2}BL_{1}|\rvs_{k}^{T}\nabla f(\rvx_{k})|}{2}d\delta_{k}^{2} + |\rvs^{T}\nabla f(\rvx_{k})|\delta_{k}\nonumber\\
    &\leq \theta_{k} - \frac{||\rvs^{T}\nabla f(\rvx_{k})||^{2}}{2(A L_{0} + \sqrt{2}BL_{1}|\rvs^{T}\nabla f(\rvx_{k})|)d} + \frac{(A L_{0} + \sqrt{2}BL_{1}|\rvs_{k}^{T}\nabla f(\rvx_{k})|)d}{2}\delta_{k}^{2}\nonumber\\
    &\leq \theta_{k} - \frac{||\nabla f(\rvx_{k})||^{2}}{4(A L_{0} + \sqrt{2}BL_{1}||\nabla f(\rvx_{k})||)d} + \frac{(A L_{0} + \sqrt{2}BL_{1}||\nabla f(\rvx_{k})||)d}{2}\delta_{k}^{2}\label[ineq]{append_l0l1_dynamic_00}\;\;\;\;\text{Lemma~\ref{append_proof_eq_expectation}}
\end{align}

\textbf{Condition 1}

Suppose $1 - \sqrt{2}BL_{1} \geq 0$ and $||\nabla f(\rvx_{k})|| \geq A L_{0} + \sqrt{2}BL_{1}||\nabla f(\rvx_{k})||$, ~\cref{append_l0l1_dynamic_00} can be reformulated as
\begin{align*}
    \theta_{k+1} &\leq \theta_{k} - \frac{||\nabla f(\rvx_{k})||}{4d} + \frac{||\nabla f(\rvx_{k})||d}{2}\delta_{k}^{2}
\end{align*}
Meanwhile, suppose $|\delta_{k}| \leq \xi\rho^{2}d^{3/2} \leq \frac{1}{2d}$, we have
\begin{align*}
     ||\nabla f(\rvx_{k})|| &\leq 8d(\theta_{k} - \theta_{k+1})\\
     \sum_{k=0}^{K} ||\nabla f(\rvx_{k})|| &\leq 8d(\theta_{0} - \theta_{k+1})
\end{align*}
We can conclude that there exists an iteration $j\sim [0, K]$ such that
\begin{align*}
    ||\nabla f(\rvx_{j})|| &\leq \frac{8d(\theta_{0} - \theta_{k+1})}{K}\\
    ||\nabla f(\rvx_{j})|| &\leq \frac{8d(f(\rvx_{0}) - f^{\star})}{K}
\end{align*}
which concludes that we need
\begin{align*}
    K \geq \frac{8d(f(\rvx_{0}) - f^{\star})}{\epsilon}
\end{align*}
iterations to reach $\epsilon$-first-order stationary point.    

\textbf{Condition 2}

Suppose $1 - \sqrt{2}BL_{1} \geq 0$ and $||\nabla f(\rvx_{k})|| \leq AL_{0} + \sqrt{2}BL_{1}||\nabla f(\rvx_{k})||$, we have $||\nabla f(\rvx_{k})||\leq \frac{AL_{0}}{1-\sqrt{2}BL_{1}}$.
Meanwhile, suppose $|\delta_{k}| \leq \xi\rho^{2}d^{3/2} \leq \frac{||\nabla f(\rvx_{k})||}{2(A L_{0} + \sqrt{2}BL_{1}||\nabla f(\rvx_{k})||)d}$, then~\cref{append_l0l1_dynamic_00} can be reformulated as
\begin{align*}
    \theta_{k+1} &\leq  \theta_{k} - \frac{||\nabla f(\rvx_{k})||^{2}}{8(A L_{0} + \sqrt{2}BL_{1}\frac{AL_{0}}{1-\sqrt{2}BL_{1}})d}\\
    ||\nabla f(\rvx_{k})||^{2} &\leq (\theta_{k} - \theta_{k+1})\frac{8AL_{0}d}{1 -\sqrt{2}BL_{1}}\\
    \sum_{k=0}^{K}||\nabla f(\rvx_{k})||^{2} &\leq (\theta_{0} - \theta_{k+1})\frac{8AL_{0}d}{1 -\sqrt{2}B L_{1}}
\end{align*}
We can conclude that there exists an iteration $j\sim [0, K]$ such that
\begin{align*}
    ||\nabla f(\rvx_{j})||^{2} &\leq \frac{8AL_{0}d(\theta_{0} - \theta_{k+1})}{(1 -\sqrt{2}BL_{1})K}\\
    ||\nabla f(\rvx_{j})|| &\leq \sqrt{\frac{8AL_{0}d(f(\rvx_{0}) - f^{\star})}{(1 -\sqrt{2}BL_{1})K}},
\end{align*}
which concludes that we need
\begin{align*}
    K \geq \frac{8AL_{0}d(f(\rvx_{0}) - f^{\star})}{(1 - \sqrt{2}BL_{1})\epsilon^{2}}
\end{align*}
iterations to reach $\epsilon$-first-order stationary point.

\textbf{Condition 3}

Suppose $1 - \sqrt{2}BL_{1} \leq 0$ and $||\nabla f(\rvx_{k})||^{2}\leq (\frac{AL_{0}}{1-\sqrt{2}BL_{1}})^{2}$.
Meanwhile, suppose $|\delta_{k}| \leq \xi\rho^{2}d^{3/2}  \leq \frac{||\nabla f(\rvx_{k})||}{2(A L_{0} + \sqrt{2}BL_{1}||\nabla f(\rvx_{k})||)d}$, then~\cref{append_l0l1_dynamic_00} can be reformulated as
\begin{align*}
    \theta_{k+1} &\leq  \theta_{k} - \frac{||\nabla f(\rvx_{k})||^{2}}{8(AL_{0} + \sqrt{2}BL_{1}|\frac{AL_{0}}{1-\sqrt{2}BL_{1}}|)d} \\
    ||\nabla f(\rvx_{k})||^{2} &\leq (\theta_{k} - \theta_{k+1})\frac{8AL_{0}d(2\sqrt{2}BL_{1} - 1)}{\sqrt{2}BL_{1} - 1}\\
    \sum_{k=0}^{K}||\nabla f(\rvx_{k})||^{2} &\leq (\theta_{0} - \theta_{k+1})\frac{8AL_{0}d(2\sqrt{2}BL_{1} - 1)}{\sqrt{2}BL_{1} - 1}
\end{align*}
We can conclude that there exists an iteration $j\sim [0, K]$ such that
\begin{align*}
    ||\nabla f(\rvx_{j})||^{2} &\leq \frac{8AL_{0}d(\theta_{0} - \theta_{k+1})(2\sqrt{2}BL_{1} - 1)}{(\sqrt{2}BL_{1} - 1)K}\\
    ||\nabla f(\rvx_{j})|| &\leq \sqrt{\frac{8AL_{0}d(f(\rvx_{0}) - f^{\star})(2\sqrt{2}BL_{1} - 1)}{(\sqrt{2}BL_{1} - 1)K}},
\end{align*}
which concludes that we need
\begin{align}
    K \geq \frac{8AL_{0}d(f(\rvx_{0}) - f^{\star})(2\sqrt{2}BL_{1} - 1)}{(\sqrt{2}BL_{1} - 1)\epsilon^{2}}
\end{align}
iterations to reach $\epsilon$-first-order stationary point.

\textbf{Condition 4}

Suppose $1 - \sqrt{2}BL_{1} \leq 0$ and $||\nabla f(\rvx_{k})||^{2}\geq (\frac{AL_{0}}{1-\sqrt{2}BL_{1}})^{2}$.
Meanwhile, suppose $\delta_{k} \leq \xi\rho^{2}d^{3/2} \leq \frac{||\nabla f(\rvx_{k})||}{2(A L_{0} + \sqrt{2}BL_{1}||\nabla f(\rvx_{k})||)d}$, then~\cref{append_l0l1_dynamic_00} can be reformulated as
\begin{align}
    \theta_{k+1} &\leq  \theta_{k} - \frac{(\frac{AL_{0}}{1-\sqrt{2}BL_{1}})^{2}}{8(A L_{0} + \sqrt{2}BL_{1}||\nabla f(\rvx_{k})||)d}\label[ineq]{append_l0l1_condition3_1}
\end{align}
Since $\frac{(\frac{AL_{0}}{1-\sqrt{2}BL_{1}})^{2}}{8(A L_{0} + \sqrt{2}BL_{1}||\nabla f(\rvx_{k})||)d}$ is a monotone decreasing function w.r.t. $||\nabla f(\rvx_{k})||$, then we can conclude that the loss function cannot be indicator of reaching $\epsilon$-first-order stationary points.
However, with an appropriate selection of parameters, the loss function can be minimized. I.e.,
\begin{align*}
      \theta_{k+1} &\leq  \theta_{k} - \frac{(\frac{AL_{0}}{\sqrt{2}BL_{1} - 1})^{2}}{8(A L_{0} + \sqrt{2}BL_{1}\frac{AL_{0}}{\sqrt{2}BL_{1} - 1})d}\\
      \theta_{k+1} &\leq  \theta_{k} - \frac{AL_{0}}{8(2\sqrt{2}BL_{1} - 1)(\sqrt{2}BL_{1} - 1)d}\\
      \theta_{k+1} &\leq  \theta_{0} - (K+1)\frac{AL_{0}}{8(2\sqrt{2}BL_{1} - 1)(\sqrt{2}BL_{1} - 1)d}\\
      \theta_{k} - f^{\star} &\leq  \theta_{0} - f^{\star} - K\frac{AL_{0}}{8(2\sqrt{2}BL_{1} - 1)(\sqrt{2}BL_{1} - 1)d},\\
\end{align*}
which concludes that we need
\begin{align*}
      K\geq \frac{8(2\sqrt{2}BL_{1} - 1)(\sqrt{2}BL_{1} - 1)(f(\rvx_{0}) - f^{\star} - \epsilon)d}{AL_{0}}
\end{align*}
iterations to reach ``local $\epsilon$-optimal point" (local minimum or saddle point).

We summarize the result overall conditions in Table~\ref{append_table_l0l1_summary}.

\bgroup
\setlength\tabcolsep{0.2em}
\begin{table*}[!ht]
\begin{center}
\small
\begin{threeparttable}
    \begin{tabular}{ccc}
    \toprule
         Conditions$^{[b]}$ & requirement over $\rho^{[a]}$ & Query complexity \\
        \cmidrule(r){1-3}
          $L_{1}\leq \frac{1}{\sqrt{2}B}$, $||\nabla f(\rvx)|| \geq \frac{AL_{0}}{1-\sqrt{2}BL_{1}}$ &$\rho\leq \frac{1}{d\sqrt{2\xi\sqrt{d}}}$& $\frac{8d(f(\rvx_{0}) - f^{\star})}{\epsilon}$\\
          $L_{1}\leq \frac{1}{\sqrt{2}B}$, $||\nabla f(\rvx)|| \leq \frac{AL_{0}}{1-\sqrt{2}BL_{1}}$ &$\rho \leq \frac{1}{d}\sqrt{\frac{\epsilon}{2\xi(A L_{0} + \sqrt{2}BL_{1}\epsilon)\sqrt{d}}}$&  $\frac{8AL_{0}d(f(\rvx_{0}) - f^{\star})}{(1 -\sqrt{2}B L_{1})\epsilon^{2}}$\\
           $L_{1}\geq \frac{1}{\sqrt{2}B}$, $||\nabla f(\rvx)|| \leq \frac{AL_{0}}{\sqrt{2}BL_{1} - 1}$ &$\rho \leq \frac{1}{d}\sqrt{\frac{\epsilon}{2\xi(A L_{0} + \sqrt{2}BL_{1}\epsilon)\sqrt{d}}}$& $\frac{8AL_{0}d(f(\rvx_{0}) - f^{\star})(2\sqrt{2}BL_{1} - 1)}{(\sqrt{2}BL_{1} - 1)\epsilon^{2}}$\\
           \cmidrule(r){1-3}
           $L_{1}\geq \frac{1}{\sqrt{2}B}$, $||\nabla f(\rvx)|| \geq \frac{AL_{0}}{\sqrt{2}BL_{1} - 1}$ &$\rho \leq \frac{1}{d}\sqrt{\frac{\epsilon}{2\xi(A L_{0} + \sqrt{2}BL_{1}\epsilon)\sqrt{d}}}$& $\frac{8(2\sqrt{2}BL_{1} - 1)(\sqrt{2}BL_{1} - 1)(f(\rvx_{0}) - f^{\star} - \epsilon)d}{AL_{0}}$ \\
    \bottomrule
    \end{tabular}
    \begin{tablenotes}
        \item $^{[a]}$ $\xi$ is a constant associated with third-order property of $f$, detailed in appendix~\cref{append_l0l1_dy_1}.
        \item $^{[b]}$ For the fourth condition, decreasing loss value instead of gradient norm, detailed in appendix~\cref{append_l0l1_condition3_1}.
    \end{tablenotes}
    \caption{With dynamic step size strategy, the convergence property of $f$ under relaxed smoothness.}
    \label{append_table_l0l1_summary}
\end{threeparttable}
\end{center}
\end{table*}
\egroup

Note that for the specific value of $A$ and $B$, we have $A = 1+e^{c}-\frac{e^{c}-1}{c}, B = \frac{e^{c} - 1}{c}$ and $||\rvx_{k+1} - \rvx_{k}|| = ||\alpha\rvs_{k}|| = \frac{\gamma_{k}}{(AL_{0} + \sqrt{2}BL_{1}\gamma_{k})\sqrt{d}}  \leq \frac{c}{L_{1}}\rightarrow c \geq \frac{1}{B\sqrt{2d}}\rightarrow e^{c} \geq 1 + \frac{1}{\sqrt{2d}}$.
It is easy to see that such $c$ exists, we can safely consider $A = 1.01, B= 1.01$ for simplicity (under large $d$) since $A$ and $B$ are expected to be small values.
\end{proof}

\clearpage
\section{Experiments}
\label{append_experiments}

\subsection{Setup}
\label{append_exp_setup}
For the image tasks, we use the same batch size of 1024, training epoch of 200, and input image sizes 32$\times$32 for MNIST and CIAFR10 datasets. 
For the language tasks, we use the same batch size of 32, training epoch of 100 and 500 for $\{$SST-2, QNLI$\}$ and $\{$MRPC, STS-B, RTE$\}$ respectively. And, we select the MRPC task, which has a relatively small dataset, as our base task to search for the best learning rate. Please refer to Appendix~\ref{append_MRPC_lr_selection} for the learning rate selection.

\textbf{Computer resources} All the experiments can be run on a single NVIDIA RTX A5000 Graphics Card (24G). Each image task in one setting can be completed within 2 hours, using less than 5GB of GPU memory. Similarly, each language task in one setting can be completed within 75 hours, using less than 3GB of GPU memory.

\clearpage
\subsection{Performance of image tasks with 200 training epochs under various learning rates}
\label{append_perf_various_lr}

\begin{figure*}[!ht]
\begin{center}
\subfigure[MeZO with various learning rates under MNIST.]
{
\includegraphics[width=1.0\linewidth]{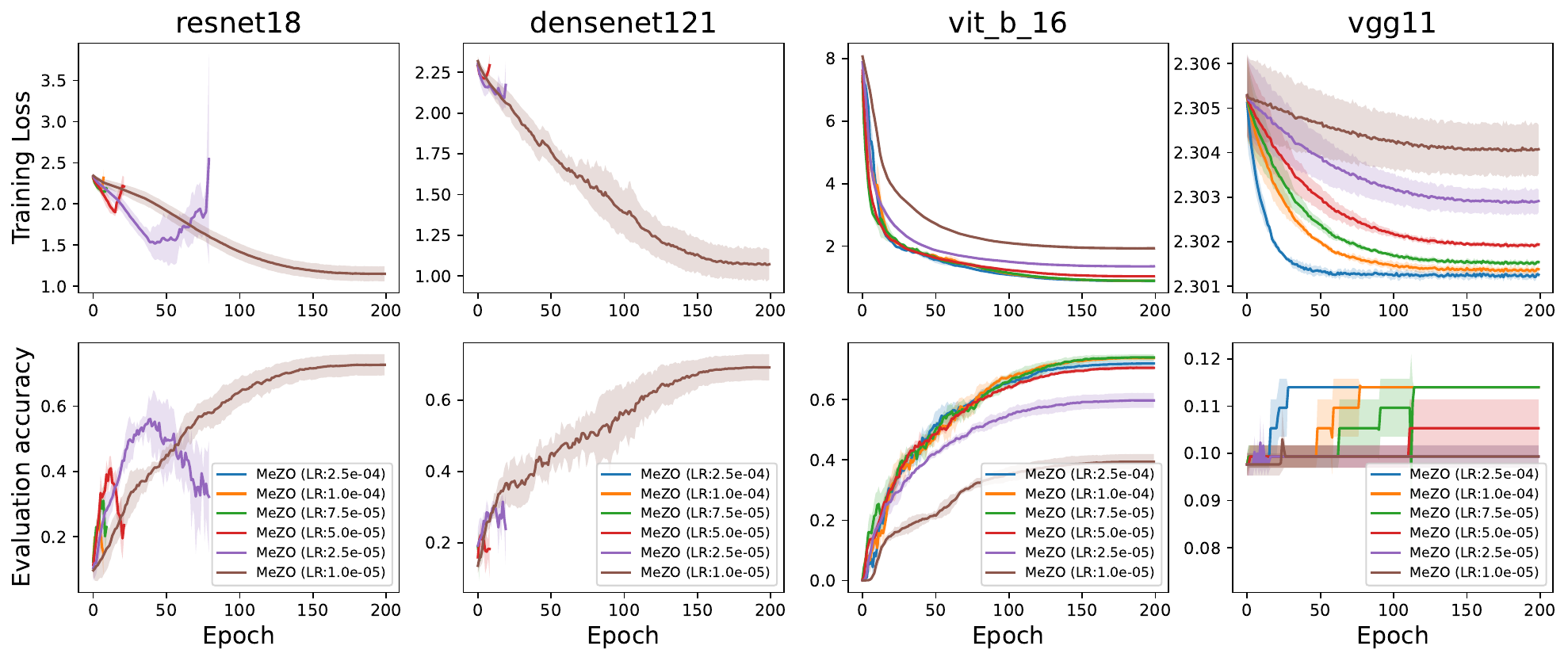}
}
\subfigure[STP with various learning rates under MNIST.]
{
\includegraphics[width=1.0\linewidth]{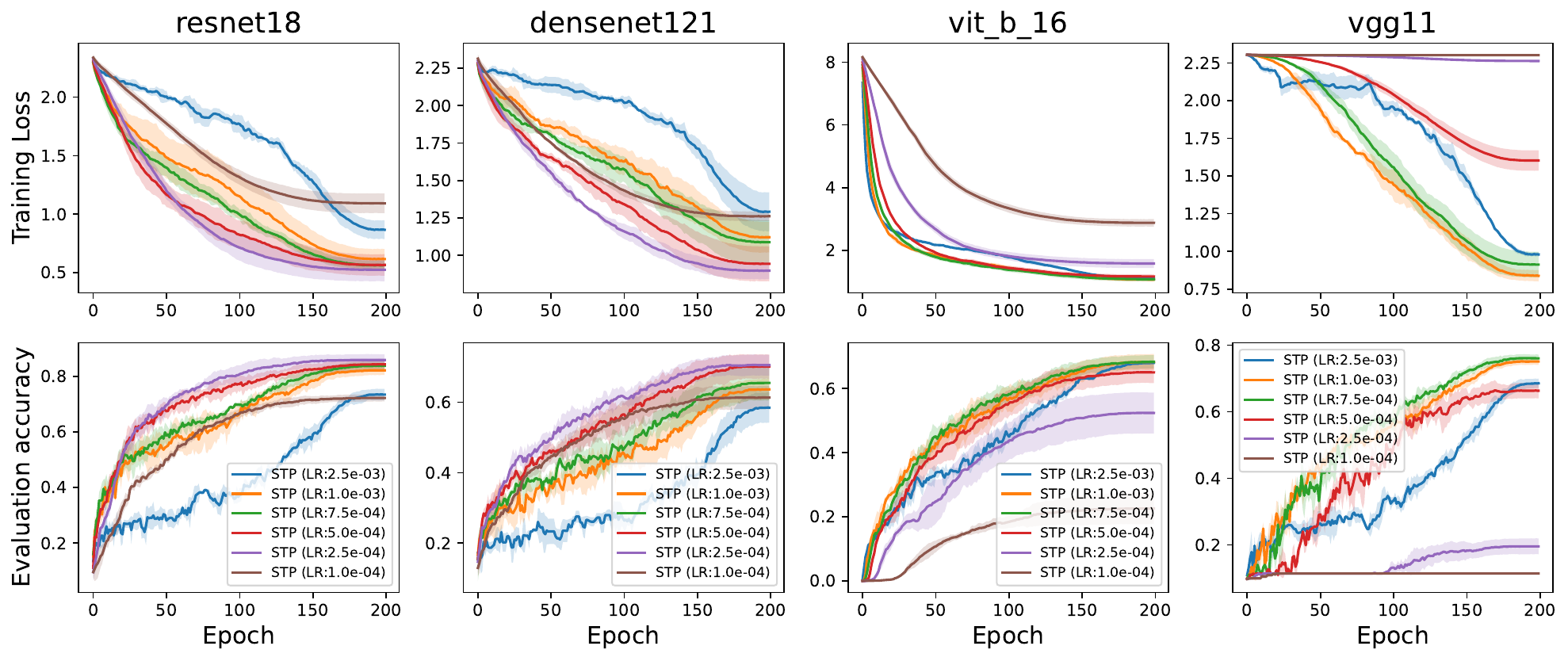}
}
\subfigure[VS2P with various learning rates under MNIST.]
{
\includegraphics[width=1.0\linewidth]{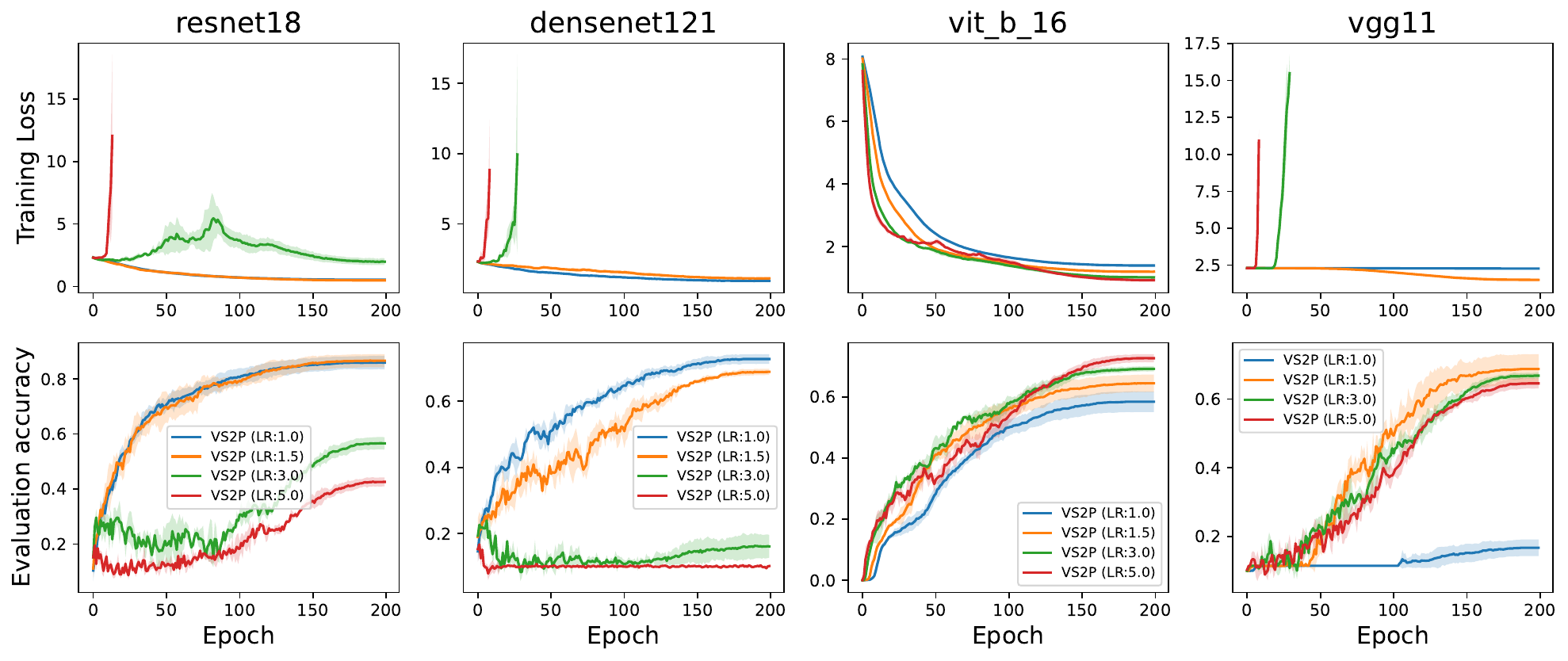}
}
\caption{Fully fine-tuning DenseNet121, ResNet18, ViT-B16, and VGG11 under various learning rates \& datasets MNIST \& 200 training epochs \& 3 different seeds.}
\end{center}
\end{figure*}

\begin{figure*}[!ht]
\begin{center}
\subfigure[MeZO with various learning rates under CIFAR10.]
{
\includegraphics[width=1.0\linewidth]{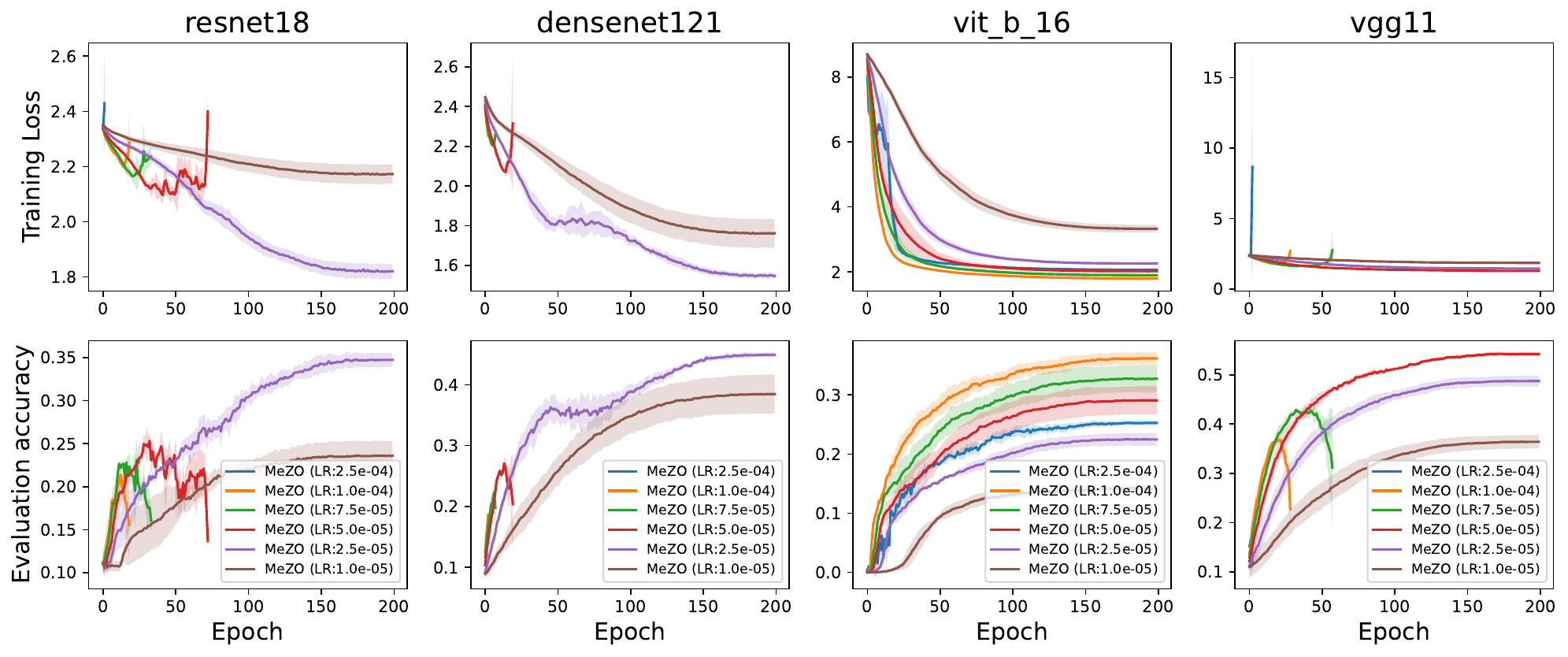}
}
\subfigure[STP with various learning rates under CIFAR10.]
{
\includegraphics[width=1.0\linewidth]{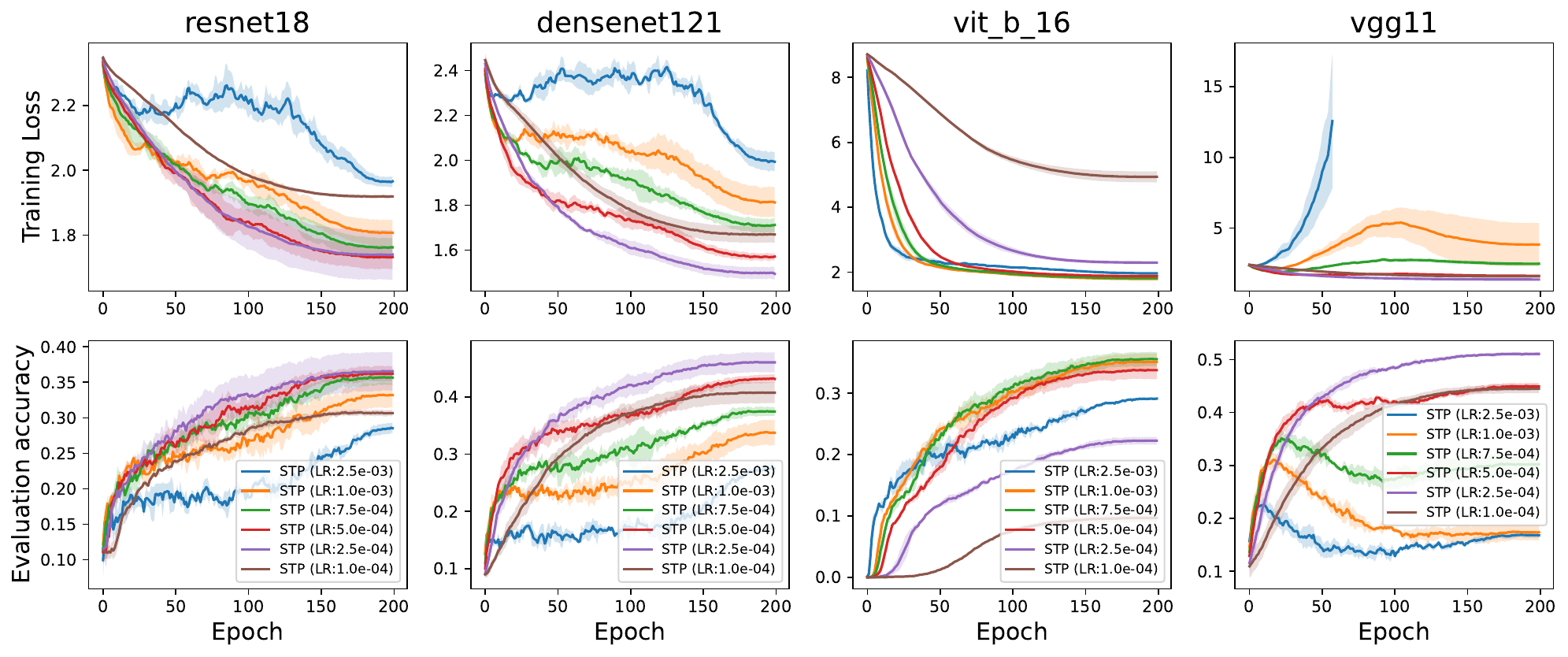}
}
\subfigure[VS2P with various learning rates under CIFAR10.]
{
\includegraphics[width=1.0\linewidth]{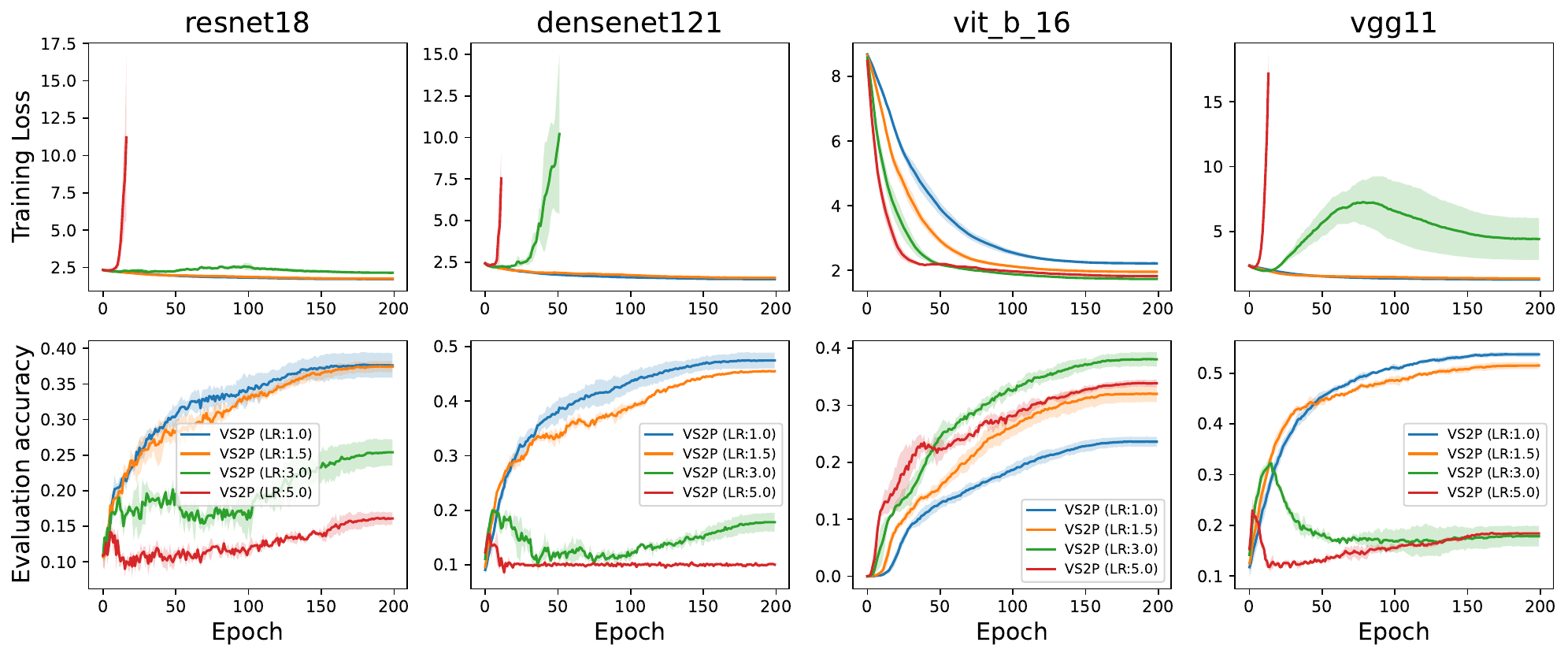}
}
\caption{Fully fine-tuning DenseNet121, ResNet18, ViT-B16, and VGG11 under various learning rates \& datasets CIFAR10 \& 200 training epochs \& 3 different seeds.}
\end{center}
\end{figure*}

\clearpage
\subsection{Performance of image tasks with 100 training epochs under best learning rate}
\label{append_perf_img_task_100_epochs}

The dynamics of the training process including the training loss curves and evaluation accuracy along with varying epochs are summarized in the following figure. 
Accordingly, we summarize the performance of different methods under various learning rates in Appendix~\ref{append_perf_various_lr_100_epoch}.

\begin{figure*}[!ht]
\label{append_fig_img_convergence_and_test_acc_100_epochs}
\begin{center}
\subfigure[Under datasets MNIST.]
{
\includegraphics[width=1.0\linewidth]{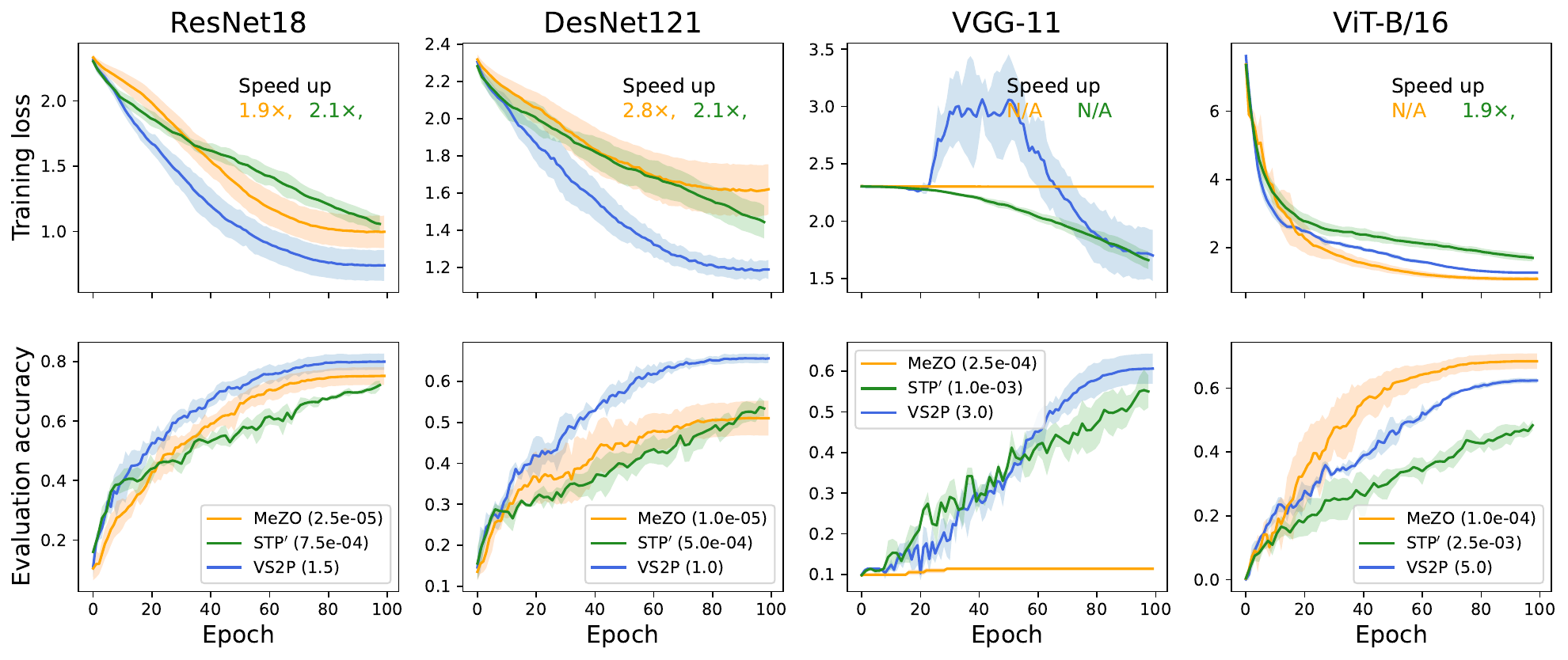}
}
\subfigure[Under datasets CIFAR10.]
{
\includegraphics[width=1.0\linewidth]{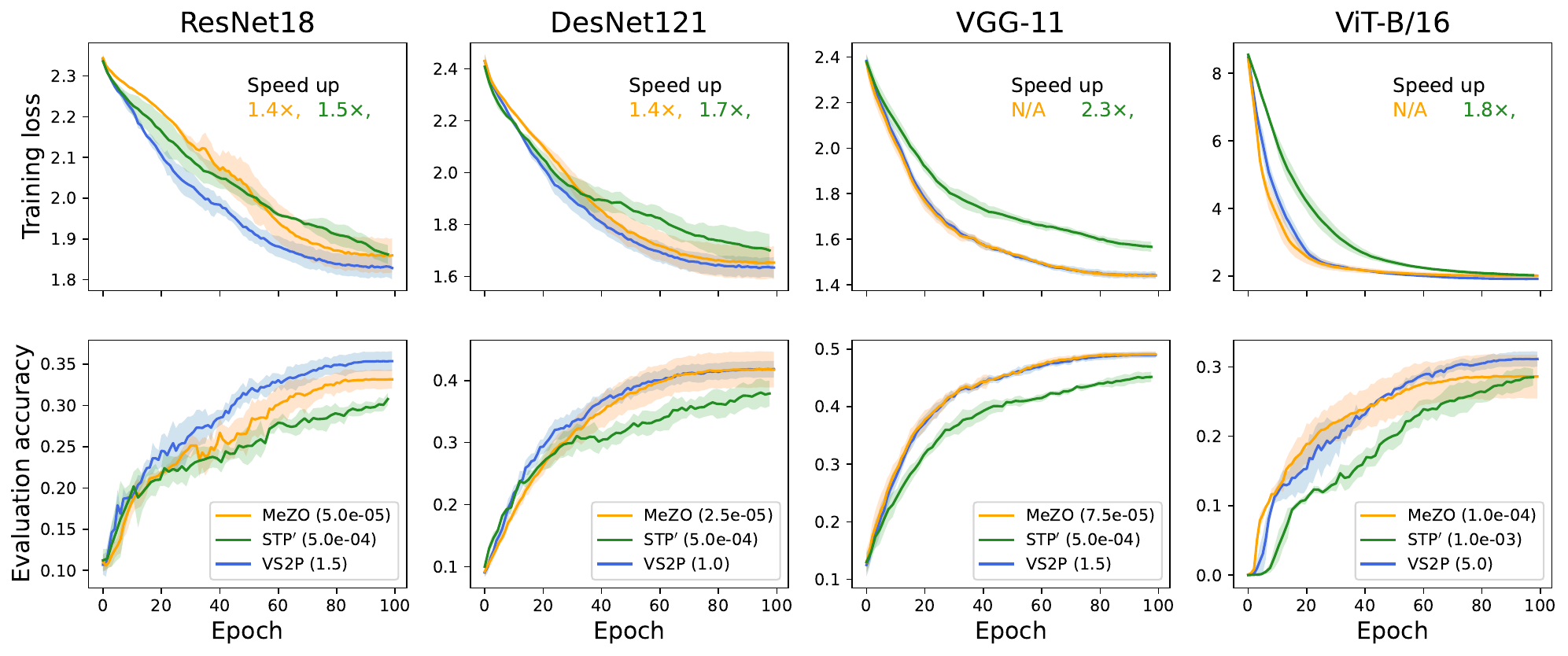}
}
\caption{Fully fine-tuning DenseNet121, ResNet18, ViT-B16, and VGG11 with proposed method and baselines under datasets MNIST and CIFAR10 \& 100 epochs \& 3 different seeds.}
\end{center}
\end{figure*}

\subsection{Performance of image tasks with 100 training epochs under various learning rates}
\label{append_perf_various_lr_100_epoch}

\begin{figure*}[!ht]
\begin{center}
\subfigure[MeZO with various learning rates under MNIST.]
{
\includegraphics[width=1.0\linewidth]{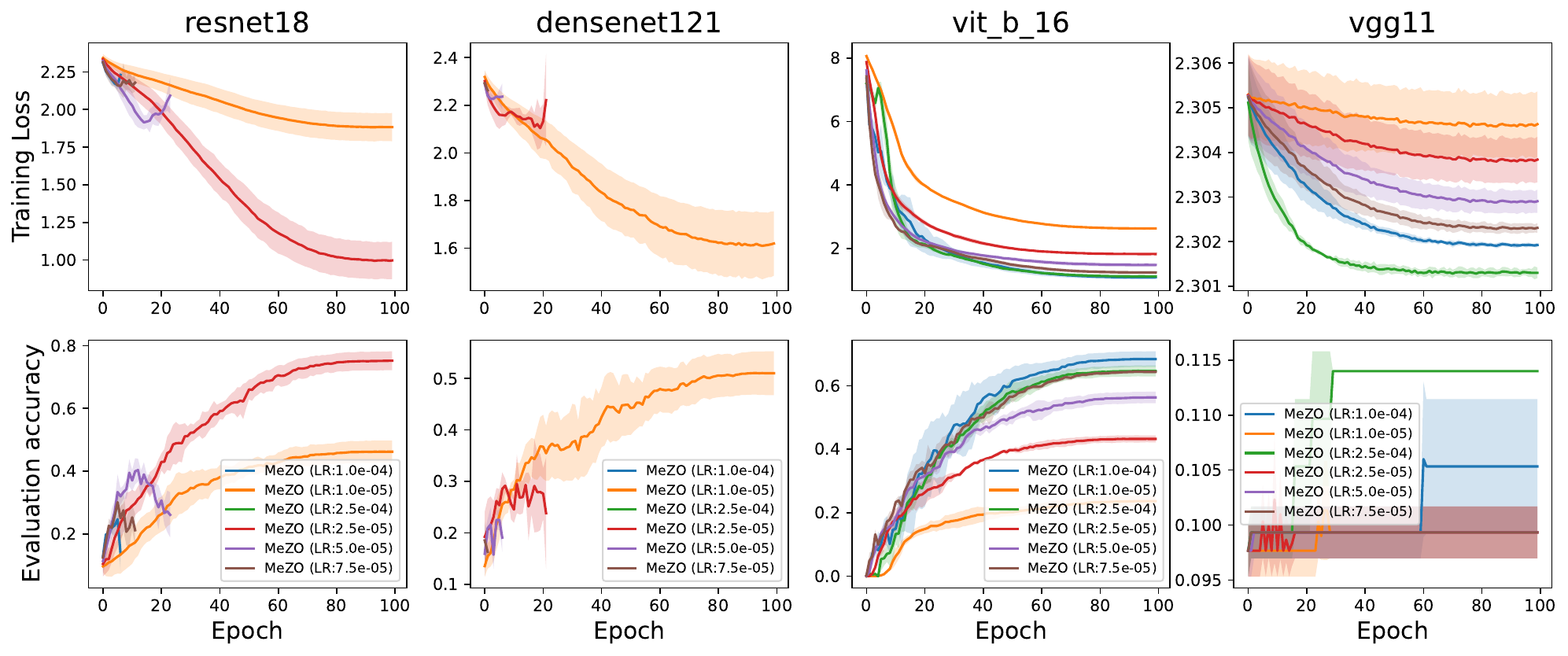}
}
\subfigure[STP with various learning rates under MNIST.]
{
\includegraphics[width=1.0\linewidth]{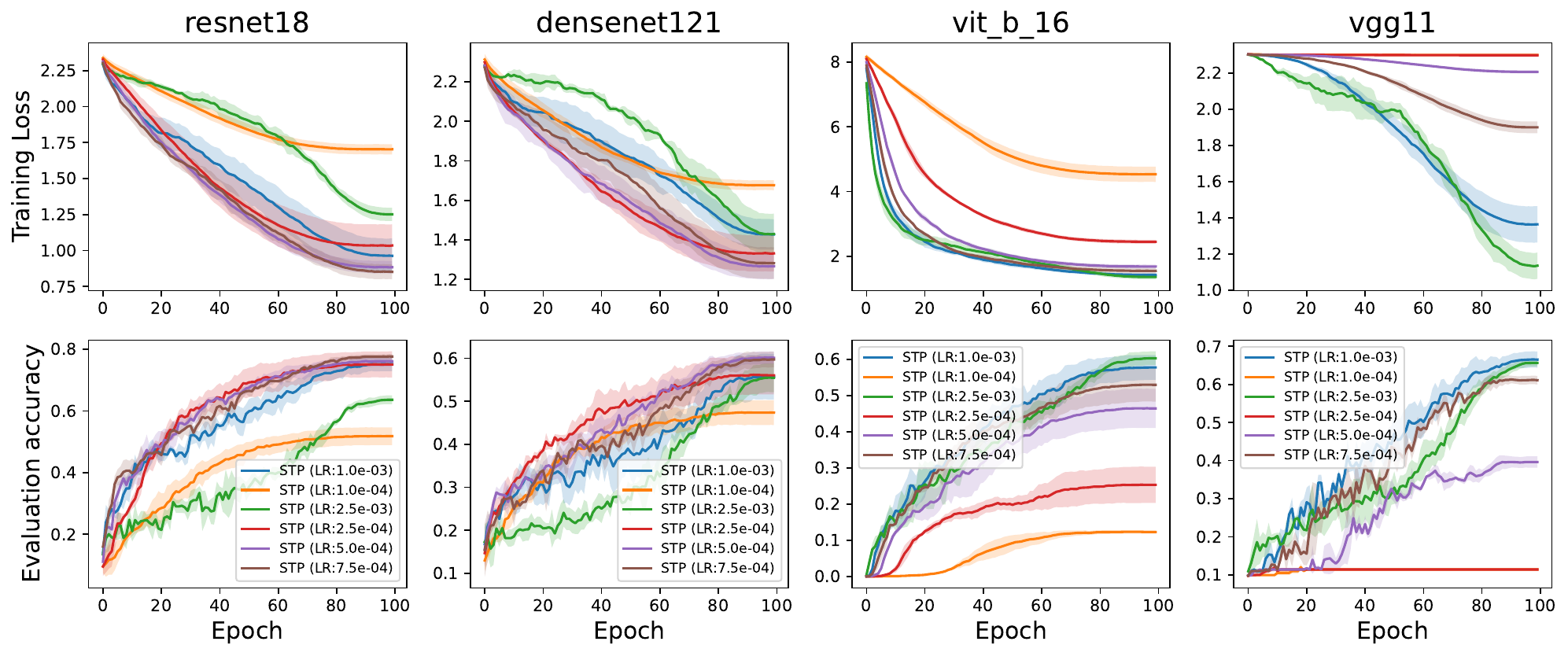}
}
\subfigure[VS2P with various learning rates under MNIST.]
{
\includegraphics[width=1.0\linewidth]{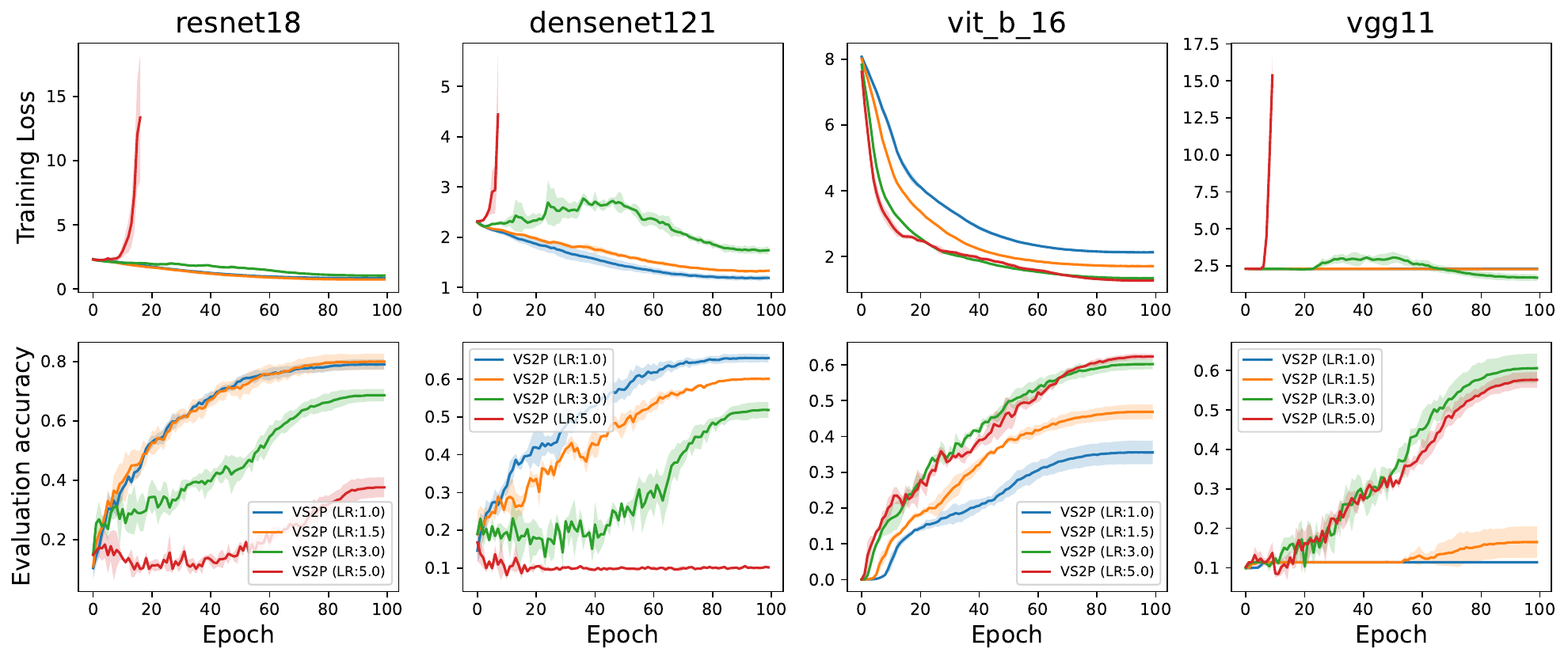}
}
\caption{Fully fine-tuning DenseNet121, ResNet18, ViT-B16, and VGG11 under various learning rates \& datasets MNIST \& 100 training epochs \& 3 different seeds.}
\end{center}
\end{figure*}

\begin{figure*}[!ht]
\begin{center}
\subfigure[MeZO with various learning rates under CIFAR10.]
{
\includegraphics[width=1.0\linewidth]{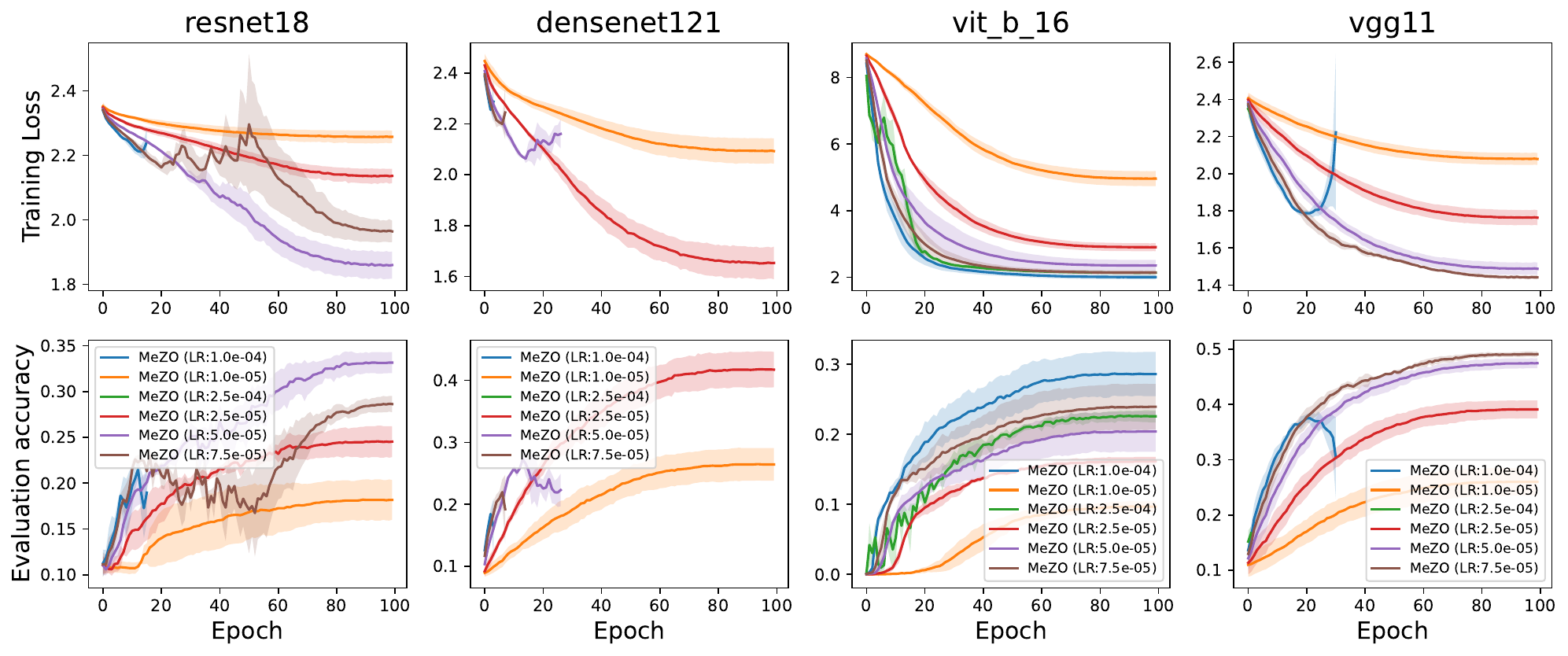}
}
\subfigure[STP with various learning rates under CIFAR10.]
{
\includegraphics[width=1.0\linewidth]{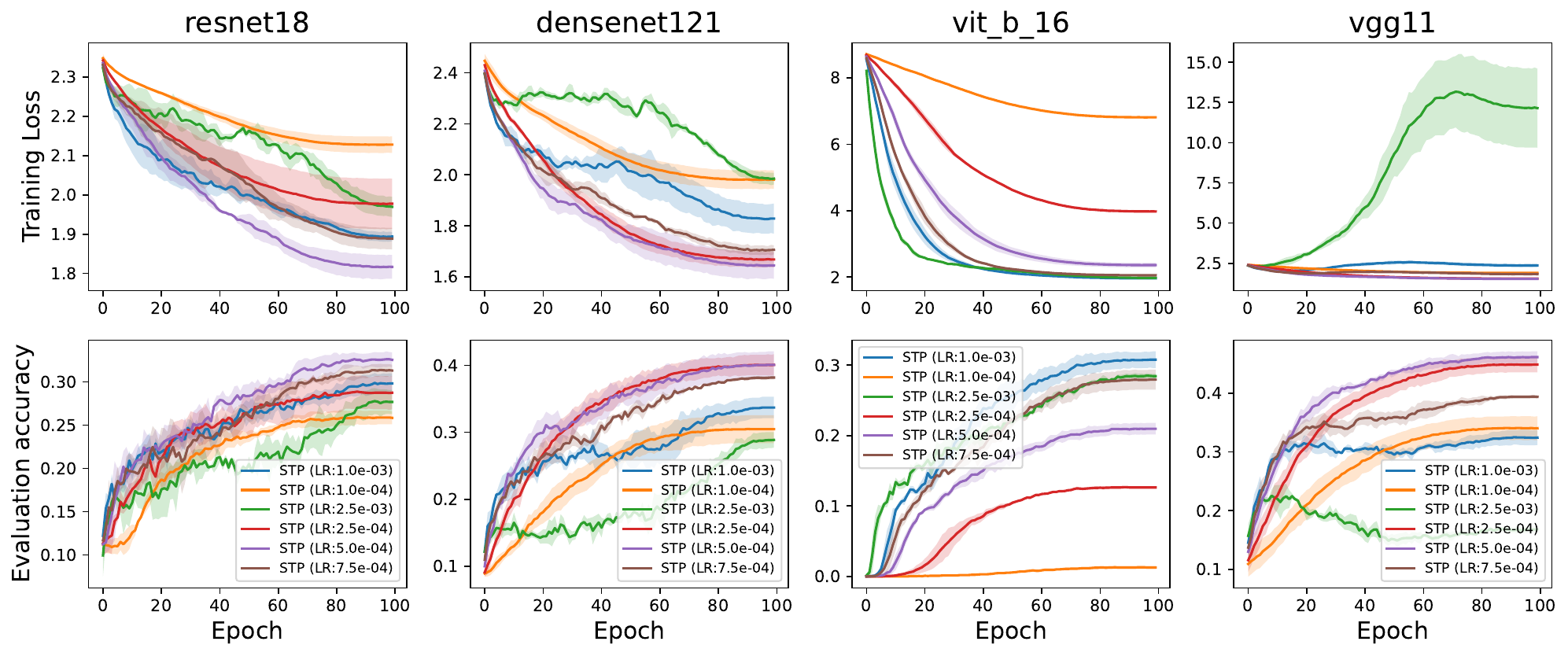}
}
\subfigure[VS2P with various learning rates under CIFAR10.]
{
\includegraphics[width=1.0\linewidth]{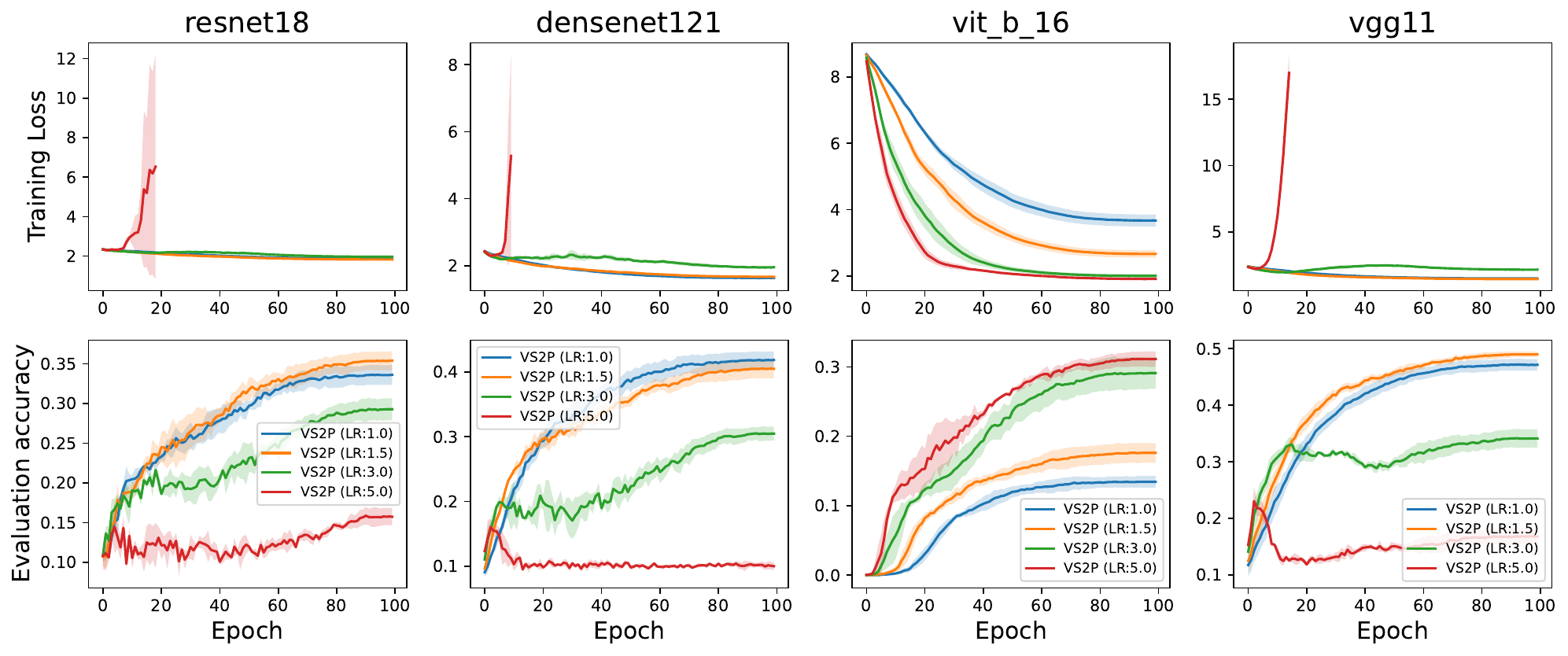}
}
\caption{Fully fine-tuning DenseNet121, ResNet18, ViT-B16, and VGG11 under various learning rates \& datasets CIFAR10 \& 100 training epochs \& 3 different seeds.}
\end{center}
\end{figure*}

\clearpage
\subsection{Performance of MRPC tasks under various learning rates}
\label{append_MRPC_lr_selection}

\begin{figure*}[!ht]
\begin{center}
\subfigure{\includegraphics[width=1.0\linewidth]{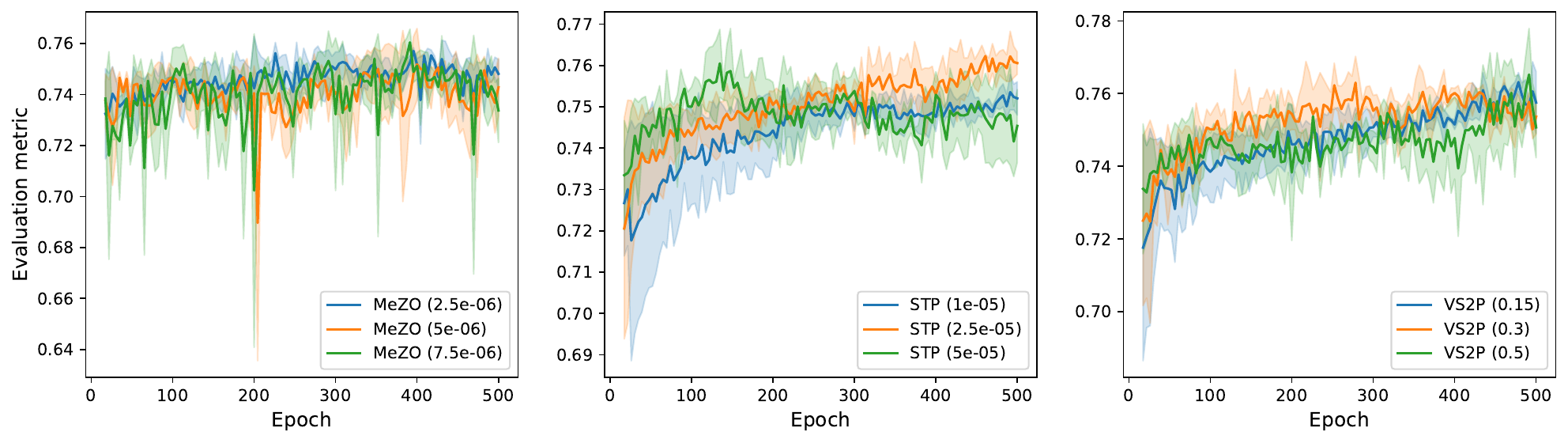}}
\caption{Performance of MRPC tasks employing MeZO, STP, and proposed VS2P methods under various learning rates.}
\end{center}
\end{figure*}

\subsection{Failure case study}
\label{append_failure_study}

\begin{figure*}[!ht]
\begin{center}
\subfigure{\includegraphics[width=1.0\linewidth]{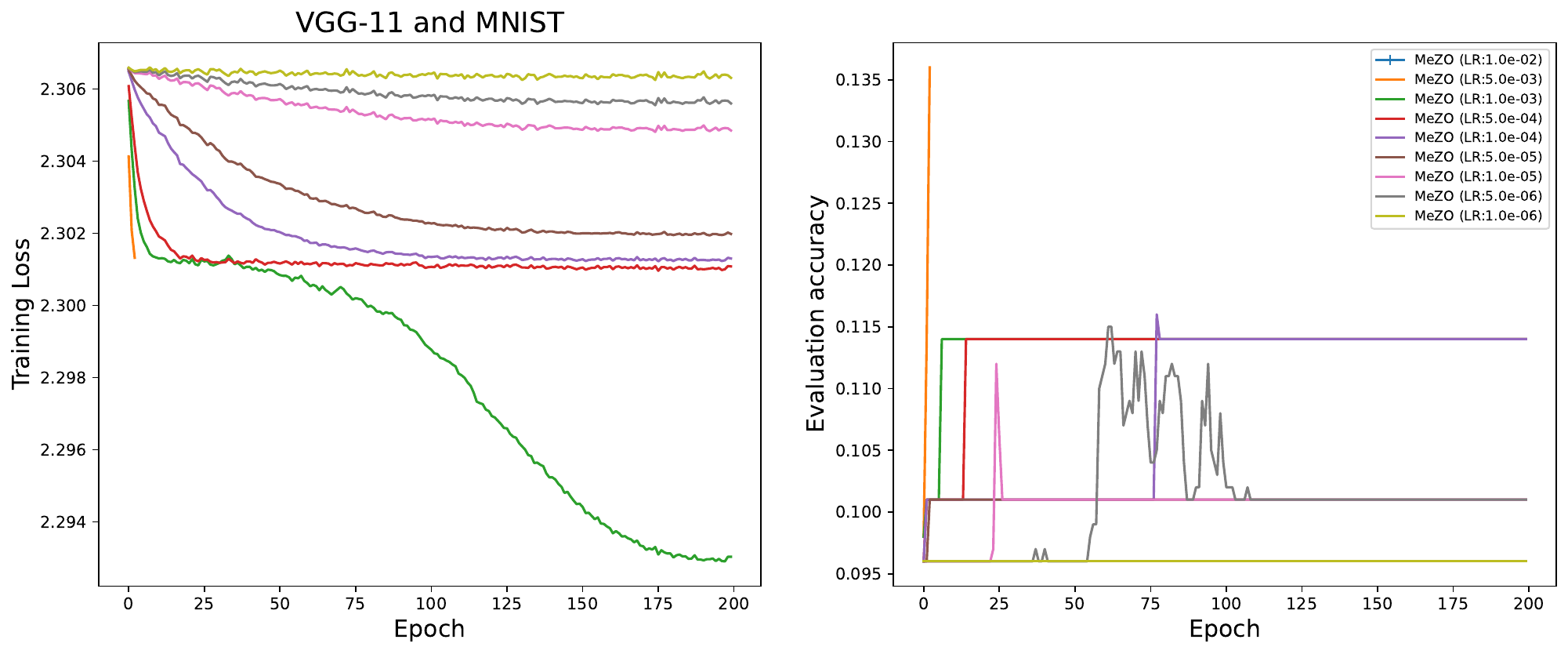}}
\caption{Failure case study. Performance of MeZO method with setting VGG-11 and MNIST dataset under various learning rates.}
\end{center}
\end{figure*}

\section*{Broader impact}
Our work primarily focuses on theoretical and practical developments in zeroth-order optimization methods, which will enable memory-efficient model training of deep model optimization tasks. 
However, we are also aware that the advancements may have broader implications, some of which could potentially have negative social impacts, such as misuse of the method in malicious application developments.

\end{document}